\Crefname{ALC@unique}{Line}{Lines}
\Crefname{ALC@unique}{Line}{Lines}
\definecolor{darkgreen}{RGB}{0,128,0}
\definecolor{darkblue}{RGB}{0,0,128}
\definecolor{darkred}{RGB}{128,0,0}
\DeclareDocumentCommand \prArg{mm}
{(
\IfNoValueTF{#2}{#1}{#1 \mid #2}
)}
\DeclareDocumentCommand \newProbabilityFormat{r<>m}
{
	\DeclareDocumentCommand #1 {e{_}e{^}>{\SplitArgument{1}{|}}r()}
	{
		\IfNoValueTF{##1}
		{
			\IfNoValueTF{##2}
			{#2\prArg##3}
			{#2^{##2}\prArg##3}
		}
		{
			\IfNoValueTF{##2}
			{#2_{##1}\prArg##3}
			{#2_{##1}^{##2}\prArg##3}
		}
	}
}
\DeclareDocumentCommand \fFunction {m} {{#1}}
\DeclareDocumentCommand \fSet {m} {\mathcal{#1}}
\DeclareDocumentCommand \fLimOperator {m} {\mathop{\vphantom{\lim}\mathchoice {\hbox{#1}} {\vcenter{\hbox{#1}}}{#1}{#1}}\displaylimits}
\DeclareDocumentCommand \newScalar{r<>m}
{
	\DeclareDocumentCommand #1 {} {{#2}}
}
\DeclareDocumentCommand \newVector{r<>m}
{
	\DeclareDocumentCommand #1 {} {\boldsymbol{#2}}
}
\DeclareDocumentCommand \newMatrix{r<>m}
{
	\DeclareDocumentCommand #1 {} {\boldsymbol{#2}}
}
\DeclareDocumentCommand \newProbability{r<>m}
{
	\newProbabilityFormat<#1>{#2}
}
\DeclareDocumentCommand \newFunction{r<>m}
{
	\DeclareDocumentCommand #1 {} {\fFunction{#2}}
}
\DeclareDocumentCommand \newSet{r<>m}
{
	\DeclareDocumentCommand #1 {} {\fSet{#2}}
}
\DeclareDocumentCommand \argmin {} {\fLimOperator{argmin}}
\DeclareDocumentCommand \argmax {} {\fLimOperator{argmax}}
\newcommand{\thickhline}{%
    \noalign {\ifnum 0=`}\fi \hrule height 1pt
    \futurelet \reserved@a \@xhline
}
\DeclareMathOperator{\expect}{\mathbb{E}}
\DeclareMathOperator{\erf}{erf}
\DeclareDocumentCommand \tAdd { m } {#1}
\DeclareDocumentCommand \uAdd { m } {#1}
\DeclareDocumentCommand \thmTitle{m} {\textbf{#1}}
\DeclareDocumentCommand \integ{m}{\!\int\!#1\,}
\DeclareDocumentCommand \eps{} {\varepsilon}
\DeclareDocumentCommand \entropy{m} {S\!\left[\,#1\,\right]}
\DeclareDocumentCommand \KL{mm} {D\!\left[\,#1\,\middle\|\,#2\,\right]}
\DeclareDocumentCommand \info{mmm} {\mathbb{I}_{#1}\!\left[\,#2\,\middle\|\,#3\,\right]}
\DeclareDocumentCommand \submitBlank{}{}
\def\cleartheorem#1{%
    \expandafter\let\csname#1\endcsname\relax
    \expandafter\let\csname c@#1\endcsname\relax
}
\newcounter{corollarycount}
\newtheorem{corollary}[corollarycount]{Corollary}
\Crefname{corollarycount}{Corollary}{Corollaries}
\newcounter{definitioncount}
\newtheorem{definition}[definitioncount]{Definition}
\Crefname{definitioncount}{Definition}{Definitions}
\begin{document}

\title{Parsimonious Inference}

\author{\name Jed A. Duersch \email jaduers@sandia.gov \\
       \name Thomas A. Catanach \email tacatan@sandia.gov \\
       \addr Sandia National Laboratories\\
       Livermore, CA 94550, United States}

\maketitle


\begin{abstract}
Bayesian inference provides a uniquely rigorous approach to obtain principled justification for uncertainty in predictions,
yet it is difficult to articulate suitably general prior belief in the machine learning context,
where computational architectures are pure abstractions subject to frequent modifications by practitioners attempting to improve results.
Parsimonious inference is an information-theoretic formulation of inference over arbitrary architectures that formalizes Occam's Razor; we prefer simple and sufficient explanations.
Our universal hyperprior assigns plausibility to prior descriptions, encoded as sequences of symbols, by expanding on the core relationships between program length, Kolmogorov complexity, and Solomonoff's algorithmic probability.
We then cast learning as information minimization over our composite change in belief when an architecture is specified, training data are observed, and model parameters are inferred.
By distinguishing model complexity from prediction information, our framework also quantifies the phenomenon of memorization. \vspace{2mm}\\
Although our theory is general, it is most critical when datasets are limited, e.g.~small or skewed.
We develop novel algorithms for polynomial regression and random forests that are suitable for such data, as demonstrated by our experiments.
Our approaches combine efficient encodings with prudent sampling strategies to construct predictive ensembles without cross-validation,
thus addressing a fundamental challenge in how to efficiently obtain predictions from data.
\end{abstract}

\begin{keywords}
Bayesian inference, information, prior belief, Kolmogorov complexity, Solomonoff probability
\end{keywords}


\section{Introduction}
\label{sec:introduction}

We began this investigation desiring to understand the relationship between prior belief and the resulting uncertainty in predictions obtained from inference
in the hope that new insights would provide a sound basis to improve prediction credibility in machine learning.
The mathematical and epistemological foundations of rational belief, from which the laws of probability and Bayesian inference are derived as an extended logic from binary propositional logic \citep{Cox1946},
lead us to assert the central role of Bayesian inference in obtaining rigorous justification for uncertainty in predictions.
Although this foundation of reason holds generally, it is critical when we need to learn robust predictions from limited datasets.
Yet, applying Bayesian inference within the machine learning context requires addressing a fundamental challenge: inference requires prior belief.
When the amount of evidence contained within a dataset regarding a phenomenon of interest is extremely limited, specifying prior belief is not merely an inconvenience;
it is the dominant source of uncertainty in predictions.
Examples of such data limitations include having few observations, noisy measurements, skewed or highly imbalanced labels of interest, or even a degree of mislabeling in the data.    

When predictive models integrate well-understood physical principles, they are often accompanied by physically plausible parameter ranges that provide a strong basis for prior belief.
Likewise, canonical priors are acceptable for simple approximations with relatively few unconstrained parameters in comparison to the size of the dataset intended for inference.
\citet{Kass1996} give a thorough survey of related work.
In contrast, the machine learning paradigm seeks to instrument arbitrary algorithms with high parameter dimensionality.
A typical architecture may have tens of thousands, or perhaps millions, of free parameters.
In this setting, the sensitivity of predictions to an arbitrary choice of prior belief may be unacceptable for applications of consequence \citep{Owhadi2015}.


\subsection{Our Contributions}

Expanding on the work of \citet{Solomonoff1964a,Solomonoff1964b,Solomonoff2009}, \citet{Kolmogorov1965}, \citet{Rissanen1983,Rissanen1984}, and \citet{Hutter2007},
we develop a theoretical framework that assigns plausibility to arbitrary inference architectures.
Just as Solomonoff derives algorithmic probability from program length, \tAdd{the minimum} number of bits needed to encode a program for a specified \tAdd{Universal Turing Machine (UTM)},
we show how a modest generalization yields a universal hyperprior over symbolic encodings of ordinary priors.
We may regard an ordinary prior, that which is typically used in Bayesian inference, as a restricted state of belief from a general universe of potential explanatory models.
Within our framework, every choice of computational architecture, and associated prior over model parameters, is just a restriction of prior belief.
Our hyperprior provides a means to measure and control the complexity of such choices.

We show how our theory of information \citep{Duersch2020}, \Cref{thm:info}, allows us to derive a training objective from the information that is created when we select a prior representation, observe the training data,
and either infer the posterior distribution or construct a variational approximation of it.
\citet{Zhang2018} provide a thorough survey \tAdd{of recent work on variational inference}.
Our main result, \Cref{thm:parsimony_optimization},
clarifies how we may understand learning as an information optimization problem.
Our parsimony objective separates into three components:
\begin{itemize}
\item Encoding information contained within a symbolic description of prior belief;
\item Model information gained through inference using evidence;
\item Predictive information gained regarding the observed labels from plausible models.
\end{itemize}
In our derivation, the first two terms appear with negative signs and the third with a positive sign, revealing how our theory suppresses complexity as an intrinsic tradeoff against increased agreement with observed labels.
The first component guards against excessive complexity in our description of prior belief and the second guards against priors that are poorly suited to our data.
In contrast, the third component promotes agreement between resulting predictions and the data.
\tAdd{The main distinction between the second and third components is that model information is measured in the space of explanations, whereas predictive information measures the result of applying plausible models to our data.}

\uAdd{We review work on universal priors over integers, corresponding binary representations, and show how a simple integer encoding approximates the scaling invariance of Jeffreys prior.} 
We then demonstrate this theory with two learning prototypes.

Our first algorithm casts polynomial regression within this framework, predicting a distribution over continuous outcomes from a continuous input.
By setting the maximum polynomial degree to be much higher than the data merits, standard machine learning training strategies are susceptible to memorization,
as we demonstrate by applying gradient based training with leave-one-out cross-validation.
In contrast, our prototype discovers much simpler models from the same high-degree basis.
Moreover, when we aggregate predictions over an ensemble of polynomial representations,
our prototype demonstrates the natural increase in uncertainty we intuitively associate with extrapolation.

Our second algorithm samples ensembles of decision trees that are constructed using the parsimonious inference objective.
These models aim to predict discrete labels through a sequence of partitions on continuous feature coordinates.
Our random forest prototype demonstrates the ability to learn credible prediction uncertainty from extremely small and heavily skewed datasets,
which we contrast with a standard decision tree model and bootstrap aggregation.
Both of these algorithms achieve superior prediction uncertainty through Bayesian inference from
prior belief that is derived to both quantify and naturally suppress complexity over arbitrary explanations. 

\tAdd{Although our basic hyperprior is subject to a choice of interpreter---which need not be Turing-complete, but must transform valid codes into coherent probability distributions over predictive models---we go on to show that to be consistent with this theory, there exists a unique hyperprior over an ensemble of Turing-complete interpreters, \Cref{thm:utm_ensemble}.}


\subsection{Organization}

\Cref{sec:background} begins with a discussion and illustration of the severe inadequacies of traditional machine learning training approaches that depend on cross-validation.  
We then briefly review the critical connections between scientific principles, rational belief, and Bayesian inference,
which provide a sound theory to obtain rigorously justified uncertainty in predictions.
When placed in the machine learning context, however, we explore how principled justification for prior belief over abstract models, as well as our unavoidable disregard for an infinite number of alternative models, remains a critical challenge.
Further, we summarize how our theory of information is derived to satisfy key properties that allow us to relate the various forms of complexity that follow in the parsimonious inference objective.

\Cref{sec:complexity} continues with our main contributions, including a discussion of generalized description length, a coherent complexity hyperprior, and the principles of minimum information and maximum entropy.
These notions culminate in the parsimonious inference objective, providing a suitable framework to understand and control model complexity over arbitrary learning architectures.
We also show how this objective allows us to quantify memorization.
Our theory allows us to apply these concepts within a wide variety of approaches to solve learning problems, including variational inference techniques.

\Cref{sec:implementation} examines implementation details within our prototype algorithms, including efficient encodings and training strategies for polynomial regression and decision trees.

\Cref{sec:discussion} concludes with a discussion of \tAdd{how we may consistently compare multiple interpreters}, a pathway to frame and address computability \`a priori, our theory's relationship to other work, and a summary of our findings.


\section{Background}
\label{sec:background}

In order to clarify how we may improve trust in machine learning predictions, we must begin with the origin of trust in science.
\tAdd{The epistemological foundation of the scientific method shares a fundamental connection with Bayesian inference and determines how we may optimally account for evidence to learn plausible explanations.
Bayesian theory alone, however, does not provide a complete learning framework when we employ high-parameter families, such as most machine learning architectures.
Thus, we also review Solomonoff's and Kolmogorov's notions of complexity as a means to promote simplicity in learned models.
To motivate the need for this discussion, we begin by illustrating the severe deficiencies of standard machine learning training practices when they are applied to small datasets.}


\subsection{Memorization}

The term \textit{memorization} is often conflated with \textit{overtraining}, but we distinguish these terms as follows.
Overtraining is characterized by degradation in prediction quality on unseen data that occurs after an initial stage of improvements.
In contrast, memorization refers more generally to any predictive algorithm that exhibits unjustifiable confidence, or low prediction uncertainty, in the training dataset labels that were used to adjust model parameters.
\Cref{sec:info_min} provides rigorous analysis to justify this view.
Conflating these terms leads to an incorrect picture of the problem; to avoid memorization, we must merely halt training at the correct moment.

Machine learning algorithms are typically trained using some variation of stochastic gradient descent \citep{Robbins1951}.
When applied to overparameterized models, traditional optimization strategies are subject to overtraining.
Cross-validation \citep{Allen1974} attempts to prevent overtraining by monitoring predictions on a holdout dataset, but we show how this method still fails to prevent memorization on small datasets.
The same strategy is also used to tune hyperparameters, such as such as regularization weights and learning-rate schedules.

The obvious difficulty presented by cross-validation is the inherent tradeoff between using as much data as possible to train parameters, but also having a reliable estimator for prediction quality.
For limited data, standard practices apply some form of k-fold cross-validation \citep{Hastie2009}.
One forms k distinct partitions of the dataset, trains k models respectively, and aggregates predictions by averaging.
Leave-one-out cross-validation uses the same number of partitions as datapoints.
Each partition reserves only one observation to estimate the best model over each training trajectory.

\begin{figure}[h!]
	\centering
	\includegraphics[width=0.825\textwidth]{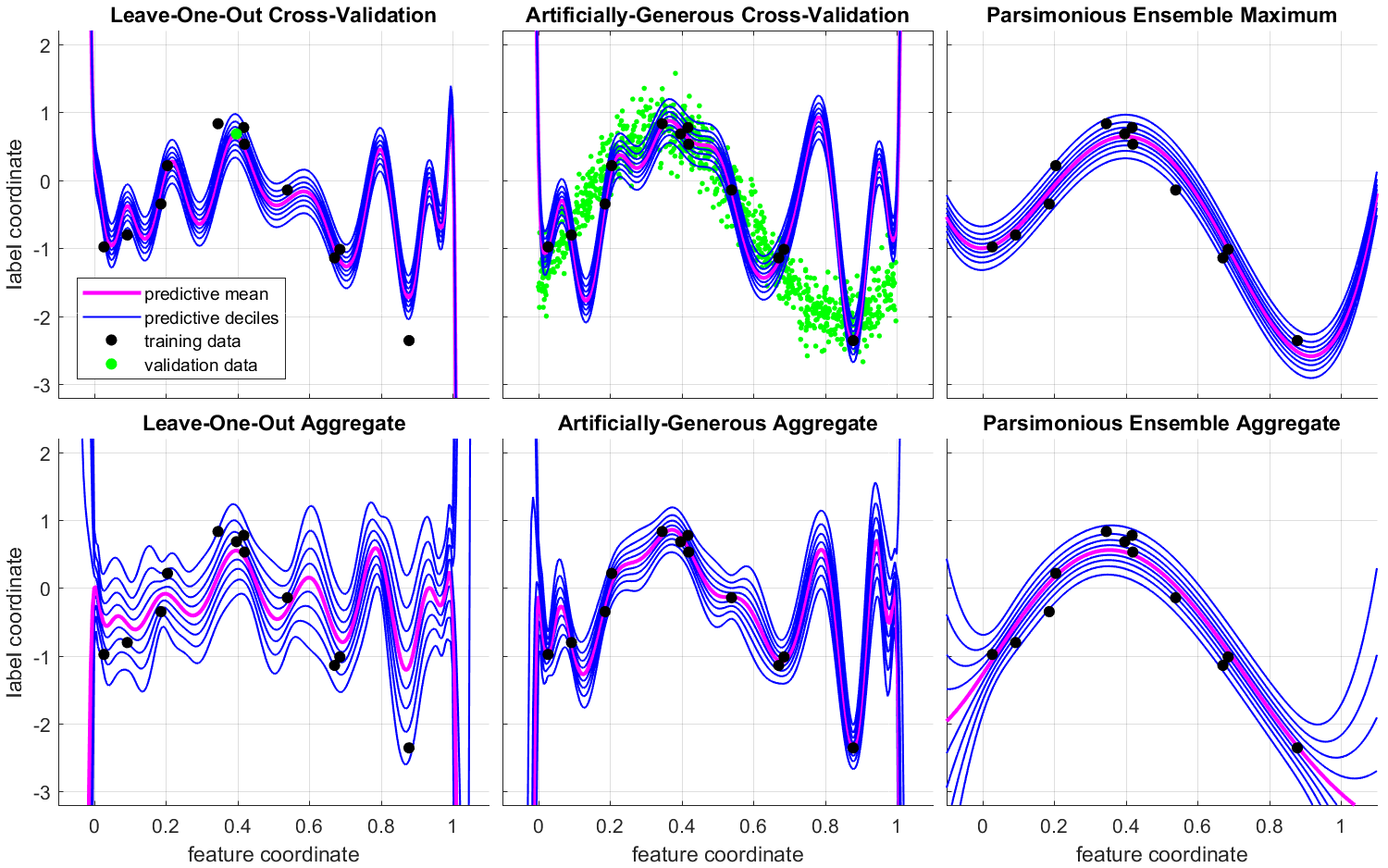}
	\caption{\small
	Illustration of standard training shortcomings.
	Top-left: optimum obtained from holding out the green point.
	Bottom-left: mean predictions over all single-holdout optima.
	Top-middle: idealized training on all original data and 1000 extra validation points from ground truth.
	Bottom-middle: mean predictions over 12 random starts, $\mathcal{N}(\theta_i \mid \mu=0,\sigma = 0.2)$.
	Removing the tradeoff between holdout and training data does not prevent complexities in predictions.
	Top-right: optimal model discovered using our theoretical framework and prototype algorithm.
	Bottom-right: our aggregate accounts for many plausible models, improving robustness and demonstrating natural extrapolation uncertainty.
	}
  \label{fig:deficiencies}
\end{figure}

\Cref{fig:deficiencies} demonstrates these techniques using polynomial regression, fitting 20th degree polynomials with only 12 points.
\tAdd{Retaining more polynomial coefficients than training points allows us to observe how standard training fails when data are limited.}
The top-left shows an example of a single model trained by holding out one point for validation, shown in green.
The bottom-left shows average predictions over 12 such models.
Yet, suppose we could train with all 12 points while remaining highly confident that we will halt training at the correct moment.
This ideal is demonstrated as a thought experiment in the middle column of \Cref{fig:deficiencies} by sampling 1000 extra data from the \textit{generative process}, the ground truth mechanism that creates observations.
We see that eliminating the tradeoff between training and validation would not prevent artifacts from developing that confidently hew to scant observations, memorization.

\tAdd{Typically, one would also use cross-validation to tune the optimal polynomial degree, which would certainly constrain complexity somewhat.
The purpose of this experiment, however, is to show that na\"ive training may never even explore low-complexity models, especially in high dimensions.
For most high-parameter families, such as neural networks, there is no feasible hierarchy of bases that would be analogous to limiting the polynomial degree.
For example, if we wished to constrain model parameters to a fixed sparsity pattern, the number of patterns to test would grow exponentially in the number of nonzero elements.}

This experiment demonstrates how neither of the competing cross-validation objectives address the core problem with learning from limited data.
Memorization is often framed in terms of a bias-variance tradeoff;
predictions should avoid fluctuating rapidly, but also remain flexible enough to extract predictive patterns.
In our theoretical framework, however, memorization is more comprehensively and rigorously understood as unparsimonious model complexity, i.e.~increases in model information that are not justified by only small improvements to training predictions.

Regularization strategies attempt to address this heuristically by penalizing excessive freedom in learning parameters, for example attaching an $\ell_1$ or $\ell_2$ norm to the training objective.
While many of these approaches can be equivalently cast as choices of prior belief, they lack \tAdd{a unifying principle that would illuminate and resolve choices of regularization shape and weight.
One must, again, resort to hyperparameter tuning via cross-validation, thus failing to address the core challenge: to efficiently learn from limited data.}


\subsection{Scientific Reasoning and Bayesian Inference}

In order to reiterate the concrete relationship between Bayesian inference and scientific reasoning,
we review the epistemological foundations of reason at the center of the scientific method.
These foundations bear decisive consequences regarding the valid forms of analysis we may pursue in order to obtain rational predictions.
At its core, the scientific method relies on coherent mathematical models of observable phenomena that have been informed over centuries of physical measurements.
Within the field of epistemology, this is the naturalist view of rational belief \citep{Brandt1985}.
It holds that validity is ultimately derived from consistency, which can be understood in three key components:
\begin{enumerate}
\item Rational beliefs must be logical, avoiding internal contradictions;
\item Rational beliefs must be empirical, accounting for all available evidence;
\item Rational beliefs must be predictive, continually reassessing validity by how well predictions agree with new observations.
\end{enumerate}
The third point is really nothing more than a restatement of the second point, placing emphasis on the evolving nature of rational beliefs as new data become available.
The critical significance of the first point is that it provides a path to elevate the second and third points to a rigorous extended logic: Bayesian inference.

Building on the rich body of work by many scholars---including \citet[original 1926]{Ramsey2016}, \citet{DeFinetti1937}, and \citet[original 1939]{Jeffreys1998}---\citet{Cox1946} shows that for a mathematical framework analyzing degrees of truth, belief as an extended logic,
to be consistent with binary propositional logic, that formalism must satisfy the laws of probability:
\begin{enumerate}
\item Probability is nonnegative.
\item Only impossibility has probability zero.
\item Only certainty has maximum probability, normalized to one.
\item To revise the degree of credibility we assign to a model upon reviewing empirical evidence, we must apply Bayes' theorem.
\end{enumerate}
Consequently, the Bayesian paradigm provides a uniquely rigorous approach to quantify uncertainty in predictions derived through inductive reasoning.
Therefore, the only logically correct path to quantify and suppress memorization in learning must be cast within the Bayesian perspective.

Analysis proceeds with a probability distribution called the prior $\pP(\vTheta)$ that quantifies our lack of information, or initial uncertainty, in plausible explanatory models.
Here, $\vTheta$ is any specific parameter state within a model class, or computational architecture.
When we need to emphasize the prior's dependence on a model class, as well as the shape of the parameter distribution within that class, we will write the prior as $\pP(\vTheta \mid \vPsi)$,
where a description $\vPsi$, or hyperparameter sequence, provides such details.
We will examine how $\vPsi$ plays a key role regarding model complexity in detail in \Cref{sec:complexity}.
The empirical data are expressed as a set of ordered pairs $\sData=\{ (\vX_i, \vY_i) \mid i \in [n] \}$ that have been sampled from the generative process $\pG(\vX, \vY)$.
Features $\vX_i$ are used to predict labels $\vY_i$ from the architecture paired with $\vTheta$.
We write the predicted distribution over all potential labels as $\pP(\vY_i \mid \vX_i, \vTheta)$.

If the ordered pairs in $\sData$ represent independent samples from the underlying process,
the likelihood is evaluated as $\pP(\sData \mid \vTheta) = \prod_{i\in [n]} \pP(\vY_i \mid \vX_i, \vTheta)$, which expresses the probability of observing $\sData$ if a hypothetical explanation $\vTheta$ held.
Then, we update our beliefs according to Bayes' theorem.
In our picture, we hold that having $\vTheta$ alone is sufficient to evaluate predictions.
When we explicate the role of prior descriptions $\vPsi$, that means inference can be written as
\submitBlank
\begin{align*}
\pP(\vTheta \mid \sData, \vPsi) = \frac{\pP(\sData \mid \vTheta) \pP(\vTheta \mid \vPsi)}{\pP(\sData \mid \vPsi)}
\quad\text{where}\quad
\pP(\sData \mid \vPsi) =  \integ{d\vTheta} \pP(\sData \mid \vTheta) \pP(\vTheta \mid \vPsi)
\end{align*}
is the model-class evidence.
If we have a hyperprior $\pP(\vPsi)$ over potential descriptions, we can also infer the hyperposterior
\submitBlank
\begin{align*}
\pP(\vPsi \mid \sData) = \frac{\pP(\sData \mid \vPsi) \pP(\vPsi)}{\pP(\sData)}
\quad\text{where}\quad
\pP(\sData) =  \integ{d\vPsi} \pP(\sData \mid \vPsi) \pP(\vPsi).
\end{align*}

The central point of inference is that it does not attempt to identify a single explanation matching the data, as with stochastic gradient optimization and cross-validation.
Rather, inference naturally adheres to the Epicurean principle---we should retain multiple explanations according to their respective degrees of plausibility---within a coherent mathematical framework.
\tAdd{As a distribution, the posterior is meaningful in a way that a single model is not; it allows us to update our beliefs consistently as new evidence emerges.}
We obtain rational predictions by evaluating the posterior predictive integral, or even the hyperposterior predictive integral, respectively constructed as
\submitBlank
\begin{align*}
& \pP(\vY \mid \vX, \sData, \vPsi) = \integ{d\vTheta} \pP(\vY \mid \vX, \vTheta) \pP(\vTheta \mid \sData, \vPsi)
\quad\text{and}\\
& \pP(\vY \mid \vX, \sData) = \integ{d\vPsi} \pP(\vY \mid \vX, \sData, \vPsi) \pP(\vPsi \mid \sData).
\end{align*}
The resulting predictions meet the exigent standard of rational belief for meaningful uncertainty quantification.


\subsection{The Universal Scope of Prior Belief}

The pervasive objection to the Bayesian paradigm is the lack of clear provenance for prior belief.
In addition to objections based on subjectivity, translating our intuitive beliefs into distributions can be difficult.
This problem is exacerbated in the domain of machine learning, where computational models are abstract and driven only by practical utility, rather than well-understood physical principles.
The premise of machine learning is that we do not need to integrate expert knowledge and specialized scientific theory into algorithms to obtain useful predictions from data,
which eludes the traditional view of priors, that we must express our beliefs.

Prediction sensitivity to prior belief is most apparent when the number of parameters approaches or exceeds the size of our dataset, as illustrated in \Cref{fig:sensitivity}.
If we have $n$ observations and $k > n$ differentiable parameters, then every point in parameter space must have at least $k-n$ perturbable dimensions in which the likelihood gradient is zero.
Because the likelihood remains constant as we move through these dimensions, each point lives within a $(k-n)$-dimensional submanifold wherein prior belief entirely determines the structure of posterior belief.
Thus, within these submanifolds, the contribution of parameter uncertainty to prediction uncertainty is not affected by evidence.
Clearly, we cannot be satisfied with meeting only the bare conditions for technically rational belief;
we require concrete philosophical justification for prior belief.

\begin{figure}[b!]
	\centering
	\includegraphics[width=\textwidth]{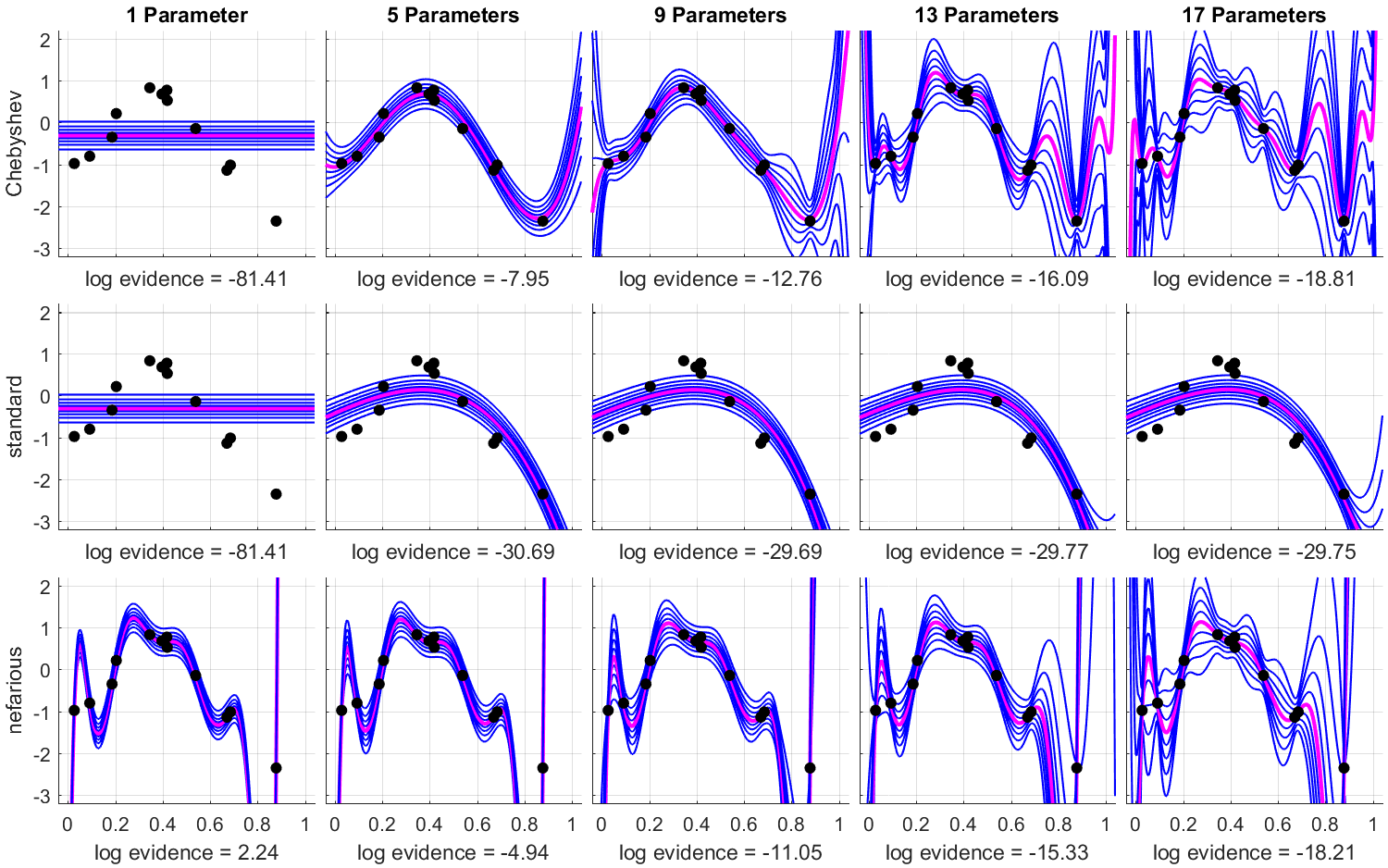}
	\caption{\vspace{-0.1in}\small
	Illustration of prediction sensitivity to prior belief.
	The first row uses Chebyshev polynomial bases and the second uses standard bases.
	The third row exacerbates the problem with basis-dependence by using a polynomial that already memorized the data for the first basis function, followed by the Chebyshev basis.
	All priors are normal, $\pN(\vTh \mid 0, \mI)$.
	The choice of basis clearly matters; inference alone does not prevent the development of artifacts we associate with memorization.
	}
  \label{fig:sensitivity}
\end{figure}

In order to appreciate the solution, we must grasp the full severity of the problem by framing it in the most arduous scope.
We can define the \textit{model universe} as the set containing every computational architecture that could produce coherent predictions over $\vY$ from $\vX$.
By considering inference over the model universe, we see that every architectural design decision is equivalent to a choice of support for prior belief, i.e.~the subdomain in which prior belief is nonzero.
Using model-class descriptions $\vPsi$ to capture potential choices of prior belief allows us to subsume these complications by investigating the correct form of a hyperprior $\pP(\vPsi)$.

This picture also illuminates a second significant challenge in the machine learning setting, the problem of dimensionality in high-parameter families.
Even if we obtain an attractive hyperprior, we can always construct increasingly complicated architectures.
It is not possible to explore all of them.
Occam's Razor provides a compelling path to a solution;
explanations should not exhibit more complexity than what is required to explain the evidence.
We conclude that a comprehensive hyperprior must compute and suppress a rigorous formulation of complexity.
Moreover, our learning framework must justify disregarding infinite dimensions from inference and simultaneously address how to feasibly construct or approximate restricted posteriors.


\subsection{Controlling Complexity}

Bayesian theorists have had a persistent interest in articulating core principles for constructing priors over abstract models, particularly within the objectivist Bayesian philosophy.
Examples include maximum entropy priors \citep{Jaynes1957,Good1963} and Jeffrey's priors \citep{Jeffreys1946}.
Other approaches use information criteria to determine a suitable number of parameters, such as the Akaike Information Criterion (AIC)~\citep{Akaike1974} and Bayesian Information Criterion (BIC)~\citep{Schwarz1978}.
In contrast to these approaches that explicitly compare model classes, in which a parameter is either present or absent,
Automatic Relevance Determination (ARD) \citep{Mackay1995,Neal2012} takes a softer approach.
ARD uses a hyperprior to express uncertain relevance of different parameters and features in a model.
It postulates that most model parameters should be close to zero because only a limited number of features are relevant for prediction.
Through inference, relevant model parameters can be identified automatically.
Having been specifically developed for neural networks, ARD provides an important perspective to understand complete theories of learning.
We will discuss the relationship between ARD priors and our theory in \Cref{sec:ardprior}.

Perhaps the most principled approach to a universal prior is Solomonoff's work on algorithmic probability \citep{Solomonoff1960,Solomonoff1964a,Solomonoff1964b,Solomonoff2009}.
Solomonoff derives a prior over all possible programs based upon their lengths using binary encodings subject to an optimal UTM.
\citet{Hutter2007}, reviewing central principles of reason, goes further to show how Solomonoff's framework solves important philosophical problems in the Bayesian setting,
including predictive and decision-theoretic performance bounds under the assumption that the generative process is a program.
\citet{Potapov2012} also discuss Solomonoff's algorithmic probability, emphasizing the importance of retaining many alternative models to not only learn robust predictions,
but also maintain adaptability in decision making. 

Kolmogorov's work on mapping complexity \citep{Kolmogorov1965} is closely related and we will examine it in detail in \Cref{sec:kolmogorov}.
Rissanen's work on universal priors and Minimum Description Length (MDL) \citep{Rissanen1983,Rissanen1984} is also related.
We examine his universal prior on integers in \Cref{sub:encodings} and the relationship between our theory and MDL in \Cref{sub:mdl}.
The key advantage of Solomonoff's approach is that it applies generally to any model we can program, thus eliminating artificial constraints on computational architectures.
Solomonoff does not separate the model $\vTheta$ from the model class $\vPsi$, since any model from any model class can be expressed as a program.
As this encoding-length based prior provides a strong base for our work, we provide a detailed discussion in \Cref{sec:solomonoff}.

\tAdd{Information-theoretic formulations of complexity can be traced back to \citet{Shannon1948} and the concept of entropy as a measure of the uncertainty associated with sequences of discrete symbols that may be transmitted over a communication channel.}
In order to rigorously understand how information in our datasets relates to Bayesian inference and encoding complexity,
we developed a theory of information \citep{Duersch2020} rooted in understanding information as an expectation over rational belief.

Given an arbitrary latent variable $\vZ$, we would like to measure the information gained by shifting belief between hypothetical states, i.e.~from $\pQ_0(\vZ)$ to $\pQ_1(\vZ)$.
We require this measurement to be taken relative to a third state of belief, $\pR(\vZ)$, which we hold to be valid.
The precise reasoning by which validity of $\pR(\vZ)$ is derived is an important epistemological question.
For our purposes, $\pR(\vZ)$ will either be rational belief, expressing our present understanding of the actual state of affairs, or a choice, representing a hypothetical state of affairs following a decision.

Rational belief is defined as the posterior distribution resulting from inference, which reserves some nonnegative probability for every outcome that is plausible.
In contrast, we regard a choice as a restriction on the support of belief, effectively confining probability to any distribution that we can describe.
This occurs when we must adhere to a course of action from a set of mutually incompatible options.
The following postulates and \Cref{thm:info} summarize key results.

\begin{enumerate}
\item Information gained by changing belief from $\pQ_0(\vZ)$ to $\pQ_1(\vZ)$ is quantified as an expectation over a third state $\pR(\vZ)$, called the view of expectation.
\item Information is additive over independent belief processes.
\item If belief does not change then no information is gained, regardless of the view of expectation.
\item Information gained from any normalized prior state of belief $\pQ_0(\vZ)$ to an updated state of belief $\pR(\vZ)$ in the view of $\pR(\vZ)$ must be nonnegative.
\end{enumerate}

\begin{theorem}
\label{thm:info} \thmTitle{Information as a rational measure of change in belief.}
Information, measured in bits, satisfying these postulates must take the form
\submitBlank
\begin{displaymath}
\info{\pR(\vZ)}{\pQ_1(\vZ)}{\pQ_0(\vZ)} = \integ{d\vZ} \pR(\vZ) \log_2\left( \frac{\pQ_1(\vZ)}{\pQ_0(\vZ)} \right) \text{bits}.
\end{displaymath}
\end{theorem}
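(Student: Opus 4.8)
The plan is to prove this as an axiomatic representation theorem: I would use the four postulates in turn to narrow the admissible functionals down to a single one-parameter family, and then let the unit convention ``bits'' fix the parameter. The overall strategy mirrors the classical characterizations of Shannon entropy and relative entropy by their structural properties, with the additivity axiom playing the central role.

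First I would translate Postulate~1 into a concrete ansatz. Reading ``expectation over $\pR(\vZ)$'' literally, the measurement must be the $\pR$-average of a local information density that depends only on the two belief values being compared at each point, so that
\begin{equation*}
\info{\pR(\vZ)}{\pQ_1(\vZ)}{\pQ_0(\vZ)} = \integ{d\vZ}\,\pR(\vZ)\,f\bigl(\pQ_1(\vZ),\pQ_0(\vZ)\bigr)
\end{equation*}
for some fixed function $f$ of two positive arguments. Postulate~3 then records immediately that $f(q,q)=0$ for every $q>0$, since $\pQ_1(\vZ)=\pQ_0(\vZ)$ must give zero for every view.

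The crux is Postulate~2. I would take two independent subsystems, so that the view and the prior and updated beliefs all factor over $\vZ=(\vZ_1,\vZ_2)$, and demand that the joint information equal the sum of the two marginal informations. Expanding both sides with the ansatz, and using $\integ{d\vZ}\,\pR(\vZ)=1$ to recast the sum of marginal informations as one joint $\pR_1\pR_2$-expectation, additivity reduces to the statement that the $\pR_1\pR_2$-average of $f(a_1 a_2,\,b_1 b_2)-f(a_1,b_1)-f(a_2,b_2)$ vanishes, where $a_i$ and $b_i$ are the local values of the updated and prior beliefs in subsystem $i$. Because the view is arbitrary and these density values can be set independently, the bracket must vanish identically, yielding the multiplicative-to-additive functional equation
\begin{equation*}
f(uv,\,st) = f(u,s) + f(v,t) \quad\text{for all } u,v,s,t>0.
\end{equation*}
I expect this localization step---passing from an identity that holds only in $\pR$-expectation to a pointwise constraint on $f$---to be the main obstacle, since it must be justified from the arbitrariness of the view and the independence of the two subsystems' densities rather than merely assumed.

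To finish, I would linearize: setting $F(x,y)=f(e^x,e^y)$ turns the functional equation into the two-dimensional Cauchy equation $F(x_1+x_2,\,y_1+y_2)=F(x_1,y_1)+F(x_2,y_2)$, whose solutions under a mild regularity assumption (measurability or continuity, to exclude pathological additive maps) are linear, $f(u,s)=\alpha\ln u+\beta\ln s$. The diagonal condition $f(q,q)=0$ forces $\beta=-\alpha$, so $f(u,s)=\alpha\ln(u/s)$ and the functional collapses to $\alpha\integ{d\vZ}\,\pR(\vZ)\ln(\pQ_1(\vZ)/\pQ_0(\vZ))$. Finally, Postulate~4 applied with $\pQ_1(\vZ)=\pR(\vZ)$ gives $\alpha\integ{d\vZ}\,\pR(\vZ)\ln(\pR(\vZ)/\pQ_0(\vZ))\ge 0$; since the integral is a relative entropy and hence nonnegative by Gibbs' inequality for every $\pR(\vZ)$ and $\pQ_0(\vZ)$, we must have $\alpha\ge 0$, and declaring the result to be measured in (nonzero) bits pins $\alpha=1/\ln 2$, converting $\ln$ to $\log_2$ and giving exactly the claimed form.
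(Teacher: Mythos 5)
The paper does not actually prove \Cref{thm:info}; it defers the proof (and those of the information corollaries) to the authors' earlier work \citep{Duersch2020}, so there is no in-paper argument to compare yours against. Taken on its own terms, your proposal is a standard and essentially sound axiomatic characterization in the style of the classical derivations of Shannon and relative entropy: the ansatz from Postulate~1, the reduction of additivity to a pointwise functional equation via arbitrariness of the view, the Cauchy-equation linearization, and the sign and normalization from Postulate~4 all fit together correctly, and you are right that the localization step is the crux. Two caveats are worth flagging, since each is a genuine additional input beyond the four postulates as stated. First, the locality ansatz --- that the quantity averaged over $\pR(\vZ)$ is a fixed function of the two density values $\pQ_1(\vZ)$ and $\pQ_0(\vZ)$ alone, with no explicit dependence on $\vZ$ or on the distributions globally --- is an interpretive strengthening of Postulate~1, not a consequence of it; a careful writeup should state it as a separate hypothesis or argue for it from invariance under relabeling of outcomes. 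Second, solving the two-dimensional Cauchy equation requires a regularity hypothesis (measurability or local boundedness) to exclude pathological additive solutions, which you note but which is likewise not among the listed postulates. With those hypotheses made explicit the argument goes through: Postulate~4 rules out $\alpha<0$ by Gibbs' inequality, and the bits convention fixes $\alpha = 1/\ln 2$ as you say.
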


When the view of expectation is the same as the target belief, we recover the Kullback-Leibler divergence \citep{Kullback1951} 
\submitBlank
\begin{align*}
\KL{\pR(\vZ)}{\pQ(\vZ)} = \info{\pR(\vZ)}{\pR(\vZ)}{\pQ(\vZ)}. 
\end{align*}
The entropy of a distribution $\pP(\vZ)$ over discrete outcomes $\vZ \in \{ \vZ_i \mid i \in [n] \}$ is equivalent to the expected information gained upon realization in the view of the realization
\submitBlank
\begin{displaymath}
\entropy{\pP(\vZ)} = \sum_{i=1}^n \pP(\vZ_i) \log_2\!\left( \frac{1}{\pP(\vZ_i)} \right) \text{bits}.
\end{displaymath}

Our theory allows us to relate and analyze changes in belief regarding our data, model parameters, and hyperparameters within a unified framework.
\Cref{sec:corollaries} provides selected corollaries of \Cref{thm:info} for reference. 

Our main result, \Cref{thm:parsimony_optimization}, builds on this work to show how memorization may be quantified, \Cref{cor:memorization}, and prevented.
Simple changes to the model structure that benefit multiple predictions are parsimonious, worthwhile investments.
In contrast, memorization as a wasteful transfer of information from the space of predictions to the space of explanations.
As we must inevitably solve feasible approximations of the posterior predictive integral in order to obtain practical predictions, our theory provides additional benefit by allowing us to analyze posterior approximations within the same formalism.


\section{Complexity and Parsimony}
\label{sec:complexity}

We present our theoretical learning framework in three parts.
First, we analyze a modest generalization of Kolmogorov's notion of program length to sequences of symbols drawn from arbitrary alphabets that may be conditioned on previously realized symbols.
This simplifies our ability to assign complexity to arbitrary descriptions of prior belief.
Second, we use description length to derive a hyperprior that extends Solomonoff's formulation of algorithmic probability to general inference architectures.
Third, we cast learning as an information minimization principle.
We show how learning balances the two forms of information contained within models, due to both prior descriptions and inference, against the information the models provide about our dataset.
Not only does this formalism allow us to analyze the utility of potential restrictions of prior belief, we also recover variational inference optimization from the same principle.


\subsection{Program Length and Kolmogorov Complexity}
\label{sec:kolmogorov}

\newSet<\sObj>{A}
\newSet<\sSubObj>{B}
\DeclareDocumentCommand \sSub{m}{\sSubObj\!\left[ #1 \right]}
\newVector<\vObj>{a}
\newVector<\vObjr>{\check{a}}
\newMatrix<\sAlp>{\Sigma}
\newSet<\sX>{X}
\newSet<\sY>{Y}

Kolmogorov's discussion of complexity begins with a countable set of objects $\sObj = \{ \vObj \}$ that are indexed with binary sequences.
For Kolmogorov, an object $\vObj$ is a program and the length of the program $\fLen(\vObj)$ is taken to be the number of binary digits in a corresponding binary sequence $\vPsi(\vObj)$.
Given a domain of program inputs $\sX$ and a codomain of outputs $\sY$,
a \textit{programming method} $\vPhi(\cdot, \cdot)$ accepts the program $\vObj$ and an input $\vX \in \sX$ and returns an output $\vY = \vPhi(\vObj, \vX) \in \sY$.
The Kolmogorov complexity of an ordered pair $(\vX, \vY) \in \sX \times \sY$ is the length of the shortest program that is capable of reproducing the pair
\submitBlank
\begin{align*}
\fKol_{\vPhi}(\vX, \vY) = \min_{\vObj\in\sSubObj}\, \fLen(\vObj)
\quad\text{where}\quad
\sSubObj = \left\{ \vObj \mid \vY = \vPhi(\vObj, \vX) \right\} \subset \sObj.
\end{align*}
An ordered pair may be understood to enforce multiple function values or even the entire mapping that defines a function.
Further, Kolmogorov's framework easily captures the complexity of a singleton $\vY$ by taking an empty input, $\vX=\emptyset$.

Because Turing-complete programming methods can simulate one another,
\tAdd{the shortest length of a program in a new language is bound from above by that of the source language plus a constant;
the new shortest program cannot be longer than simply attaching a simulator to the source version.
Thus, the length of an efficient simulator for the original programming method provides the bounding constant offset.}

The descriptions of interest to us, however, may not admit perfectly efficient binary codes.
For example, we may wish to represent the outcome of rolling of a balanced six-sided die, \tAdd{having an approximate entropy of $2.585$ bits.}
Rather than solving for an optimal binary encoding \citep{Huffman1952},
\tAdd{yielding an expected length of approximately $2.667$ bits,}  
it is convenient to extend the notion of the length to finite sequences of symbols drawn from multiple alphabets.
\tAdd{In this case, using an alphabet with six symbols would allow the encoding to exactly achieve the entropy limit.}
See \Cref{sub:decision_trees} for another example in which this extension supports efficient descriptions of feature domain partitions.

Let a description be composed of a sequence of symbols represented as $\vPsi = (\vS_i)_{i=1}^n$.
When we wish to draw attention to the role of a sequence as an encoding of an object, we write $\vPsi(\vObj)$.
For our purposes, these objects do not necessarily need to be programs.
Rather, they are simply descriptions of belief, $\pP(\vTh \mid \vPsi)$.
For each $i \in [n]$, a symbol $\vS_i$ is selected from an alphabet $\sAlp_i$.
We emphasize that each alphabet is allowed to depend on previously realized symbols in the sequence so that the sequence of alphabets is not fixed.
To avoid cumbersome notation and excessive indexing variables, 
we express subsequences as $(\vS)_{1}^{j}$ and leave the natural indexing $(\vS_i)_{i=1}^{j}$ implied.
Let $(\vS)_{1}^{0}$ indicate the empty subsequence.
We also leave the conditional dependence of each alphabet on previous symbols implied so that we may simply write $\sAlp_{i}$ rather than $\sAlp_{i}\!\left[ (\vS)_{1}^{i-1} \right]$.
As with Kolmogorov, the length of an object is derived from a sequence, $\fLen(\vObj) = \fLen(\vPsi(\vObj))$.

If we treat each symbol in an encoding $\vPsi(\vObj)$ as a discrete random variable, we have
\submitBlank
\begin{align*}
\pP(\vPsi(\vObj)) = \prod_{i=1}^n \pP(\vS_i \mid (\vS)_{1}^{i-1}).
\end{align*}
The entropy corresponding to each potential symbol, or the information we expect to gain upon realization,
is the maximum if and only if the probability of each symbol is uniform over its alphabet, $\pP(\vS_i \mid (\vS)_{1}^{i-1}) = \frac{1}{|\sAlp_i|}$.
That is, 
\submitBlank
\begin{align*}
\sum_{\vS_i \in \sAlp_i} \pP(\vS_i \mid (\vS)_{1}^{i-1}) \log_2\!\left(\frac{1}{\pP(\vS_i \mid (\vS)_{1}^{i-1})}\right) \leq \log_2(|\sAlp_i|).
\end{align*}
Our construction of generalized length in \Cref{def:length} invokes the principle of maximum entropy to remove restrictions on the kinds of codes we can consider,
while recovering Kolmogorov's length when all symbols are binary digits.

\begin{definition}
\label{def:length} \thmTitle{Generalized Length as Maximum Entropy Encoding.}
The generalized length of an arbitrary sequence $\vPsi$ is the upper bound on entropy of the corresponding sequence of alphabets from which each symbol is drawn,
\submitBlank
\begin{displaymath}
\fLen(\vPsi) = \sum_{i=1}^n \log_2( |\sAlp_i| ) \,\text{bits}.
\end{displaymath}
\end{definition}

\Cref{cor:length_lowerbound} shows that this length saturates the information lower bound in Shannon's source coding theorem \citep{Shannon1948}.
We provide all proofs in \Cref{sec:proofs}.
Furthermore, when we develop an efficient encoding for the kinds of objects we would like to use,
\Cref{cor:probability_from_length} allows us to naturally derive the probability of an object from the encoding.

\begin{corollary}
\label{cor:length_lowerbound} \thmTitle{Generalized Length Lower Bound.}
Given a set of objects $\sObj$ and probabilities $\pP(\vObj)$ for all $\vObj \in \sObj$,
the expected generalized length of an object is bound from below by the entropy
\submitBlank
\begin{displaymath}
\expect_{\pP(\vObj)} \fLen(\vPsi(\vObj)) \ge \expect_{\pP(\vObj)} \log_2\!\left(\frac{1}{\pP(\vObj)}\right).
\end{displaymath}
\end{corollary}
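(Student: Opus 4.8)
The plan is to reduce the claimed inequality to two ingredients already in hand: the chain rule of probability, and the per-symbol maximum-entropy bound stated immediately before \Cref{def:length}.

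First I would note that, because $\vPsi$ is an encoding and hence injective on $\sObj$, the probability of an object coincides with the probability of its symbol sequence, and the latter factorizes by the chain rule of probability,
\[
\pP(\vObj) = \pP(\vPsi(\vObj)) = \prod_{i=1}^n \pP(\vS_i \mid (\vS)_{1}^{i-1}).
\]
Taking $-\log_2$ converts the right-hand side of the corollary into a sum of conditional surprisals,
\[
\log_2\!\left(\frac{1}{\pP(\vObj)}\right) = \sum_{i=1}^n \log_2\!\left(\frac{1}{\pP(\vS_i \mid (\vS)_{1}^{i-1})}\right),
\]
while the generalized length is $\fLen(\vPsi(\vObj)) = \sum_{i=1}^n \log_2(|\sAlp_i|)$ by \Cref{def:length}.

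Next I would take the expectation over $\pP(\vObj)$ of both sums and compare them index by index. For a fixed $i$ and a fixed prefix $(\vS)_{1}^{i-1}$, averaging over the conditional law of $\vS_i$ produces a conditional entropy, to which the maximum-entropy inequality preceding \Cref{def:length} applies,
\[
\sum_{\vS_i \in \sAlp_i} \pP(\vS_i \mid (\vS)_{1}^{i-1}) \log_2\!\left(\frac{1}{\pP(\vS_i \mid (\vS)_{1}^{i-1})}\right) \le \log_2(|\sAlp_i|).
\]
The key step is then to average this pointwise bound over the marginal law of the prefix $(\vS)_{1}^{i-1}$ induced by $\pP$: the left side becomes the $i$-th conditional-entropy contribution to $\expect_{\pP(\vObj)} \log_2(1/\pP(\vObj))$, and the right side becomes $\expect_{\pP(\vObj)} \log_2(|\sAlp_i|)$. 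Summing over $i$ and recombining the conditional entropies via the chain rule yields exactly $\expect_{\pP(\vObj)} \fLen(\vPsi(\vObj)) \ge \expect_{\pP(\vObj)} \log_2(1/\pP(\vObj))$.

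The main obstacle I anticipate is the bookkeeping around the prefix-dependence of the alphabets. Because $\sAlp_i$, and therefore $\log_2(|\sAlp_i|)$, is itself a random quantity determined by the earlier symbols, one cannot compare the two sums termwise before taking expectations; the scalar maximum-entropy inequality must be applied conditionally on each prefix and only afterward averaged over the prefix law, which is precisely why the expectation is retained on the right-hand side of the corollary rather than collapsing to a deterministic $\log_2(|\sAlp_i|)$. Once the injectivity of the encoding (giving $\pP(\vObj) = \pP(\vPsi(\vObj))$) is made explicit, the remainder is a direct assembly of the chain rule together with the already-established bound.
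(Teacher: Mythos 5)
Your proof is correct and is essentially the paper's argument: both rest on the chain-rule factorization $\pP(\vObj)=\prod_i \pP(\vS_i \mid (\vS)_{1}^{i-1})$ and the per-symbol Jensen/maximum-entropy bound applied conditionally on each prefix and then averaged over the prefix law. The paper packages the summation over $i$ as an induction on prefix length, while you sum the conditional expectations directly via the tower property, but this is a cosmetic difference rather than a different route.
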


\begin{corollary}
\label{cor:probability_from_length} \thmTitle{Probability from Length.}
Given a set of objects $\sObj$ and a maximum entropy encoding with $\pP(\vObj) = \pP(\vPsi(\vObj))$ for all $\vObj \in \sObj$,
the generalized length satisfies
\submitBlank
\begin{displaymath}
\pP(\vObj) = 2^{-\fLen(\vObj)}.
\end{displaymath}
\end{corollary}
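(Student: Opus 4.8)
The plan is to unwind the definitions that precede \Cref{def:length}, so that the result reduces to a short computation once the maximum entropy hypothesis is applied. I would begin from the factorization of the encoding probability into conditional symbol probabilities that is already established, namely $\pP(\vPsi(\vObj)) = \prod_{i=1}^n \pP(\vS_i \mid (\vS)_1^{i-1})$. The hypothesis supplies two further ingredients: the identification $\pP(\vObj) = \pP(\vPsi(\vObj))$ between an object and its encoding, and the requirement that the encoding be a maximum entropy encoding.

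The key step is to translate the maximum entropy condition into a statement about the individual symbol laws. As observed just before \Cref{def:length}, the entropy contributed by the $i$-th symbol attains its upper bound $\log_2(|\sAlp_i|)$ if and only if that symbol is uniform over its conditional alphabet, $\pP(\vS_i \mid (\vS)_1^{i-1}) = 1/|\sAlp_i|$. Substituting this uniform law into the product collapses it to $\prod_{i=1}^n 1/|\sAlp_i|$. I would then rewrite each factor as $1/|\sAlp_i| = 2^{-\log_2(|\sAlp_i|)}$ and use additivity of exponents to gather the product into $2^{-\sum_{i=1}^n \log_2(|\sAlp_i|)}$, whose exponent is exactly the generalized length of \Cref{def:length}. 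Applying the identity $\fLen(\vObj) = \fLen(\vPsi(\vObj))$ together with $\pP(\vObj) = \pP(\vPsi(\vObj))$ then yields $\pP(\vObj) = 2^{-\fLen(\vObj)}$, as claimed.

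I do not expect a genuine technical obstacle here; the difficulty is conceptual and lies entirely in correctly reading the hypothesis. One must recognize that ``maximum entropy encoding'' is precisely the condition forcing each conditional symbol distribution to the uniform law, and that \Cref{def:length} defines $\fLen$ as the entropy \emph{upper bound} rather than the entropy of an arbitrary code. Consequently the pointwise identity $\pP(\vObj) = 2^{-\fLen(\vObj)}$ is exact only because the bound is saturated symbol by symbol; for a non-maximal code the symbol entropies fall short of their maxima and the relationship relaxes to the expected-length inequality of \Cref{cor:length_lowerbound}, with the maximum entropy encoding being exactly the case that attains it with equality.
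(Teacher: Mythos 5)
Your proposal is correct and follows essentially the same route as the paper's proof: factor $\pP(\vPsi(\vObj))$ into conditional symbol probabilities, invoke maximum entropy to force each to the uniform law $1/|\sAlp_i|$, and collapse the product into $2^{-\fLen(\vObj)}$. The paper states this in a single line; your version simply makes the same steps explicit.
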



\subsection{Algorithmic Probability}
\label{sec:solomonoff}

\DeclareDocumentCommand \pQObj{} {\pQ(\vObj)}
\DeclareDocumentCommand \pQsObj{} {\pQs(\vObj)}

\tAdd{Five years before Kolmogorov published his work on mapping complexity}, Solomonoff articulated the foundations for inductive inference and algorithmic probability.
He was specifically interested in programs capable of reproducing a binary sequence $\vY$,
i.e.~the subset of programs $\sSubObj = \left\{ \vObj \mid \vY = \vPhi(\vObj, \emptyset) \right\} \subset \sObj$,
and he derived the probabilistic contribution of each program to plausible continuations of the sequence
\submitBlank
\begin{align*}
\pP(\vObj \mid \vY) \propto \begin{cases}
 2^{-\fLen(\vPsi(\vObj))} & \vObj \in \sSubObj \\
 0 & \vObj \notin \sSubObj
\end{cases}
\end{align*}
where, as with Kolmogorov's picture, length corresponds to an optimal binary encoding, subject to \tAdd{an optimal UTM.
That is, the UTM for which the optimal binary encoding is shortest.}
We understand his result as Bayesian inference wherein \Cref{cor:probability_from_length} provides prior belief and a program has unit likelihood if \tAdd{it halts and} reproduces the sequence.
\tAdd{Otherwise, the likelihood is zero.}
It follows that the Kolmogorov complexity is simply the \tAdd{length of the Maximum A Posteriori (MAP)} estimator in the same picture.
As such, the Kolmogorov complexity is the minimum amount of information that is possible to gain
by restricting belief to a discrete program that is capable of reproducing a desired ordered pair.
If, however, we allow distributions of belief over many programs, so that $\pQObj \geq 0$ for any $\vObj \in \sSubObj$,
\Cref{cor:solomonoff} shows that Solomonoff's algorithmic probability is the minimizer of information gain, improving beyond the Kolmogorov complexity.
 
\begin{corollary}
\label{cor:solomonoff} \thmTitle{Information Optimality of Solomonoff Programs.}
If we measure the change in belief from all possible programs according to \Cref{cor:probability_from_length}
to any distribution $\pQObj$ that restricts belief to programs capable of reproducing an input-output pair $(\vX, \vY)$,
the minimizer
\submitBlank
\begin{align*}
& \pQsObj = \argmin_{\pQObj}\, \KL{\pQObj}{\pP(\vObj)}
\quad\text{subject to}\quad
\pQObj = 0
\quad\forall\quad
\vObj \notin \sSubObj = \left\{ \vObj \mid \vY = \vPhi(\vObj, \vX) \right\},
\end{align*}
is uniquely given by
\submitBlank
\begin{align*}
& \pQsObj = \frac{2^{-\fLen(\vPsi(\vObj))}}{\pP(\sSubObj)}
\quad\forall\quad
\vObj \in \sSubObj
\quad\text{where}\quad
\pP(\sSubObj) = \sum_{\vObj \in \sSubObj} 2^{-\fLen(\vPsi(\vObj))}.
\end{align*}
\end{corollary}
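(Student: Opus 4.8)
The plan is to prove this by an information-projection decomposition rather than by direct Lagrangian optimization, since the decomposition delivers both the minimizer and its uniqueness in a single stroke. The candidate $\pQsObj$ has the defining feature that, on the admissible support, its ratio to the prior is constant: for every $\vObj \in \sSubObj$ we have $\pQsObj / \pP(\vObj) = 1/\pP(\sSubObj)$, using $\pP(\vObj) = 2^{-\fLen(\vPsi(\vObj))}$ from \Cref{cor:probability_from_length}. I would first record that $\pQsObj$ is itself an admissible competitor: it is nonnegative, vanishes outside $\sSubObj$, and sums to one by the very definition of $\pP(\sSubObj)$, provided $\sSubObj$ is nonempty so that $\pP(\sSubObj) > 0$.

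The central step is to establish that for any admissible $\pQObj$ (normalized and supported on $\sSubObj$),
\begin{align*}
\KL{\pQObj}{\pP(\vObj)} = \KL{\pQObj}{\pQsObj} + \log_2\!\left(\frac{1}{\pP(\sSubObj)}\right).
\end{align*}
This follows by splitting the logarithm, $\log_2(\pQObj/\pP(\vObj)) = \log_2(\pQObj/\pQsObj) + \log_2(\pQsObj/\pP(\vObj))$, over the support where $\pQObj$ is nonzero, substituting the constant ratio $\pQsObj/\pP(\vObj) = 1/\pP(\sSubObj)$ into the second summand, and pulling that constant out of the sum via $\sum_{\vObj \in \sSubObj} \pQObj = 1$. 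The support constraint $\pQObj = 0$ outside $\sSubObj$ is exactly what makes the second term genuinely independent of $\pQObj$, so the decomposition separates into a competitor-dependent divergence plus a fixed offset.

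Because $\log_2(1/\pP(\sSubObj))$ does not depend on $\pQObj$, minimizing $\KL{\pQObj}{\pP(\vObj)}$ reduces to minimizing $\KL{\pQObj}{\pQsObj}$. The nonnegativity of information gained in the view of the target distribution—the fourth postulate, equivalently the nonnegativity of the Kullback–Leibler divergence—gives $\KL{\pQObj}{\pQsObj} \ge 0$, so the minimum value is $\log_2(1/\pP(\sSubObj))$, attained precisely when this residual divergence vanishes. The one delicate point to pin down is the equality case for uniqueness: I would invoke strict concavity of the logarithm to conclude that $\KL{\pQObj}{\pQsObj} = 0$ forces the ratio $\pQObj/\pQsObj$ to be constant on $\sSubObj$, and normalization then fixes that constant at one, so $\pQObj = \pQsObj$. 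I expect this equality-case argument to be the only step requiring care; the algebra of the decomposition is routine and the nonnegativity is already supplied by the postulates. A boundary caveat worth stating explicitly is the nonemptiness of $\sSubObj$, i.e.\ that at least one program reproduces $(\vX, \vY)$, without which the feasible set is empty and the statement is vacuous.
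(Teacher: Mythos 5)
Your proposal is correct, and it takes a genuinely different route from the paper's proof. The paper proceeds in two separate steps: first a variational (perturbation) argument, writing $\pQObj = \pQsObj + \eps \pEta(\vObj)$ with $\sum_{\vObj \in \sSubObj} \pEta(\vObj) = 0$, differentiating at $\eps = 0$, and concluding that $\log_2\!\left(\pQsObj \, 2^{\fLen(\vPsi(\vObj))}\right)$ must be constant on the support, which identifies the stated distribution as the unique critical point; second, a direct application of Jensen's inequality to show $\KL{\pQObj}{\pP(\vObj)} \ge \log_2\!\left(1/\pP(\sSubObj)\right) = \KL{\pQsObj}{\pP(\vObj)}$, confirming global minimality. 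You instead use the Pythagorean (I-projection) decomposition $\KL{\pQObj}{\pP(\vObj)} = \KL{\pQObj}{\pQsObj} + \log_2\!\left(1/\pP(\sSubObj)\right)$, valid because $\pQsObj/\pP(\vObj)$ is constant on $\sSubObj$ and the support constraint kills all terms off $\sSubObj$; nonnegativity of the residual divergence, together with its equality case, then delivers the minimum value, the minimizer, and uniqueness in one stroke. Your route is arguably cleaner and unifies what the paper treats as two separate verifications, while the paper's variational argument is more in keeping with the perturbation machinery it uses elsewhere (e.g., \Cref{cor:perturbations}) and makes the critical-point structure explicit. Your explicit caveat that $\sSubObj$ must be nonempty (so $\pP(\sSubObj) > 0$) is a point the paper leaves implicit, and your handling of the equality case via the strict form of Gibbs' inequality is exactly the care needed for uniqueness.
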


Solomonoff's picture is even more general than it first appears.
There is no need to confine our attention to binary programs that reproduce binary sequences.
We may apply the same framework to any algorithm that generates coherent probabilities on a given dataset, $\pP(\vY \mid \vX, \vObj)$.
Since any algorithm we write is ultimately still a program, this induces universal prior belief over arbitrary predictive algorithms.
\tAdd{Again, such a prior depends on the choice of UTM or programming method.}
Then, Bayesian inference yields rational belief as a posterior distribution over all such algorithms.

Yet, this approach is fundamentally difficult because it requires us to efficiently explore the posterior over suitable programs via their discrete sequences.
Since it is not always possible to anticipate how a program will respond to given inputs in finite time, we arrive at the problem of uncomputability.
Moreover, even if we discover seemingly high posterior programs, we cannot guarantee that our sample adequately approximates the posterior predictive integral.
We discuss this problem and a potential solution further in \Cref{sec:computability}.

Rather than restricting our attention to programs, we would like to allow more general inference architectures.
As indicated earlier, we accomplish this by relaxing $\vPsi$ to be merely a description of prior belief.
\tAdd{Doing so requires an \textit{interpreter}, which translates valid sequences into coherent distributions over valid models, $\pP(\vTh \mid \vPsi)$,
and then computes coherent likelihoods from specific models, $\pP(\vY \mid \vX, \vTh)$.
If the interpreter is Turing complete, then a description could still be a complete program,
but we can also consider interpreters that merely require specification of a few hyperparameters of a distribution that is well-suited to our data source.}

Not only are short descriptions easier to discover, letting our data drive updates in belief through inference is often substantially more information-efficient than accounting for a full program that computes the equivalent result.
In this picture, we lose the ability of interpreters to simulate one another, but we gain access to simpler encodings that may be much easier to propose and evaluate.
That said, the choice of interpreter is an important problem.
We revisit this issue in \Cref{sec:interpreters}.

We remark that subjective prior beliefs may be expressible by allowing symbol probabilities to be nonuniform over relevant alphabets.
In this view, just as generalized length is the limit of expected information gained by realization from an encoding,
the corresponding probability in \Cref{cor:probability_from_length} may be regarded as the limit of subjective priors within a given encoding.
Subjective prior beliefs are also reflected by the choice of interpreter.

The parsimonious hyperprior over corresponding model classes from \Cref{cor:probability_from_length},  $\pP(\vPsi) = 2^{-\fLen(\vPsi)}$,
is enough to complete the Bayesian framework with well-founded justification for how we arrive at prior belief over arbitrary architectures.
When we perform Bayesian inference from a parsimonious prior or hyperprior, we call the result parsimonious rational belief.
To achieve computational feasibility, however, we still need to investigate principled restrictions of belief.


\subsection{The Principle of Information Minimization}
\label{sec:info_min}

We develop this paradigm in order to promote computational feasibility while retaining well-founded theoretical justification for resulting predictions.
As alluded to in \Cref{cor:solomonoff}, we can cast learning as an information minimization problem over our total change in belief due to observing the training data $\sData$,
selecting one or more model classes $\vPsi$, and solving for distributions over models $\vTheta$ within each class.
The principle of minimum information \citep{Evans1969}, based on the closely related principle of maximum entropy \citep{Jaynes1957},
intuitively states that driving the information gained upon viewing the training data to be as low as possible, we obtain better predictions.
If a dataset contains a highly predictive pattern, then once that pattern is known we can obtain strong predictions.
As a consequence, the information gained by observing new labels drops.
In contrast, the information gained by observing new labels will remain high when there is no discernable pattern.
We derive \Cref{thm:parsimony_optimization} from a rigorous formulation of the minimum information principle using our work regarding information as a rational measure of change in belief and show how this information objective can be manipulated into three terms
that provide insight into how we may understand and control complexity during learning as a constrained optimization problem.

\DeclareDocumentCommand \pRY{} {\pR(\vY \mid \vYr)}
\DeclareDocumentCommand \pQPsi{} {\pQ(\vPsi)}
\DeclareDocumentCommand \pQsPsi{} {\pQs(\vPsi)}
\DeclareDocumentCommand \pQTh{} {\pQ(\vTh)}
\DeclareDocumentCommand \pQThCPsi{} {\pQ(\vTh \mid \vPsi)}
\DeclareDocumentCommand \pQsThCPsi{} {\pQs(\vTh \mid \vPsi)}
\DeclareDocumentCommand \pQThPsi{} {\pQ(\vTh, \vPsi)}
\DeclareDocumentCommand \pQsThPsi{} {\pQs(\vTh, \vPsi)}
\DeclareDocumentCommand \pQsY{} {\pQs(\vY)}

\begin{theorem}
\label{thm:parsimony_optimization} \thmTitle{Parsimonious Inference Optimization.}
Let our training dataset be represented as an ordered pair $(\vX, \vY)$.
A model $\vTh$ computes coherent probabilities over potential labels $\vY$ from features $\vX$ as $\pP(\vY \mid \vX, \vTh)$.
Shorthand $\pP(\vY \mid \vTh)$ leaves dependence on $\vX$ implied.
A description of the model class is represented by a sequence $\vPsi$, which restricts prior belief to $\pP(\vTh \mid \vPsi)$.
The parsimonious hyperprior over potential sequences induced by generalized length is $\pP(\vPsi) = 2^{-\fLen(\vPsi)}$.
Our joint belief in labels, models, and prior encodings is given by $\pP(\vY, \vTh, \vPsi) = \pP(\vY \mid \vTh) \pP(\vTh \mid \vPsi) \pP(\vPsi)$.
Viewing the realized labels $\vYr$ from the dataset changes rational belief to $\pRY$, a distribution assigning full probability to the observed outcomes.
Let any potential choice of belief over descriptions after viewing the data be $\pQPsi$.
Likewise, within a model class $\vPsi$, an arbitrary distribution over models is $\pQThCPsi$.
The total information gained is given by the Kullback-Leibler divergence
\submitBlank
\begin{align*}
\KL{\pRY \pQThCPsi \pQPsi}{ \pP(\vY \mid \vTh) \pP(\vTh \mid \vPsi) \pP(\vPsi)}.
\end{align*}
The minimizer of this objective is equivalent to the maximizer of the following parsimony objective, expressed in three parts
\submitBlank
\begin{align*}
\omega\tAdd{[\pQThCPsi, \pQPsi]} =
& \expect_{\pQThCPsi \pQPsi} \info{\pRY}{\pP(\vY \mid \vTh)}{\pP(\vY \mid \vTh_0)}
& &\text{(prediction information)} \\
& - \expect_{\pQPsi} \KL{\pQThCPsi}{\pP(\vTh \mid \vPsi)}
& -&\text{(inference information)} \\
& - \expect_{\pQPsi} \fLen(\vPsi) + \entropy{\pQPsi}, 
& -&\text{(description information)}
\end{align*}
where $\vTh_0$ anchors the predictive information measurement to any fixed baseline.
\end{theorem}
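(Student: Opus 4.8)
The plan is to expand the total information as one Kullback--Leibler divergence, split the logarithm of the product of densities into three additive pieces (this is essentially the chain rule for relative entropy), and then marginalize each piece down to a named component of $\omega$. Writing the joint choice as $\pQThPsi = \pQThCPsi\,\pQPsi$, the definition of $\KL{\cdot}{\cdot}$ from \Cref{thm:info} gives
\begin{align*}
& \KL{\pRY\,\pQThPsi}{\pP(\vY \mid \vTh)\,\pP(\vTh \mid \vPsi)\,\pP(\vPsi)} \\
& \quad = \integ{d\vY\,d\vTh\,d\vPsi} \pRY\,\pQThPsi\left[ \log_2\frac{\pRY}{\pP(\vY \mid \vTh)} + \log_2\frac{\pQThCPsi}{\pP(\vTh \mid \vPsi)} + \log_2\frac{\pQPsi}{\pP(\vPsi)} \right],
\end{align*}
since the logarithm of a product is the sum of logarithms.

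First I would reduce each of the three pieces by integrating out the variables absent from its logarithm. In the third piece $\vY$ and $\vTh$ marginalize away, and substituting the parsimonious hyperprior $\pP(\vPsi) = 2^{-\fLen(\vPsi)}$ collapses it to $\expect_{\pQPsi}\fLen(\vPsi) - \entropy{\pQPsi}$, the description information with its sign. In the second piece $\vY$ marginalizes away, leaving $\expect_{\pQPsi}\KL{\pQThCPsi}{\pP(\vTh \mid \vPsi)}$, the inference information. In the first piece $\vPsi$ marginalizes through the marginal $\pQTh = \expect_{\pQPsi}\pQThCPsi$, and splitting its logarithm produces $-\entropy{\pRY} - \expect_{\pQTh}\expect_{\pRY}\log_2\pP(\vY \mid \vTh)$.

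Next I would reintroduce the baseline $\vTh_0$ to recognize the residual cross-entropy term as prediction information. Because $\pP(\vY \mid \vTh_0)$ is independent of $\vTh$, the definition of $\info{\cdot}{\cdot}{\cdot}$ yields $\expect_{\pQThPsi}\info{\pRY}{\pP(\vY \mid \vTh)}{\pP(\vY \mid \vTh_0)} = \expect_{\pQTh}\expect_{\pRY}\log_2\pP(\vY \mid \vTh) - \expect_{\pRY}\log_2\pP(\vY \mid \vTh_0)$, where the subtracted quantity is constant in the optimization variables. Collecting the three reduced pieces then shows the divergence equals $-\omega + C$ with $C = -\entropy{\pRY} - \expect_{\pRY}\log_2\pP(\vY \mid \vTh_0)$. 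Since $C$ does not depend on $\pQThCPsi$ or $\pQPsi$, the minimizer of the divergence is the maximizer of $\omega$, which is the claim.

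The main obstacle is careful bookkeeping of which quantities are truly constant in the optimization variables. I must verify that both $\entropy{\pRY}$ and the baseline cross-entropy $\expect_{\pRY}\log_2\pP(\vY \mid \vTh_0)$ are independent of $\pQThCPsi$ and $\pQPsi$ so that they may be absorbed into $C$ without shifting the optimizer; the first holds because $\pRY$ is the fixed realized-label distribution and the second because $\vTh_0$ is a fixed anchor, which is precisely the role the theorem assigns the baseline. A secondary point to check is that the prediction term sees $\pQThCPsi$ only through its marginal $\pQTh$, whereas the inference term retains the full conditional under its $\expect_{\pQPsi}$ average, so the marginalization steps must not silently conflate these two dependencies.
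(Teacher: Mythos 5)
Your proposal is correct and follows essentially the same route as the paper: you split the log of the product into three pieces (the chain rule of \Cref{cor:chain}, done by hand), reduce each by marginalization, substitute $\pP(\vPsi)=2^{-\fLen(\vPsi)}$ for the description term, and reintroduce the baseline $\vTh_0$ by adding and subtracting the constant cross-entropy (the paper's additivity/antisymmetry step), arriving at the same constant $C=\KL{\pRY}{\pP(\vY\mid\vTh_0)}$. The only difference is that the paper cites its information corollaries where you compute directly with integrals, which is immaterial.
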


\tAdd{The notation $\pRY$ is intended to emphasize that our rational belief in plausible labels is totally restricted to what we have observed after viewing the evidence.
In contrast, the arguments to the optimization objective, $\pQThCPsi$ and $\pQPsi$, do not depend on the data until optimization constrains them.}

We hold that the view of expectation taken in \Cref{thm:parsimony_optimization} is valid because it represents the known labels and the actual distributions that will be used in practice to compute predictions.
Further, this construction of information, rather than the reversed divergence $\KL{\pP(\vY, \vTh, \vPsi)}{\pRY \pQThCPsi \pQPsi}$, is necessary to avoid multiple infinities;
for each description $\vPsi$ with $\pQPsi=0$, the reversed divergence is infinite.
Moreover, we cannot avoid eliminating an infinite number of such cases from consideration.

The first term, prediction information, is the expected information gained about training data resulting from our belief in explanations.
Both secondary terms, inference information and description information, account for model complexity.
Anchoring predictive information to any fixed model $\pP(\vY \mid \vTh_0)$ allows us to coherently interpret label information as that which is gained relative to $\vTh_0$.
Any fixed predictive distribution suffices, including the prior predictive
\submitBlank
\begin{align*}
\pP(\vY \mid \vX) = \integ{d\vPsi\,d\vTh} \pP(\vY \mid \vX, \vTh) \pP(\vTh \mid \vPsi) \pP(\vPsi). 
\end{align*}
However, because the prior predictive may be difficult (or impossible) to compute, it is much simpler to use a na\"ive model $\vTh_0$. 
If we disregard the role of $\vPsi$ and only account for prediction information and inference information from prior belief $\pP(\vTh)$, i.e.~from the first two terms of the parsimonious inference objective,
then we recover a form of the Bayesian Occam's Razor \citep{Mackay1992a},
\submitBlank
\begin{align*}
& \info{\pRY}{\pP(\vY)}{\pP(\vY \mid \vTh_0)} = \log_2\!\left( \frac{\pP(\vYr)}{\pP(\vYr \mid \vTh_0)} \right) \\
&= \expect_{\pP(\vTh \mid \vYr)} \log_2\!\left(\frac{\pP(\vYr \mid \vTh)}{\pP(\vYr \mid \vTh_0)}\right) - \KL{\pP(\vTh \mid \vYr)}{\pP(\vTh)},
\end{align*}
provided we use the exact posterior $\pP(\vTh \mid \vYr)$ in place of $\pQThCPsi$.
Otherwise, we recover a variational inference objective that is equivalent to maximizing the Evidence Lower Bound (ELBO).
The Bayesian Occam's Razor also reveals the tradeoff between the information that explanations provide about our data and the complexity of those explanations,
but it leaves the provenance of $\pP(\vTh)$ unaddressed.

The description information terms show how our theory subsumes the Principle of Maximum Entropy.
If we were to disregard the critical role that the parsimonious hyperprior plays in controlling complexity, i.e.~dropping expected length,
we would be left with an optimization objective that drives increases in entropy within our chosen distribution of descriptions.
Doing so, however, would mean that long and complicated descriptions would be just as plausible as short and simple descriptions, as long as they are equally capable of explaining the data.
Correctly accounting for description length completes a rigorous formulation of Occam's Razor.

\DeclareDocumentCommand \sFeasible{} {\mathcal{F}}

Critically, \Cref{cor:model_inference,cor:hyper_inference} show that unconstrained optimization recovers, and is therefore consistent with, Bayesian inference and parsimonious rational belief.
As demonstrated in \Cref{sub:decision_trees}, some prior beliefs facilitate exact inference and easily allow us to take $\pQThCPsi =\pP(\vTh \mid \vYr, \vPsi)$.

\begin{corollary}
\label{cor:model_inference} \thmTitle{Optimality of Inference.}
Given a single description $\vPsi$ specifying prior belief $\pP(\vTh \mid \vPsi)$, the conditionally optimal distribution over models,
\submitBlank
\begin{align*}
\pQsThCPsi = \argmax_{\pQThCPsi}\quad
\expect_{\pQThCPsi} \info{\pRY}{\pP(\vY \mid \vTh)}{\pP(\vY \mid \vTh_0)} - \KL{\pQThCPsi}{\pP(\vTh \mid \vPsi)},
\end{align*}
is the posterior distribution $\pQsThCPsi = \pP(\vTh \mid \vYr, \vPsi)$.  
\end{corollary}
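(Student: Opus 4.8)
The plan is to reduce this functional maximization to the familiar statement that a Kullback--Leibler divergence is nonnegative and vanishes only at its own second argument. First I would expand the prediction-information term using the defining property of $\pRY$ given in \Cref{thm:parsimony_optimization}: since this belief assigns full probability to the realized labels $\vYr$, the view-of-expectation integral over $\vY$ collapses to a point evaluation,
\begin{align*}
\info{\pRY}{\pP(\vY \mid \vTh)}{\pP(\vY \mid \vTh_0)} = \log_2\!\left( \frac{\pP(\vYr \mid \vTh)}{\pP(\vYr \mid \vTh_0)} \right).
\end{align*}
With this substitution the objective becomes a single functional of $\pQThCPsi$, and I would gather both remaining terms into one integral over $\vTh$ against $\pQThCPsi$, combining the logarithms into $\log_2\!\big( \pP(\vYr \mid \vTh)\,\pP(\vTh \mid \vPsi) \,/\, \pP(\vYr \mid \vTh_0)\,\pQThCPsi \big)$.

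Next I would invoke Bayes' theorem to expose the posterior hidden in the numerator. Writing the model-class evidence as $\pP(\vYr \mid \vPsi) = \integ{d\vTh}\pP(\vYr \mid \vTh)\pP(\vTh \mid \vPsi)$, we have the identity $\pP(\vYr \mid \vTh)\,\pP(\vTh \mid \vPsi) = \pP(\vTh \mid \vYr, \vPsi)\,\pP(\vYr \mid \vPsi)$. Substituting this and splitting the logarithm, the evidence ratio $\pP(\vYr \mid \vPsi)/\pP(\vYr \mid \vTh_0)$ is independent of $\vTh$ and factors out of the integral; because $\pQThCPsi$ is normalized, it survives only as an additive constant. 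The objective then reads
\begin{align*}
& \expect_{\pQThCPsi}\info{\pRY}{\pP(\vY \mid \vTh)}{\pP(\vY \mid \vTh_0)} - \KL{\pQThCPsi}{\pP(\vTh \mid \vPsi)} \\
& \quad = -\KL{\pQThCPsi}{\pP(\vTh \mid \vYr, \vPsi)} + \log_2\!\left( \frac{\pP(\vYr \mid \vPsi)}{\pP(\vYr \mid \vTh_0)} \right).
\end{align*}

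Finally I would conclude with Gibbs' inequality: the divergence $\KL{\pQThCPsi}{\pP(\vTh \mid \vYr, \vPsi)}$ is nonnegative and vanishes if and only if the two distributions agree almost everywhere. Since the additive constant is free of the optimization variable, maximizing the objective is exactly equivalent to driving this divergence to zero, so the unique maximizer is $\pQsThCPsi = \pP(\vTh \mid \vYr, \vPsi)$, the posterior.

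The computation is short, so there is no deep obstacle; the one point that requires care is the normalization bookkeeping, namely that the evidence ratio may be pulled outside the integral as a constant only because $\pQThCPsi$ integrates to one. This is precisely what makes the KL reformulation preferable to a Lagrange-multiplier variational derivative, which would establish only stationarity of a critical point rather than the global optimality and uniqueness that the divergence argument delivers directly.
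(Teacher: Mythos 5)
Your proposal is correct and follows essentially the same route as the paper's proof: unpack the prediction information into $\log_2\!\left(\pP(\vYr \mid \vTh)/\pP(\vYr \mid \vTh_0)\right)$, combine both terms into a single integral against $\pQThCPsi$, apply Bayes' theorem to expose the posterior and the evidence ratio, and conclude from nonnegativity of the remaining KL divergence. The extra care you take with the normalization bookkeeping is implicit in the paper's ``unpack definitions'' step, so there is no substantive difference.
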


\begin{corollary}
\label{cor:hyper_inference} \thmTitle{Optimality of Hyper Inference.}
Applying the optimizer from \Cref{cor:model_inference} to the objective in \Cref{thm:parsimony_optimization} produces the second optimization problem
\submitBlank
\begin{align*}
\pQsPsi = \argmax_{\pQPsi}\quad
\expect_{\pQPsi} \log_2\!\left( \frac{\pP(\vYr \mid \vPsi)}{\pP(\vYr \mid \vTh_0)} \right) - \KL{\pQPsi}{\pP(\vPsi)}.
\end{align*}
The optimizer is the hyperposterior distribution, $\pQsPsi = \pP(\vPsi \mid \vYr)$.  
\end{corollary}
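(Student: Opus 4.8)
The plan is to substitute the conditional optimizer from \Cref{cor:model_inference} into the parsimony objective $\omega$ of \Cref{thm:parsimony_optimization} and then recognize the reduced functional as a constant minus a single divergence against the hyperposterior. The key structural observation is that the joint expectation $\expect_{\pQThCPsi \pQPsi}$ factors as $\expect_{\pQPsi}\expect_{\pQThCPsi}$, so the prediction-information and inference-information terms combine into $\expect_{\pQPsi}$ of the per-$\vPsi$ inner objective $\expect_{\pQThCPsi}\info{\pRY}{\pP(\vY \mid \vTh)}{\pP(\vY \mid \vTh_0)} - \KL{\pQThCPsi}{\pP(\vTh \mid \vPsi)}$, while the description terms do not involve $\pQThCPsi$ at all. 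Because the outer weight $\pQPsi$ is nonnegative, maximizing each inner objective pointwise over $\pQThCPsi$ maximizes the weighted average jointly, so I may legitimately insert $\pQsThCPsi = \pP(\vTh \mid \vYr, \vPsi)$ for every $\vPsi$. Since $\pRY$ assigns full probability to the observed labels, the predictive term collapses to $\log_2(\pP(\vYr \mid \vTh)/\pP(\vYr \mid \vTh_0))$, and applying Bayes' theorem together with sufficiency of $\vTh$ (so $\pP(\vYr \mid \vTh,\vPsi)=\pP(\vYr \mid \vTh)$) rewrites the inner objective as $\log_2(\pP(\vYr \mid \vPsi)/\pP(\vYr \mid \vTh_0)) - \KL{\pQThCPsi}{\pP(\vTh \mid \vYr,\vPsi)}$, whose optimal value at $\pQsThCPsi = \pP(\vTh \mid \vYr,\vPsi)$ is exactly the log-evidence ratio.

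Next I would reduce the description-information terms. Using $\pP(\vPsi)=2^{-\fLen(\vPsi)}$ from \Cref{cor:probability_from_length}, I rewrite $\fLen(\vPsi)=-\log_2\pP(\vPsi)$ and combine $-\expect_{\pQPsi}\fLen(\vPsi)+\entropy{\pQPsi} = \expect_{\pQPsi}\log_2(\pP(\vPsi)/\pQPsi) = -\KL{\pQPsi}{\pP(\vPsi)}$. Assembling the two reductions gives precisely the stated functional $\expect_{\pQPsi}\log_2(\pP(\vYr \mid \vPsi)/\pP(\vYr \mid \vTh_0)) - \KL{\pQPsi}{\pP(\vPsi)}$, which establishes that applying the model optimizer produces the second optimization problem.

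To identify the maximizer I would repeat the completion-of-the-square argument one level up. Writing both surviving terms under a single $\pQPsi$-expectation yields $\expect_{\pQPsi}\log_2(\pP(\vYr \mid \vPsi)\pP(\vPsi)/(\pP(\vYr \mid \vTh_0)\pQPsi))$, and applying Bayes' theorem in the form $\pP(\vYr \mid \vPsi)\pP(\vPsi)=\pP(\vPsi \mid \vYr)\pP(\vYr)$ separates this into the $\vPsi$-independent constant $\log_2(\pP(\vYr)/\pP(\vYr \mid \vTh_0))$ minus $\KL{\pQPsi}{\pP(\vPsi \mid \vYr)}$. The constant does not affect the argmax, and Kullback-Leibler divergence is nonnegative and vanishes exactly when its arguments coincide, so the unique maximizer is $\pQsPsi=\pP(\vPsi \mid \vYr)$.

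The main obstacle I anticipate is not any single calculation but the justification in the first paragraph that the per-$\vPsi$ model optimizer can be substituted before carrying out the outer problem. One must verify that the prediction and inference terms genuinely decouple across descriptions and that the nonnegativity of $\pQPsi$ makes pointwise inner optimization globally optimal, so that no residual coupling to $\vTh$ survives in the outer expectation of the evidence ratio. Once this factorization is set up correctly, everything else is two applications of Bayes' theorem and the identity $\fLen=-\log_2\pP(\vPsi)$, both of which are routine.
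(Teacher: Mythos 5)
Your proposal is correct and its decisive step---writing both surviving terms under a single $\pQPsi$-expectation, applying Bayes' theorem in the form $\pP(\vYr \mid \vPsi)\pP(\vPsi) = \pP(\vPsi \mid \vYr)\pP(\vYr)$, and splitting the result into a $\vPsi$-independent constant minus $\KL{\pQPsi}{\pP(\vPsi \mid \vYr)}$---is exactly the paper's proof. Your first two paragraphs additionally verify that substituting the conditional optimizer into the parsimony objective yields the stated reduced problem (including the pointwise-optimization justification), which the paper folds into the corollary statement rather than re-deriving, but this is consistent with and already implied by \Cref{thm:parsimony_optimization} and \Cref{cor:model_inference}.
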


\tAdd{Yet, unconstrained optimization of $\pQPsi$, subject to a Turing-complete interpreter, would need to explore unlimited varieties of programs and model classes.
Instead, we can restrict the support of prior belief and posterior approximations to a feasible set, $\sFeasible = \left\{ \pQThCPsi \pQPsi \right\}$,
and \Cref{thm:parsimony_optimization} still provides a consistent framework to evaluate and compare the utility of such restrictions.}

The parsimony objective also allows us to understand and quantify memorization of training data in \Cref{cor:memorization} as a bound on the increase in model complexity that is required to achieve increased agreement between predictions and our training data.
\tAdd{This bound also holds when we restrict $\sFeasible$.}

\begin{corollary}
\label{cor:memorization} \thmTitle{Quantifying Memorization.}
We can write the combined model complexity terms as $\vChi[\pQThPsi] =  \KL{\pQThPsi}{\pP(\vTh, \vPsi)}$
and let $\pQsThPsi$ be the constrained optimizer of the parsimony objective, restricted to a given feasible set $\sFeasible = \left\{ \pQThPsi \right\}$.
Let the optimal predictions be written as
\submitBlank
\begin{align*}
\pQsY = \integ{d\vPsi\, d\vTh} \pP(\vY \mid \vTh) \pQsThPsi.
\end{align*}
Every feasible alternative $\pQThPsi$ must satisfy
\submitBlank
\begin{align*}
\vChi[\pQThPsi] - \vChi[\pQsThPsi] \geq \expect_{\pQThPsi} \info{\pRY}{\pP(\vY \mid \vTh)}{\pQsY},
\end{align*}
showing that any increased agreement with training data can only be achieved by a still greater increase in model complexity. 
\end{corollary}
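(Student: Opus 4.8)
The plan is to show that the parsimony objective collapses into a clean ``prediction information minus combined complexity'' form and that the asserted bound is then, after a baseline cancellation, just Jensen's inequality. First I would rewrite the last three terms of $\omega$ in \Cref{thm:parsimony_optimization} as $-\vChi[\pQThPsi]$. Substituting the factorizations $\pQThPsi = \pQThCPsi\,\pQPsi$ and $\pP(\vTh,\vPsi)=\pP(\vTh\mid\vPsi)\,\pP(\vPsi)$ into $\vChi[\pQThPsi]=\KL{\pQThPsi}{\pP(\vTh,\vPsi)}$ and splitting the logarithm gives $\vChi[\pQThPsi]=\expect_{\pQPsi}\KL{\pQThCPsi}{\pP(\vTh\mid\vPsi)}+\KL{\pQPsi}{\pP(\vPsi)}$. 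Using the parsimonious hyperprior $\pP(\vPsi)=2^{-\fLen(\vPsi)}$, i.e. $\fLen(\vPsi)=\log_2(1/\pP(\vPsi))$, the second divergence equals $\expect_{\pQPsi}\fLen(\vPsi)-\entropy{\pQPsi}$. Hence the inference-plus-description terms of $\omega$ are exactly $-\vChi[\pQThPsi]$, and writing $P[\pQThPsi]$ for the prediction-information term we obtain $\omega=P[\pQThPsi]-\vChi[\pQThPsi]$.

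Next I would invoke constrained optimality. Since $\pQsThPsi$ maximizes $\omega$ over $\sFeasible$ and $\pQThPsi$ is an arbitrary feasible competitor, $P[\pQsThPsi]-\vChi[\pQsThPsi]\ge P[\pQThPsi]-\vChi[\pQThPsi]$, which rearranges to $\vChi[\pQThPsi]-\vChi[\pQsThPsi]\ge P[\pQThPsi]-P[\pQsThPsi]$. It therefore suffices to lower-bound the right-hand side by $\expect_{\pQThPsi}\info{\pRY}{\pP(\vY\mid\vTh)}{\pQsY}$. Because $\pRY$ assigns full probability to the realized labels, the inner information functional of \Cref{thm:info} collapses to evaluation at $\vYr$; in particular the baseline $\log_2\pP(\vYr\mid\vTh_0)$ is a constant whose expectation is identical under any normalized distribution, so it cancels and $P[\pQThPsi]-P[\pQsThPsi]=\expect_{\pQThPsi}\log_2\pP(\vYr\mid\vTh)-\expect_{\pQsThPsi}\log_2\pP(\vYr\mid\vTh)$.

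Finally I would expand the target. The same point-mass collapse yields $\expect_{\pQThPsi}\info{\pRY}{\pP(\vY\mid\vTh)}{\pQsY}=\expect_{\pQThPsi}\log_2\pP(\vYr\mid\vTh)-\log_2\pQs(\vYr)$, where $\log_2\pQs(\vYr)$ is constant in $\vTh$ and $\vPsi$. Subtracting this from the bound of the previous paragraph, the shared term $\expect_{\pQThPsi}\log_2\pP(\vYr\mid\vTh)$ cancels and the claim reduces to $\log_2\pQs(\vYr)\ge\expect_{\pQsThPsi}\log_2\pP(\vYr\mid\vTh)$. Since the definition of $\pQsY$ gives $\pQs(\vYr)=\expect_{\pQsThPsi}\pP(\vYr\mid\vTh)$, this is exactly Jensen's inequality for the concave map $\log_2$, which closes the argument.

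The bookkeeping in the first two paragraphs is routine; the main obstacle is conceptual, namely recognizing that this seemingly strong memorization statement is, once the $\vTh_0$ baseline cancels, nothing more than Jensen's inequality applied to the optimal predictive mixture $\pQsY$. The only care required is to confirm that $\pQsY$ is a genuine normalized distribution over $\vY$ (so the functional with view $\pRY$ is well defined) and that $\pQThPsi\in\sFeasible$, which guarantees the optimality inequality applies even under a restricted feasible set.
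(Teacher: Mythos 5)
Your proposal is correct and uses the same two essential ingredients as the paper's proof---the optimality of $\pQsThPsi$ over the feasible set and Jensen's inequality applied to the mixture $\pQsY$---merely applied in the opposite order and with the information functionals unpacked to explicit point-mass evaluations at $\vYr$ rather than via the antisymmetry and additivity corollaries. The explicit verification that the inference and description terms of $\omega$ combine into $-\vChi[\pQThPsi]$ is a useful bit of bookkeeping the paper leaves implicit, but the argument is otherwise the same.
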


\tAdd{\Cref{sub:polynomial_regression} includes a visualization of this complexity tradeoff, \Cref{fig:polynomial_memorization} corresponding to potential polynomial representations for a regression model.
As our experiments demonstrate, restricting our attention to classes of simple descriptions provides a tractable means to discover models and control complexity.}


\section{Implementation}
\label{sec:implementation}

The parsimony objective acts on opportunities for compression to reduce the complexity of our belief over models through both the description of prior belief and the information gained due to inference.
While there are many ways to encode the concepts that we need to articulate prior belief, compression is only possible if the interpreter admits a range of code lengths.
Consequently, it is important to review some efficient encodings, capable of expressing increasing degrees of specificity with longer codes, that are needed by our prototype implementations.
Then we discuss our algorithms for polynomial regression followed by decision trees.

\subsection{Useful Encodings}
\label{sub:encodings}

Sometimes we need to identify one of multiple states without any principle that would allow us to break the symmetry among potential outcomes.
For example, our decision tree algorithm requires a feature dimension to be specified from $n$ possibilities.
Laplace's principle of insufficient reason indicates that our encoding should not break symmetry among hypothetical permutations of the features.
We can easily handle this case by representing each state with a single symbol from an alphabet of $n$ possibilities.

As the cardinality of the set increases, however, the information provided by realizing a symbol increases logarithmically.
Thus, this approach cannot hold when we have countably infinite sets, such as with integers or rational numbers, or information would diverge.
Instead, we must break symmetry with either some notion of magnitude, some notion of precision, or both.

\subsubsection{Nonnegative integers}

Rissanen's universal prior over integers \citep{Rissanen1983} can be derived by counting outcomes over binary sequences of increasing length.
Provided the sequence length is known, any nonnegative integer $z$ can be encoded with $\lfloor \log_2(z+1) \rfloor$ binary digits,
as shown in \Cref{fig:rissanen_1}.
Yet, the sequence length is also a nonnegative integer, thus a recursive encoding of arbitrary nonnegative integers will have length approaching 
\submitBlank
\begin{align*}
\log_2^*(z) =& \lfloor \log_2(z+1) \rfloor + \lfloor \log_2\left( \lfloor \log_2(z+1) \rfloor + 1 \right) \rfloor\\
& + \lfloor \log_2\left( \lfloor \log_2\left(  \lfloor \log_2(z+1) \rfloor +1 \right) \rfloor + 1\right) \rfloor +\cdots.
\end{align*}

\begin{table}[h]
\centering
\begin{tabular}{| r | r | r | r | r | r | r | r | r | r | r |}
\thickhline 
Sequence &  & 0 & 1 & 00 & 01 & 10 & 11 & 000 & 001 & $\cdots$ \\ \thickhline
$z$ & 0 & 1 & 2 & 3 & 4 & 5 & 6 & 7 & 8 & $\cdots$ \\ \thickhline
\end{tabular}
\caption{\small
Enumeration of binary sequences of increasing length.
}
\label{fig:rissanen_1}
\end{table}

\uAdd{The Elias $\gamma$ coding \citep{Elias1975} simply represents the sequence length with a negated unary prefix.
Elias $\delta$ codes add one recursion, thus representing the sequence length with a $\gamma$ code.
Elias $\omega$ codes allow for arbitrary recursions by building up positive integers that either represent the length of the next sequence or the final outcome.
Decoding begins with the initial value $N=1$.
If the next bit is $0$, then $N$ is the final value.
Otherwise, the leading $1$ followed by $N$ bits encodes the updated value of $N$.
The process repeats until the next segment has a leading $0$.
We can take $z=N-1$ for nonnegative integers.}

\uAdd{In practice, however, we do not need representations for arbitrarily large integers and we can obtain more efficient codes with a limited maximum representation.
For example, we can start with a single bit to represent the length of the subsequent code segment and iterate representations from \Cref{fig:rissanen_1} a predetermined number of times.
If we know how many recursions are needed to represent a maximum integer, we obtain a code that approximates Rissanen's universal prior.}
\Cref{fig:rissanen_2} shows how the first few Rissanen codes are formed.
This encoding becomes very efficient for large integers, but the number of length recursions must be set high enough.

\begin{table}[h]
\centering
\begin{tabular}{| r | r | r | r | r | r | r | r | r || r | r || r | r || r | r || r |}
\thickhline 
$\vPsi_0$ & 0 & 1 & 1 & 1   & 1   & 1   & 1 \\ \thickhline
$z_0$        & 0 & 1 & 1 & 1   & 1   & 1   & 1 \\ \thickhline
$\vPsi_1$ &    & 0 & 0 & 1   & 1   & 1   & 1  \\ \thickhline
$z_1$        & 0 & 1 & 1 & 2   & 2   & 2   & 2 \\ \thickhline
$\vPsi_2$ &    & 0 & 1 & 00 & 01& 10 & 11 \\ \thickhline
$z_2$    & 0 & 1 & 2 & 3   & 4   & 5   & 6  \\ \thickhline
\end{tabular}
\caption{\small
The first sequence $\vPsi_0$ has an implied length of 1 bit.
The represented outcome $z_0$ indicates the length of $\vPsi_1$ and so on.
$\text{Rissanen}_i$ codes are formed by concatenation $(\vPsi_0, \vPsi_1, \ldots, \vPsi_i)$.
With three length recursions, numbers 0 through 126 are compressed to use between 1 and 9 bits.
}
\label{fig:rissanen_2}
\end{table}

\uAdd{We can also obtain good compression by using a single symbol to indicate the length of the remaining sequence.
Length-symbol codes also approximate the scaling invariance of Jeffrey's prior.
See \Cref{sec:ardprior} for further discussion of this property.
\Cref{fig:unary} provides a comparison.
Although we show codes with a 2-bit length symbol for easy comparison to other binary codes, the length symbol does not need to have a binary representation in general.}

\DeclareDocumentCommand \norep{}{n.r.}
\begin{table}[h]
\centering\uAdd{
\begin{tabular}{| r | r | r | r | r | r | r | r | r |}
\thickhline 
$z$ & 0 & 1 & 2 & 3 & 4 & 5 & 6 & 7 \\ \thickhline
$\text{Unary}$ & 0 & 10 & 110 & 1110 & 11110 & 111110 & 1111110 & 11111110 \\ \thickhline
$\text{Elias } \gamma$ & 1 & 010 & 011 & 00100 & 00101 & 00110 & 00111 & 0001000 \\ \thickhline
$\text{Elias } \delta$ & 1 & 0100 & 0101 & 01100 & 01101 & 01110 & 01111 & 00100000 \\ \thickhline
$\text{Elias } \omega$ & 0 & 100 & 110 & 101000 & 101010 & 101100 & 101110 & 1110000 \\ \thickhline
$\text{Rissanen}_1$ & 0 &  10 & 11 & \norep & \norep & \norep & \norep & \norep \\ \thickhline
$\text{Rissanen}_2$ & 0 &  100 & 101 & 1100 & 1101 & 1110 & 1111 &\norep \\ \thickhline
$\text{Rissanen}_3$ & 0 &  1000 & 1001 & 10100 & 10101 & 10110 & 10111 & 1100000 \\ \thickhline
$\text{Length-symbol}$ & 00 & 010 & 011 & 1000 & 1001 &1010 & 1011 & 11000 \\ \thickhline
\end{tabular}
\caption{\small
Nonnegative integers with unary codes, Elias codes, Rissanen codes, and a 2-bit length-symbol code.
Integers that have no representation are indicated by \norep
}}
\label{fig:unary}
\end{table}

\subsubsection{Binary Fractions}
\label{sub:binary_fractions}

It will also be useful to represent a dense distribution of fractions on the open unit interval $\vQ \in (0,1)$, thus allowing us to approximate any real number to arbitrary precision by a variety of potential transformations.
Binary fractions, with a denominator that is an integer power of 2 and a numerator that is odd, provide such a set with a convenient encoding.
These fractions can be written as
\submitBlank
\begin{align*}
q = \frac{2 i - 1}{2^{z+1}}
\quad\text{where}\quad
i \in [2^z]
\quad\text{and}\quad
z \in \mathbb{Z}_{\geq 0}.
\end{align*}

If we desire all fractions of a specific precision, corresponding to a fixed $z$, to have the same encoding length, then the numerator may be regarded as a single symbol with $2^z$ outcomes or $z$ bits.
We must also represent the precision $z$ with one of the integer encodings above.
\uAdd{As with the length-symbol codes, we can also represent $z$ with a single symbol that indicates the number of numerator bits to read.}
\Cref{fig:open_frac} shows some examples.

We can then translate and scale $q$ to represent an angle on the real Riemann circle, the corresponding real numbers are $r = \tan(\pi(q - 1/2))$,
but other choices are also possible, such as inverting the normal cumulative distribution function $r = \sqrt{2} \erf^{-1}(2q-1)$.
Multiplying the result by some $\sigma>0$ would set any desirable scale of outcomes.

\begin{table}[h]
\centering\uAdd{
\begin{tabular}{| r | r | r | r | r | r | r | r | r | r |}
\thickhline 
$q$                 & $1/2$ & $1/4$ & $3/4$ & $1/8$ & $3/8$   & $5/8$  & $7/8$  & $1/16$ & $\cdots$ \\ \thickhline
$\text{Code}$ & 00        & 010     & 011    & 1000 & 1001 & 1010 & 1011 & 11000 & $\cdots$ \\ \thickhline
\end{tabular}
\caption{\small
Leading binary fractions on the open unit interval with a 2-bit encoding of $z$.
}}
\label{fig:open_frac}
\end{table}

\subsection{Polynomial Regression}
\label{sub:polynomial_regression}

Our regression prototype directly encodes a polynomial with a description of coefficients and captures uncertainty with a hyperposterior ensemble.
Since any $n$th degree polynomial can be written as a linear combination of $n+1$ basis functions of ascending degree, we first need to identify the degree of polynomial.
\uAdd{Our experiments show that a length-symbol encoding serves this purpose well.}
Coefficients are represented in the Chebyshev basis.
Since critical points equioscillate in this basis, the corresponding polynomial coefficients are interpretable as the length scales of oscillation.
Because we expect all $n+1$ coefficients to take nontrivial values, the encoding reserves a variable-length segment for each coefficient, rather than attempting to use a sparse encoding.
Still, natural sparsity will result from binary fractional codes, representing angles on the Riemann circle, after transforming them to the corresponding real numbers.

\begin{algorithm}[h!]
\caption{Parsimonious Polynomial Regression Gibb's Sampler}
\label{alg:pars_polyreg}
\fontsize{10}{16}\selectfont
\begin{algorithmic}[1]
\Require Vectors $\vX$ and $\vY$ provide abscissas and ordinates, respectively,
with $\vY$ scaled so that the intrinsic stochasticity of the process is $\sigma=1$.
\tAdd{Generate $n$ samples from polynomials using at most $b$ Chebyshev basis functions.}
\Ensure $\mPsi = \{\vPsi\}$ is an ensemble of hyperposterior polynomial descriptions $\vPsi_i \sim \pP(\vPsi \mid \mX, \vY)$.
\Function{ParsimoniousRegression}{$\vX, \vY, n, b$}
\State Initialize $\vPsi$ to the zero polynomial
\For{each sample iteration $i = 1, 2, \dots, n$}
\State Generate a random permutation of the basis functions.
\For{each permuted coefficient $j = 1, 2, \dots, b$}
\State Identify the leading nonzero coefficient $k$ in $\vPsi$.
\State Form tensor product of all representable perturbations over both $j$ and $k$.
\State Update $\vPsi$ by sampling the hyperposterior, restricted to these perturbations.
\EndFor
\State \tAdd{Add $\vPsi$ to the hyperposterior ensemble $\mPsi$.}
\EndFor
\EndFunction
\end{algorithmic}
\end{algorithm}

\Cref{alg:pars_polyreg} samples the hyperposterior using a nonreversible sequence of reversible samples over all representable polynomial coefficients.
\tAdd{
We regard each sample of polynomial coefficients as a Dirac delta distribution concentrated at a single polynomial.
This is equivalent to treating the complexity hyperprior as an ordinary prior over polynomial descriptions.
Because our encoding length is most sensitive to the degree, our sampler proposes all joint perturbations of the leading nonzero and each other coefficient in a randomly permuted order.}
All other coefficients are held fixed in each proposal set.
For each coefficient, the sampler considers all binary fractions with $z \leq 4$.

\begin{figure}[h!]
	\centering
	\includegraphics[width=0.825\textwidth]{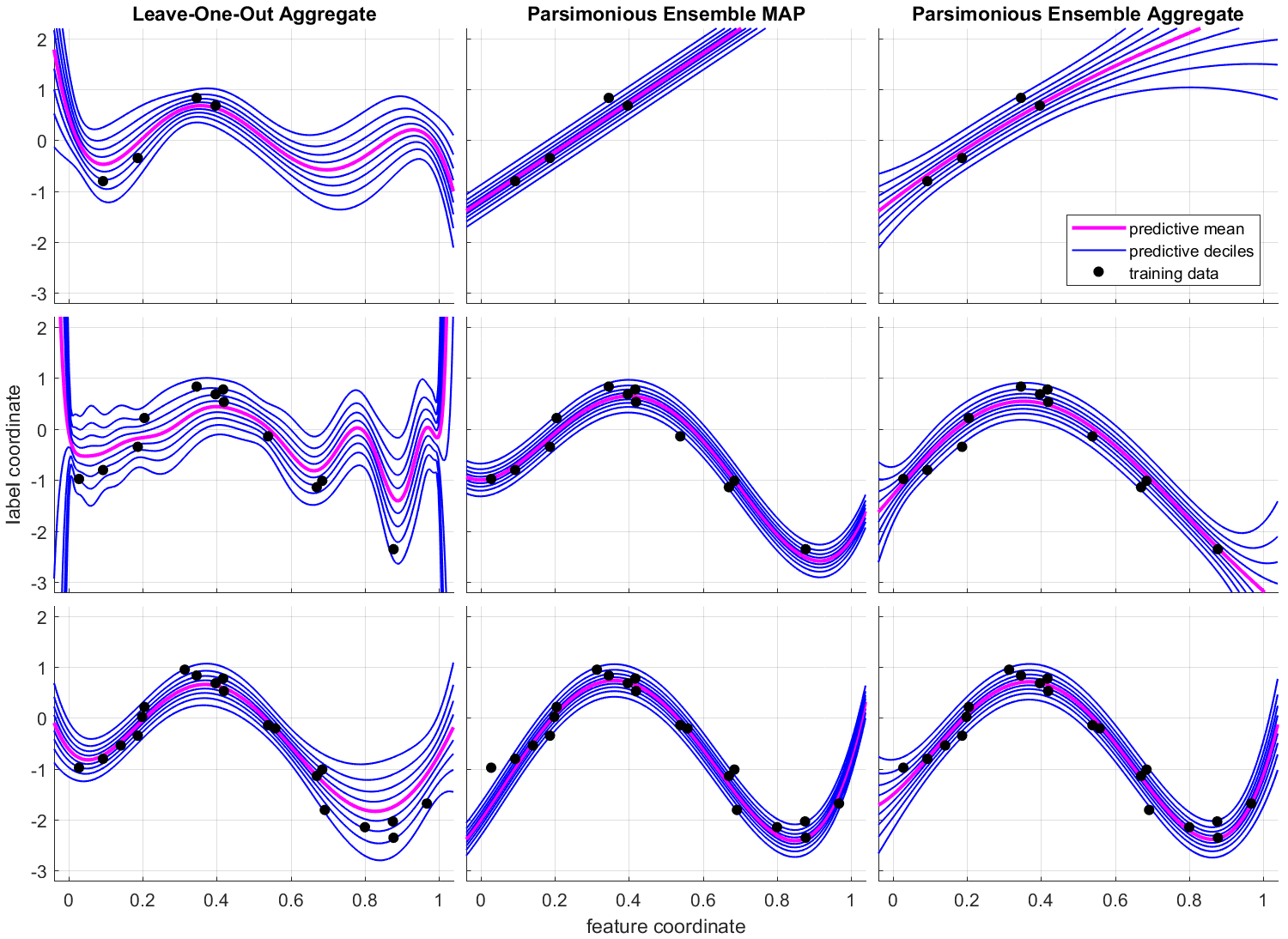}
	\caption{\vspace{-0.1in}\small
	Regression experiments comparing leave-one-out cross validation (left column), the hyper-MAP (middle column), and the hyperposterior aggregated over 50 samples (right column).
	All data come from the same ground truth.
	The hyper-MAP is consistently simpler than the leave-one-out aggregate.
	The hyperposterior aggregate naturally captures extrapolation risk, increasing uncertainty as we deviate from data.
	More data allow modest increases in complexity to reduce uncertainty.}
  \label{fig:regression}
\end{figure}

\tAdd{\Cref{fig:regression} compares the complexity suppression of leave-one-out cross-validation with our results from \Cref{alg:pars_polyreg} using 20th degree polynomials.
In order to provide a fair comparison, we generate 21 full leave-one-out ensembles, corresponding to each polynomial degree, and then select the ensemble with the best average over holdout log-likelihoods.
Thus, both approaches explore the same model families, wherein all coefficients above a certain degree are zero.
We see that, if the hyperparameter search covers a range of dimensions, the standard approach achieves some, albeit limited, success in identifying relatively low complexity ensembles.
}\uAdd{We also tested unary, Elias $\gamma$, and sufficient Rissanen codes for both the polynomial degree and corresponding binary fractions using the data in the second row of \Cref{fig:regression}.
Specifically, we used $\text{Rissanen}_3$ codes for the polynomial degree and $\text{Rissanen}_2$ for the binary fraction precision.
The resulting aggregate parsimony objectives are compared in \Cref{fig:reg_codes}.
}

\begin{table}[h]
\centering\uAdd{
\begin{tabular}{| r | r | r | r | r |}
\thickhline 
Polynomial encoding & Unary & Elias $\gamma$ & Rissanen & Length-symbol \\ \thickhline
Parsimony objective (bits) & $103.0$ & $104.5$ & $101.9$ & $104.8$ \\ \thickhline
\end{tabular}
\caption{\small
Length-symbol codes give the optimal parsimony objective for this dataset.
}}
\label{fig:reg_codes}
\end{table}

\begin{figure}[h!]
	\centering
	\includegraphics[width=0.825\textwidth]{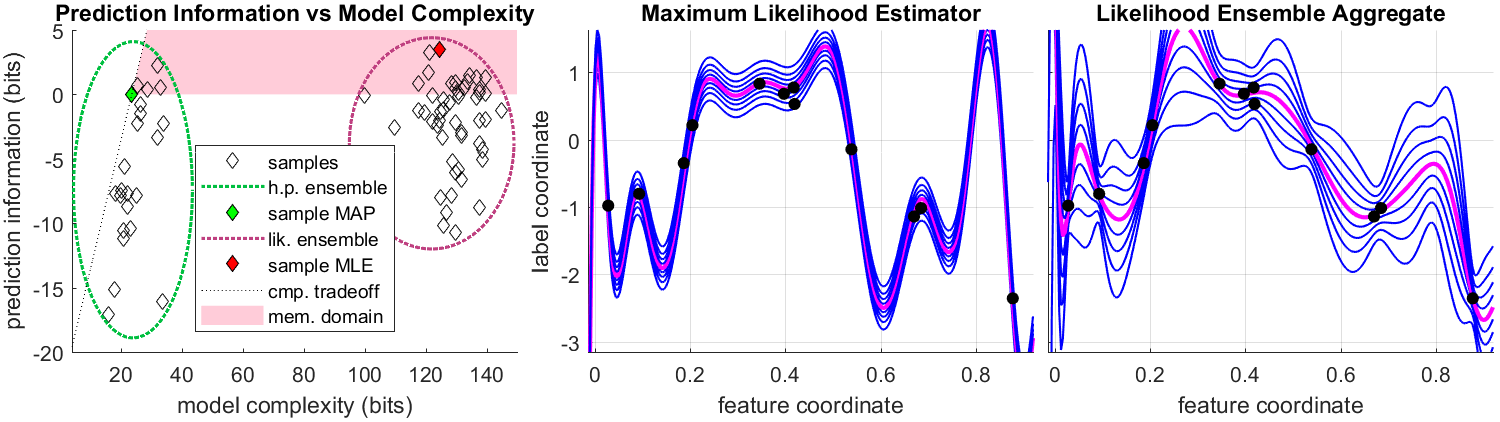}
	\caption{\vspace{-0.1in}\small
	Memorization demonstration with regression.
	We plot prediction information against model complexity (left) for models obtained from \Cref{alg:pars_polyreg} (left cluster)
	and likelihood samples from the same algorithm by disregarding generalized length (right cluster).
	Likelihood samples have much longer descriptions, some achieving better agreement with the data (memorization domain).
	Neither the MLE (middle) nor the likelihood ensemble (right) perform well.
	\vspace{-0.1in}
	}
  \label{fig:polynomial_memorization}
\end{figure}

\tAdd{\Cref{fig:polynomial_memorization} demonstrates memorization by applying \Cref{alg:pars_polyreg} to the same data in the second row of \Cref{fig:regression}, but replacing the hyperprior with a uniform distribution,
effectively sampling the likelihood by disregarding generalized length.
Plotting prediction information against model complexity (left) for both ensembles shows that the likelihood ensemble has much higher complexity.
If we constrain feasible beliefs to only single samples from either of the ensembles,
then the MAP defines the complexity tradeoff limit in \Cref{cor:memorization}, as well as the corresponding memorization domain in which
models may gain increased agreement with the data at the cost of an even greater increase in complexity.
Memorization is worst at the Maximum Likelihood Estimator (MLE), found within the likelihood ensemble.
The corresponding predictions (middle) closely fit the data.
Predictions from the likelihood ensemble (right) show increased uncertainty, but still violate Occam's Razor,
thus underscoring the central role of generalized length in suppressing memorization.}

\subsection{Decision Trees}
\label{sub:decision_trees}
\DeclareDocumentCommand \nLabel{} {\ell}
\DeclareDocumentCommand \nFeature{} {k}
\DeclareDocumentCommand \iLabel{} {y}
Decision trees predict discrete classifications, labels, by evaluating a sequence of binary decisions.
Each case in our training dataset is represented by both a feature vector $\vX$, with $\nFeature$ components that are each comparable to a threshold, and an enumerated label $y \in [\nLabel]$, where $\nLabel$ is the number of labels.
Evaluation begins at the root note, representing the axis-aligned bounding box of potential features.
The node specifies a feature dimension and a comparison threshold serving to partition the feature domain into two components, the left and right child nodes.
The comparison outcome indicates membership and the process iterates so that a sequence of binary decisions filters each case through a series of increasingly restrictive partitions,
each of which is intended to simplify the classification problem.
This filtration terminates at a leaf node that specifies either a single label or, more generally, probabilities over all labels.

Decision trees are trained using a recursive process that also begins at the root node.
We take the set of training cases that are members of a given node and we must either construct a branch structure or halt splitting and finalize label probabilities.
The conventional procedure evaluates every potential splitting outcome with some utility function and then chooses the optimizer.
While a wide variety of utility functions are used in practice,
a standard information-theoretic approach maximizes the reduction in entropy due to the splitting.

Let $\vC_{\iLabel}$ represent the count of training cases with the label $y$ that fall within a given node domain.
If the node were a leaf, the frequentist approach to predicting label probabilities would use the sample mean  
\submitBlank
\begin{align*}
\vMu_{\iLabel} = \frac{\vC_{\iLabel}}{c}
\quad\text{where}\quad
c = \sum_{\iLabel=1}^{\nLabel} \vC_{\iLabel}. 
\end{align*}
We denote the corresponding variables for hypothetical left and right child nodes using superscripts, e.g.~$\vC_{\iLabel}^{(L)}$ and $\vC_{\iLabel}^{(R)}$, respectively.
The reduction in entropy associated with a potential splitting, weighted by the fraction of cases that appear within the respective domains of each child, is
\submitBlank
\begin{align*}
& \Delta S =
\frac{c^{(L)}}{c} \sum_{\iLabel=1}^{\nLabel} \vMu_{\iLabel}^{(L)} \log_2( \vMu_{\iLabel}^{(L)} ) + \frac{c^{(R)}}{c} \sum_{\iLabel=1}^{\nLabel} \vMu_{\iLabel}^{(R)} \log_2( \vMu_{\iLabel}^{(R)} ) - \sum_{\iLabel=1}^{\nLabel} \vMu_{\iLabel} \log_2( \vMu_{\iLabel} ). 
\end{align*}
The splitting that maximizes $\Delta S$ is accepted and the resulting child nodes are trained recursively until no further reduction is possible, i.e.~when a node contains only cases of a single label.

Bootstrap aggregation constructs an ensemble of decision trees, a random forest, by resampling the dataset.
This consists of forming a new dataset, the same size as the original, by sampling the original dataset uniformly with replacement.
Predictions are then aggregated by taking the average over the ensemble.

\tAdd{Our approach encodes each decision tree as a state of prior belief on the corresponding partition of feature coordinates.
We describe the partition recursively using an encoding for each node in the binary search tree.
Each node begins by specifying whether it is a leaf or branch with 1 bit.
If the node is a branch, a symbol from $\nFeature$ possibilities gives the feature dimension used in the comparison, thus contributing $\log_2(\nFeature)$ bits to the generalized length.
Since we can translate and scale features in the given dimension to the unit interval $[0,1]$,
we can represent any splitting threshold as a binary fraction on the open unit interval.
Note that it is never useful to split at either $0$ or $1$.
The encoding continues by describing the left and right children.}

\tAdd{If the node is a leaf, its feature partition has an independent prior, a flat Dirichlet distribution, over the simplex of all coherent label probabilities.}
Let $\vTh$ represent a vector of label probabilities within a single leaf.
We can perform exact inference using label counts $\vC_{\iLabel}$ to recover Laplace's rule of succession
\submitBlank
\begin{align*}
\expect_{\pP(\vTh \mid \vYr)} \left[ \vTh_{\iLabel} \right] = \frac{\vC_{\iLabel} + 1}{c + \nLabel},
\end{align*}
which is the posterior-predictive distribution for labels of new data that land within this leaf.

\tAdd{Our parsimonious decision trees are generated using a recursive process that begins by calling \Cref{alg:pars_node} on the full training dataset to form the root node.
The remaining structure is generated by sampling both a feature dimension and threshold or by halting to form a leaf node that infers label probabilities as above.
If the structure splits, then data are partitioned accordingly, and the process repeats on the left and right children.}

\begin{algorithm}
\caption{Parsimonious Node Construction for Decision Trees}
\label{alg:pars_node}
\fontsize{10}{16}\selectfont
\tAdd{
\begin{algorithmic}[1]
\Require $\mX$ is a matrix of features and $\vY$ is the corresponding vector of labels for data in this node's partition.
\tAdd{We sample an approximate annealed hyperposterior, controlled by an annealing schedule $\vAlpha$ and recursion depth $d$.}
\Ensure Sample a node description $\vPsi$ with proposal probability $\pS(\vPsi)$ and unnormalized hyperposterior $\pP(\vPsi \mid \mX, \vY)$.
\Function{$\left[\vPsi,\, \pS(\vPsi),\, \pP(\vPsi \mid \mX, \vY) \right]$ = ParsimonyNode}{$\mX, \vY, \vAlpha, d$}
\State Let $\vPsi_0$ describe this node as a leaf.
\State Compute the likelihood $\pP(\vY \mid \vPsi_0)$ from a flat Dirichlet prior over all label probabilities.
\State Enumerate every possible feature domain splitting for this node as $i \in [n]$.
\For{\label{line:pars_node_for} each splitting $i = 1, 2, \dots, n$}
	\State Let $\vPsi_i$ describe this node as a branch. 
	\State Partion data with the splitting, $(\mX^{(L)}, \vY^{(L)})$ and $(\mX^{(R)}, \vY^{(R)})$.
	\State \label{line:apprx_post} Approximate the likelihood assuming both children are independent leaf nodes
\begin{align*}
	\pP(\vY \mid \vPsi_i) \approx \pP(\vY^{(L)} \mid \vPsi_i)\pP(\vY^{(R)} \mid \vPsi_i).
\end{align*}
\EndFor
\State Sample $\vPsir$ from $\pS(\vPsi_i) \propto \pP(\vY \mid \vPsi_i)^{\vAlpha_d} \pP(\vPsi_i)$ over $i = 0, 1, \ldots, n$.
\If{$\vPsir$ is a branch node}
	\State Partion data accordingly as $(\mX^{(L)}, \vY^{(L)})$ and $(\mX^{(R)}, \vY^{(R)})$.
	\State Recursively construct left and right child nodes as
\begin{align*}
	\left[\vPsi^{(L)},\, \pS(\vPsi^{(L)}),\, \pP(\vPsi^{(L)} \mid \mX^{(L)}, \vY^{(L)}) \right] & = \textbf{ParsimonyNode}(\mX^{(L)}, \vY^{(L)}, \vAlpha, d+1)\quad\text{and}\\
	\left[\vPsi^{(R)},\, \pS(\vPsi^{(R)}),\, \pP(\vPsi^{(R)} \mid \mX^{(R)}, \vY^{(R)}) \right] & = \textbf{ParsimonyNode}(\mX^{(R)}, \vY^{(R)}, \vAlpha, d+1).
\end{align*}
	\State Concatenate descriptions, $\vPsi = \left[ \vPsir,\; \vPsi^{(L)}, \vPsi^{(R)} \right]$.
	\State Compose sample probabilities, $\pS(\vPsi) = \pS(\vPsir) \pS(\vPsi^{(L)})\pS(\vPsi^{(R)})$. 
	\State Compose the hyperposterior
\begin{align*}
	\pP(\vPsi \mid \mX, \vY)  = \pP(\vPsi^{(L)} \mid \mX^{(L)}, \vY^{(L)})  \pP(\vPsi^{(R)} \mid \mX^{(R)}, \vY^{(R)})  \pP(\vPsir).
\end{align*}
\Else{\;($\vPsir$ is the leaf node)}
	\State Set $\vPsi = \vPsir$ and $\pS(\vPsi) = \pS(\vPsir)$.  
	\State Compute the unnormalized hyperposterior, $\pP(\vPsi \mid \mX, \vY)  = \pP(\vY \mid \vPsir) \pP(\vPsir)$.
\EndIf
\EndFunction
\end{algorithmic}
}
\end{algorithm}

\tAdd{If we wanted to sample each splitting from the exact posterior, we would need to marginalize over all elaborations to the node structure that could follow.
Since the number of such structures grows exponentially with depth, this would not be computationally feasible.
Instead, we must sample an approximate posterior by assuming children will be leaf nodes, as in the standard approach.
Unfortunately, this approximation can generate over-attractive splitting domains, thus making it difficult to sample high posterior alternatives.
To mitigate this problem, we anneal the likelihood in the posterior approximation to increase sample diversity.
In our experiments, we used an annealing schedule that disregards the likelihood for the first two branch depths, $\vAlpha = [0,\, 0,\, 1,\, 1,\,\ldots]$.
The hyperposterior predictive integral can then be corrected using importance weighting over the ensemble of decision trees.
For each tree $t$ with a description $\vPsi_t$, we need the probabilities of both the composition of sampled proposals $\pS(\vPsi_t)$ and the posterior $\pP(\vPsi_t \mid \mX, \vY)$, up to the unknown normalization.
This gives importance weights}
\submitBlank
\begin{align*}
w_t \propto \frac{\pP(\vPsi_t \mid \vYr)}{\pS(\vPsi_t)}
\quad\text{so that}\quad
\sum_t w_t = 1.
\end{align*}

Because we are performing exact inference, we could skip the following information analysis and compute the log likelihood directly from label counts
\submitBlank
\begin{align*}
\log_2\left( \pP(\vYr \mid \vPsi) \right) = \log_2\!\left(\frac{\Gamma(\nLabel)}{\Gamma(c+\nLabel)}\right) +\sum_{\iLabel=1}^{\nLabel} \log_2(\Gamma(\vC_{\iLabel}+1)).
\end{align*}
However, we can use this opportunity to demonstrate how the analysis would proceed if a variational approximation were used
\tAdd{by simply replacing $\pP(\vTh \mid \vYr)$ below with any distribution from a feasible set, $\pQTh \in \sFeasible$.}

The amount of information due to change in model belief is
\submitBlank
\begin{align*}
&\KL{\pP(\vTh \mid \vYr)}{\pP(\vTh)} \\
&= \frac{1}{\log(2)} \left(
\log\!\left(\frac{\Gamma(c+\nLabel)}{\Gamma(\nLabel)}\right) - c \digamma(c + \nLabel)
+ \sum_{\iLabel=1}^{\nLabel} \vC_{\iLabel} \digamma(\vC_{\iLabel} + 1) - \log(\Gamma(\vC_{\iLabel} + 1)) \right)
\end{align*}
where $\digamma(x) = \frac{d}{dx} \log(\Gamma(x))$ is the digamma function.
The prediction information gained about labels from a uniform prior is
\submitBlank
\begin{align*}
\expect_{\pP(\vTh \mid \vYr)} \info{\pRY}{\pP(\vY \mid \vTh)}{\pP(\vY \mid \vTh_0)}
= \frac{1}{\log(2)} \left( c \log(\nLabel) - c \digamma(c+\nLabel) + \sum_{\iLabel=1}^{\nLabel} \vC_{\iLabel} \digamma(\vC_{\iLabel}+1) \right).
\end{align*}
Subtracting inference information from predictive information recovers the log likelihood up to an additive constant.

\begin{figure}[b!]
	\centering
	\includegraphics[width=0.75\textwidth]{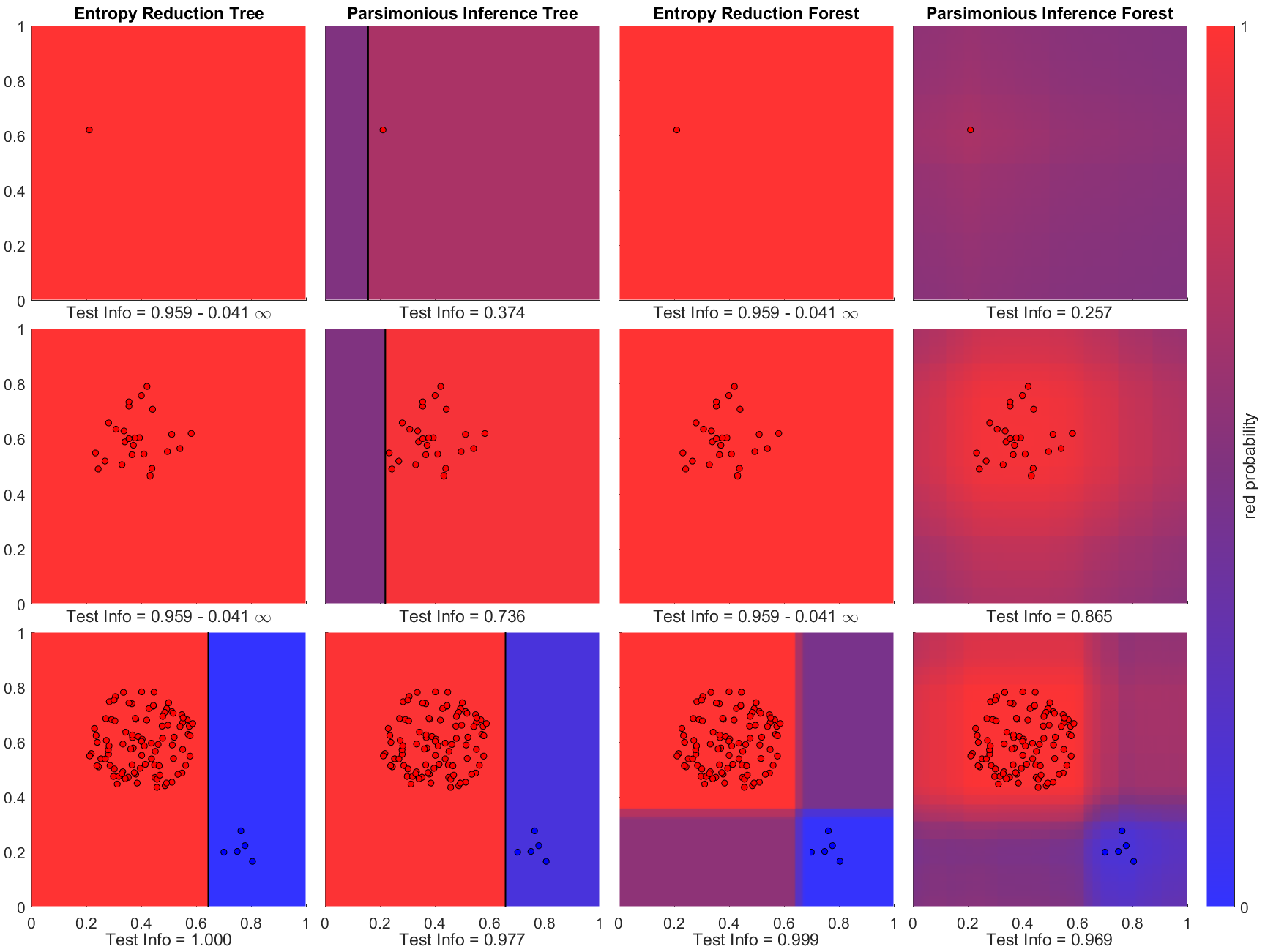}
	\vspace{-0.1in}
	\caption{\small
	 These first decision tree experiments use a highly skewed generative process with well-separated label domains.
	Conventional decision trees and random forests, columns 1 and 3, obtain highly confident predictions despite having few data,
	Our parsimonious trees and hyperposterior aggregates, columns 2 and 4, only gradually reduce uncertainty and
	demonstrate better extrapolation uncertainty.
	\vspace{-0.1in}
	}
  \label{fig:decision_tree_1}
\end{figure}

Our first set of decision tree experiments, \Cref{fig:decision_tree_1}, examines learning from a generative process that cleanly partitions the data into regions containing a single label.
Row one compares models that learn from a single sample.
Rows two and three learn from 25 and 100 samples, respectively.
The leading two columns compare a conventional decision trees with parsimonious decision trees.
The last two columns compare bootstrap aggregation with our hyperposterior aggregates.
\tAdd{Every aggregate ensemble contains 1000 decision tree samples.}

This is a highly skewed generative process;
even with 25 samples, the second row still has no realizations of a blue label.
Yet, the parsimonious aggregate predictions in both rows 1 and 2 naturally increase uncertainty as the prediction domain deviates from the training data.
In contrast, the conventional approach obtains absolute certainty in regions that lack data.
Blue labels finally appear in the third row, showing how the parsimonious forest reacts to skewed data.
This generative process is incapable of generating data in the off-diagonal regions,
but without any way of knowing that, we should be highly skeptical of certainty in the absence of evidence.

\begin{figure}[b!]
 	\centering
	\includegraphics[width=0.75\textwidth]{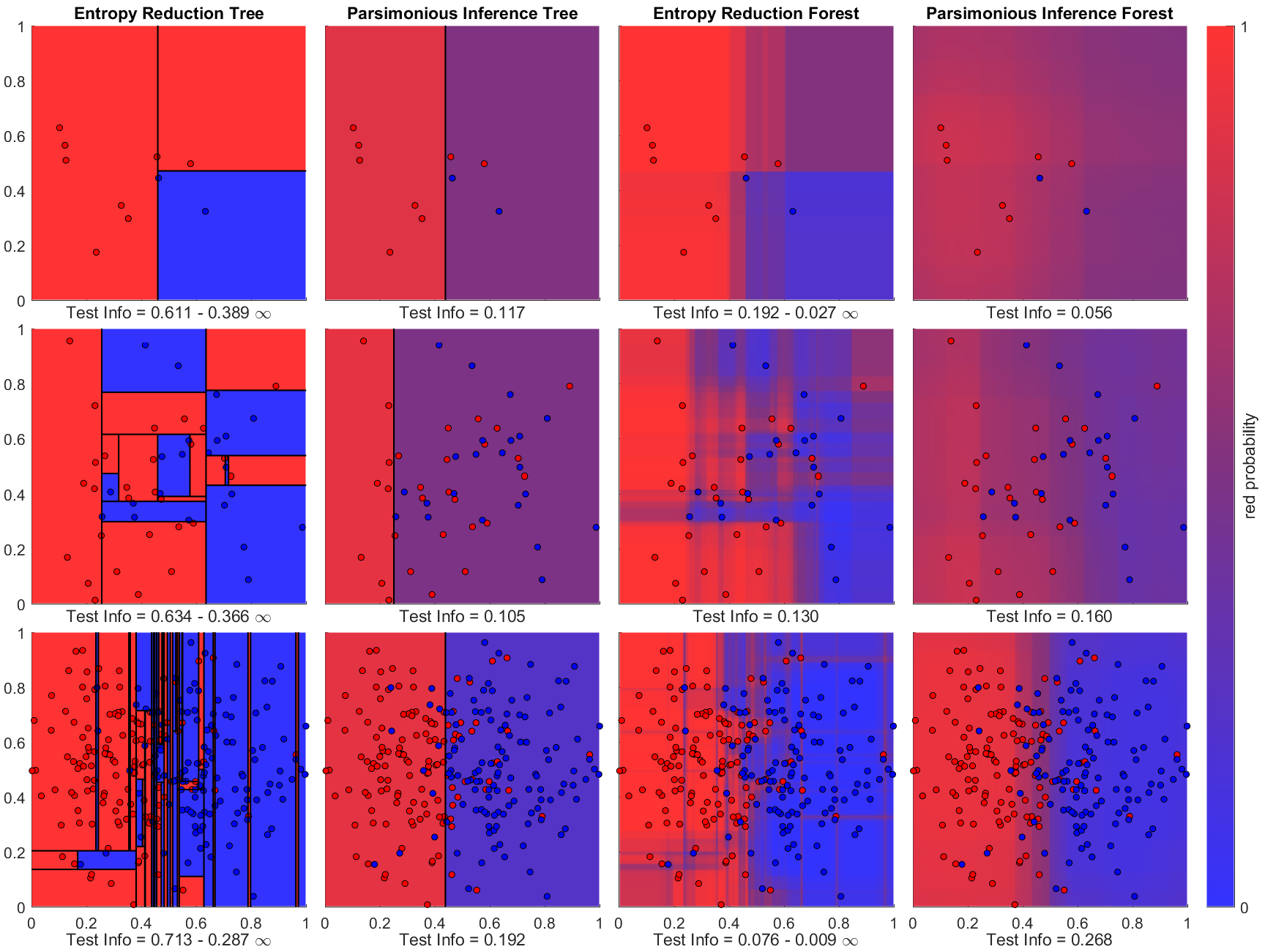}
	\vspace{-0.1in}
	\caption{\small
	Our second decision tree experiments investigate a generative process with smooth mixing of labels.
	Conventional approaches increase complexity rapidly with more data, yielding predictive artifacts that hew to few points. 
	In contrast, parsimonious trees remain simple, only gaining confidence with sufficient evidence.
	\vspace{-0.1in}
	}
  \label{fig:decision_tree_2}
\end{figure}

The second set of experiments, \Cref{fig:decision_tree_2}, examines a generative process that mixes labels.
There is nonzero probability of generating a point of either label at any location, but red labels are more likely to appear on the left and blue on the right.
We observe that typical decision trees are more complicated than their parsimonious counterparts, as expected.
Moreover, complexity increases rapidly as our dataset grows.
In contrast, parsimonious decision trees increase complexity gradually.
The typical approach also yields confident artifacts in the vicinity of few data,
whereas parsimonious trees and parsimonious forests only gradually reduce uncertainty.


\section{Discussion}
\label{sec:discussion}

Because our formulation of complexity accounts for information from arbitrary descriptions,
it already contains the functionality needed to address a wide variety of challenges.
First, we examine how to include other sources of prior belief in this framework and how to extend it to prior belief over multiple interpreters.
Second, we discuss how changing the scope of descriptions to account for symbols generated and communicated by elementary operations during the evaluation of predictions provides a mechanism to prefer fast algorithms.
Third, we explore how other Bayesian hyperpriors relate to description complexity.
Fourth, we compare the non-Bayesian treatment of probability in Rissanen's Minimum Description Length to our approach.
Finally, we offer our concluding remarks.

\subsection{Comparing and Inferring Interpreters}
\label{sec:interpreters}

We can easily compare interpreters within the same theoretical framework by identifying a common \tAdd{\textit{language},
by which we indicate an interpreter that is also Turing-complete.
Let $\mPhi = \left\{ \vPhi_i \mid i \in [n]\right\}$ represent an ensemble of interpreters written in the common language $\vPhi^*$.
If $\vPhi_i$ is capable of producing a state $\vObj$ from an encoding $\vPsi_i(\vObj)$ and the interpreter itself is encoded by $\vPsi^*(\vPhi_i)$ within the common language,
then applying the same hyperprior gives
\begin{align*}
\pP(\vObj, \vPhi_i) = \pP(\vObj \mid \vPsi_i) \pP(\vPsi_i) = 2^{-\fLen(\vPsi_i(\vObj)) - \fLen(\vPsi^*(\vPhi_i))}. 
\end{align*}
This is equivalent to simply prepending each state encoding with that of the relevant interpreter, $\vPsi^*(\vObj, \vPhi_i) = \left[\vPsi^*(\vPhi_i),\, \vPsi_i(\vObj)\right]$,
and regarding the common language as the shared interpreter of all such codes.}

\tAdd{Although this approach merely shifts the burden of how we derive interpreter validity to another level of abstraction,
we can still obtain practical insight into credible interpreters with this view.
Nefarious interpreters, such as the third basis in \Cref{fig:sensitivity}, effectively transfer complexity from an otherwise long encoding, subject to a simple interpreter, to a short encoding, subject to a long interpreter. 
Yet, when we shift the derivation of plausibility to a common language, the excessive complexity becomes visible in both cases.}

This approach also provides a formulation for grammar discovery through inference.
If we have several datasets and associated learning problems that should be explainable within a common language, then we can infer the structure of an efficient interpreter.
An interpreter that represents common functions among the different learning problems efficiently will be more likely than one that solves a single problem well by hiding complex functions with shortcuts.

\tAdd{Ultimately, however, deriving interpreter validity from a common language alone creates a problem of infinite regress;
the question remains of how we may derive plausibility among several languages.
Although we hold that this question does not impede practical applications, it can be answered by appealing to consistency through simulation.
\Cref{thm:utm_ensemble} presents a unique universal prior, consistent with our theoretical framework, for any ensemble of languages.
The proof immediately follows.}

\begin{theorem}
\label{thm:utm_ensemble}
\tAdd{\thmTitle{Universality of Consistent Belief in Turing-Complete Ensembles.}
Given an ensemble of Turing-complete interpreters, $\mPhi = \left\{ \vPhi_i \mid i \in [n]\right\}$,
we may consider an arbitrary state of prior belief, $\pP(\vPhi_i)$ over $i \in [n]$,
and apply \Cref{cor:probability_from_length} to obtain the joint probability of one interpreter simulating another 
\submitBlank
\begin{align*}
\pP(\vPsi_i, \vPsi_j) = \pP(\vPsi_i \mid \vPsi_j) \pP(\vPsi_j)\quad\text{for all}\quad i,j \in [n].
\end{align*}
There exists a unique prior for which the marginalized simulation probability recovers the prior,
thus consistently accounting for simulation complexity within prior belief.}
\end{theorem}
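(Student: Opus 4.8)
The plan is to read the self-consistency requirement as a linear fixed-point equation and then invoke the Perron--Frobenius theorem. First I would assemble the $n\times n$ matrix $S$ of simulation probabilities $S_{ij} = \pP(\vPsi_i \mid \vPsi_j)$, which by \Cref{cor:probability_from_length} equal $2^{-\fLen(\vPsi_j^*(\vPhi_i))}$, where $\vPsi_j^*(\vPhi_i)$ denotes the encoding of interpreter $\vPhi_i$ as a program inside language $\vPhi_j$. Writing $p_i = \pP(\vPsi_i)$ and collecting these into a column vector $\vP = (p_1, \ldots, p_n)^{\sTrn}$, the statement that the marginalized simulation probability recovers the prior is exactly $p_i = \sum_j \pP(\vPsi_i \mid \vPsi_j)\, p_j$, i.e.\ the eigenvalue equation $\vP = S\vP$. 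The theorem thus reduces to the claim that this equation admits a unique probability-vector solution.

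Next I would verify the two structural properties of $S$ that activate Perron--Frobenius. Because every interpreter in $\mPhi$ is Turing-complete, each can simulate every other with a finite-length simulator, so $\fLen(\vPsi_j^*(\vPhi_i)) < \infty$ and therefore $S_{ij} = 2^{-\fLen(\vPsi_j^*(\vPhi_i))} > 0$ for all $i,j$; the matrix is strictly positive, hence irreducible and aperiodic. Second, for each fixed language $\vPhi_j$ the conditional probabilities over which interpreter is simulated form a maximum-entropy code in the sense of \Cref{cor:probability_from_length}, so $\sum_i S_{ij} = 1$ and $S$ is column-stochastic. Column-stochasticity makes the all-ones row vector a left eigenvector with eigenvalue $1$, so the spectral radius of $S$ equals $1$.

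The conclusion then follows from the strong form of Perron--Frobenius: a strictly positive matrix has its spectral radius as a simple eigenvalue admitting a strictly positive eigenvector that is unique up to scaling. Applied to $S$ with spectral radius $1$, this yields a unique strictly positive right eigenvector; normalizing it so that $\sum_i p_i = 1$ produces the unique prior $\vP$ for which simulation complexity is absorbed self-consistently into belief. I would remark that the strict positivity of this recovered prior is precisely the requirement that no Turing-complete interpreter be assigned vanishing plausibility.

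The hard part will be justifying genuine column-stochasticity rather than the weaker Kraft sub-normalization $\sum_i 2^{-\fLen(\vPsi_j^*(\vPhi_i))} \le 1$ that is all one can assert for an arbitrary finite family of programs inside a single Turing-complete language. I would resolve this by conditioning on the ensemble: restricting $\pP(\cdot \mid \vPsi_j)$ to $\mPhi$ and renormalizing each column divides column $j$ by its positive sum, which preserves strict positivity and yields exact column-stochasticity while leaving the eigenvalue-$1$ eigenvector of the scaled $S$ well defined. A minor point worth dispatching is the diagonal self-simulation term $S_{ii}$, arising from a trivial identity simulator; since the argument needs only that $S_{ii}$ be finite and positive, no special handling is required.
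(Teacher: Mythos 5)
Your proposal is correct and follows essentially the same route as the paper: form the column-normalized, strictly positive transition matrix of simulation probabilities (positivity guaranteed by Turing-completeness), read consistency as the eigenvalue-$1$ equation, and apply Perron--Frobenius to get a unique positive probability eigenvector. Your explicit handling of the Kraft sub-normalization by renormalizing each column over the ensemble is exactly the normalization the paper builds into its definition of $\pP(\vPhi_i \mid \vPhi_j)$; the only cosmetic difference is that the paper sums over all simulators in each pair's set rather than using a single encoding per pair.
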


\begin{proof}\textbf{of \Cref{thm:utm_ensemble}.}
\tAdd{A simulator $\vPsi_{ij}$ is a code sequence that may be prepended to any valid code for $\vPhi_i$ and allow it to run on $\vPhi_j$.
Let the set $\mPsi_{ij} = \left\{ \vPsi_{ij} \right\}$ include all simulators for language pairs indexed $i, j \in [n]$.
Applying \Cref{cor:probability_from_length} yields the hyperprior transition matrix, expressed elementwise by row $i$ and column $j$ as 
\submitBlank
\begin{align*}
\pP(\vPhi_i \mid \vPhi_j) = \left(\sum_k \sum_{\vPsi_{kj} \in \mPsi_{kj}} 2^{-\fLen(\vPsi_{kj})} \right)^{-1} \sum_{\vPsi_{ij} \in \mPsi_{ij}} 2^{-\fLen(\vPsi_{ij})},
\end{align*}
where each column is normalized over the languages in the ensemble.
A consistent prior must satisfy $\pP(\vPhi_i) = \sum_j \pP(\vPhi_i \mid \vPhi_j) \pP(\vPhi_j)$ for all $i \in [n]$.
Because all Turing-complete languages can simulate one another, $\mPsi_{ij}$ is nonempty for all pairs.
Existence and uniqueness immediately follow by applying the Perron--Frobenius Theorem;
every square matrix with positive entries has a unique largest eigenvalue and the paired eigenvector may be constructed to have positive entries.
Since the transition matrix maps every normalized state to another normalized state, that eigenvalue must be 1 and no other eigenvectors may be coherent probabilities.}
\end{proof}

\tAdd{This result shifts remaining subjectivity to the set of interpreters we are willing to consider for comparison.
In practice, the shortest simulator in each set, say $\vPsir_{ij} \in \mPsi_{ij}$, will dominate the corresponding matrix element.
Thus, a more practical approximation of the hyperprior transition matrix is
\begin{align*}
\pP(\vPhi_i \mid \vPhi_j) \approx \left(\sum_k 2^{-\fLen(\vPsir_{kj})}\right)^{-1} 2^{-\fLen(\vPsir_{ij})}.
\end{align*}
Note $2^{-\fLen(\vPsir_{ii})}=1$ for all $i \in [n]$, since the shortest self-simulator is trivial in each language, $\vPsir_{ii} = \emptyset$.
We find this result intuitive because it suppresses interpreters that require excessive complexity to simulate.
In the absence of such a computation, we may only conclude that we should prefer interpreters that appear to be simple.}

\subsection{Integrating Additional Beliefs}

Although this work is motivated by the difficulty of expressing prior belief over abstract models,
when we have access to additional information that could constrain prior beliefs, that information may be impactful.
Therefore, we should be able to integrate other prior beliefs within the general complexity framework.
Let our complexity-based prior belief be denoted as $\pP(\vObj \mid \mathcal{C}) = 2^{-\fLen(\vPsi(\vObj))}$.
If we also have other prior beliefs, $\pP(\vObj \mid \mathcal{B})$, we can form the composite prior $\pP(\vObj \mid \mathcal{B}, \mathcal{C})$.
For example, $\mathcal{B}$ may express physical laws or previously observed data.
One approach would be to use an interpreter that implicitly embeds $\mathcal{B}$ within viable encodings so that $\pP(\vObj \mid \mathcal{B}, \mathcal{C}) = 2^{-\fLen(\vPsi_{\mathcal{B}}(\vObj))}$.
Alternatively, if we assume that belief derived from $\mathcal{B}$ is conditionally independent of our complexity-based belief $\mathcal{C}$, then we have
\submitBlank
\begin{align*}
\pP(\vObj \mid \mathcal{B}, \mathcal{C}) &= \frac{\pP(\mathcal{B} \mid \vObj, \mathcal{C}) \pP(\vObj \mid \mathcal{C})}{\pP(\mathcal{B} \mid \mathcal{C})}
=\frac{\pP(\mathcal{B} \mid \vObj ) \pP(\vObj \mid \mathcal{C})}{\pP(\mathcal{B} \mid \mathcal{C})} \\
&=\frac{\pP(\vObj \mid \mathcal{B}) \pP(\mathcal{B})  \pP(\vObj \mid \mathcal{C})}{\pP(\mathcal{\vObj}) \pP(\mathcal{B} \mid \mathcal{C})}
\propto \pP(\vObj \mid \mathcal{B}) \pP(\vObj \mid \mathcal{C}),
\end{align*}
where $\pP(\mathcal{\vObj})$ must be a constant for all $\vObj$ since both $\mathcal{B}$ and $\mathcal{C}$ have been constructed to capture all our beliefs.
Thus, the composite prior is easily formed up to a constant of proportionality.

\subsection{The Imperative of Utility}
\label{sec:computability}

It is not useful to consider models that, in order to provide a substantial contribution to predictions, would require more evidence than we anticipate having.
Likewise, it is not useful to consider models that would require either more computational energy, communication capacity, or time to evaluate than we can afford.
Practical models must be discoverable, and predictions must be computable.
The Kolmogorov complexity is well-known to be uncomputable, thus raising a natural concern that generalizing prior belief to arbitrary descriptions only exacerbates the problem.
Yet, the primary purpose of \Cref{thm:parsimony_optimization} is to show how information theory allows us to restrict our attention to feasible manifolds of belief,
while simultaneously allowing us to compare outcomes from different choices of restriction.
Because long descriptions are already exponentially suppressed \`a priori, the information we generate by refusing to consider long descriptions becomes small as the descriptions we drop become long.

Even so, it is instructive to examine uncomputability more careful as it motivates future directions.
Suppose we had an oracle $\Omega$ that would determine whether or not a program $\vObj$ is capable of reproducing a mapping $(\vX, \vY)$ in a finite amount of time
\submitBlank
\begin{align*}
\Omega(\vPhi, \vObj, \vX, \vY) = \begin{cases}
 \text{true} & \vY = \vPhi(\vObj, \vX) \\
 \text{false} & \text{otherwise}.
\end{cases}
\end{align*}
The existence of such an oracle would allow us to determine the Kolmogorov complexity by brute force, generating and checking programs in order of increasing length until the oracle returns true.
Further, it would be a trivial matter to write another brute force subroutine to identify the first sequence $\vY$ with Kolmogorov complexity above an arbitrarily high limit $\fKol_{\vPhi}(\emptyset, \vY) > \tau$.
By setting $\tau$ to exceed the combined lengths of the oracle and brute force subroutines, we would have succeeded in writing a program that contradicts the Kolmogorov complexity.

The core problem with this thought experiment is the arbitrarily large amount of memory and elementary operations that would be required to run the program.
Disregarding the halting problem, the brute force search would need to generate full programs in memory, while only incurring the cost of encoding a counter.
We may conclude that problems associated with computability will be alleviated if we simply include memory operations,
every symbol generated or transmitted between slow and fast levels of memory, in the definition of model \textit{evaluation length}.
\citet{Lempel1976} present a related framework to measure sequence production complexity as the minimum number of steps required to build a sequence from a production process to construct a hierarchy of subsequences.
\citet{Speidel2008} provides additional discussion of recent work by \citet{Titchener1998}.

\tAdd{Speed priors \citep{Schmidhuber2002} and related work by \citet{Filan2016} develop these approaches to articulate prior beliefs that prefer efficient algorithms in the context of binary UTMs.}
In this view, prior belief becomes an expression for the degree of utility considering a model would contribute to obtaining feasible predictions.
Building on these approaches will allow us to restrict our attention to models that can be evaluated with limited resources.
For example, randomized algorithms such as Randomized QR with Column Pivoting (RQRCP) \citep{Duersch2020b} would gain plausibility by having reduced slow communication bottlenecks.
In order for machine learning to be capable of providing discoverable, computationally feasible, and useful models, we cannot avoid limiting our attention accordingly.

\subsection{Relationship to other Bayesian Methods}
\label{sec:ardprior}

Our hyperprior provides a principled foundation to derive results that are similar in function to several well-known methods for specific Bayesian inference problems.
Notable comparisons include sparsity inducing priors, like Automatic Relevance Determination, for regression problems with continuous coefficients.
The ARD prior is a hyperprior over parameters which is intended to identify critical parameters and drive remaining parameters towards zero.
The ARD prior is implemented as:
\submitBlank
\begin{align*}
\pP(\vTh) = \pN(\vTh \mid 0, \vSigma)
\end{align*}
where we need to specify $\pP(\vSigma)$.
In the original work introducing the ARD prior, $\pP(\vSigma)$ is a gamma distribution in the precision $\tau = \sigma^{-2}$ with a small shape parameter.
This closely corresponds to the improper Jeffrey's prior $\pP(\vSigma) \propto \frac{1}{\vSigma}$, often used in practice for unknown scalar covariances \uAdd{because it is scaling invariant}.
If we partition the potential values of $\vSigma$ into intervals $0 < a < b < \infty$, where $a$ and $b$ are any positive real numbers,
the cumulative probability diverges for values less than $a$ and values greater than $b$, thus dominating over the finite contribution within $[a,b]$.
It follows that sampling the Jeffrey's prior would yield outcomes either very close to zero or diverging towards infinity. 
Within this formulation, if $\vTh$ has little relevance to the likelihood, then probability is maximized when $\vTh$ and $\vSigma$ approach zero.
Otherwise, a large enough $\vSigma$ will be found to allow $\vTh$ to take moderate nonzero values with the Jeffrey's prior introducing only a slight penalty as $\vSigma$ increases.
Therefore, it can be interpreted as making a binary choice between very large or very small $\sigma$.
More generally, a gamma distribution allows, indeed requires, the relative probably of these two outcomes to be tuned.

If we uniformly discretize the possible values of $\vSigma$ as $\vSigma_i = i \vSigma_1$ and assign them probabilities according to $\pP(\vSigma_i) \propto \frac{1}{\vSigma_i}$
\uAdd{up to a maximum value $i \in [M]$,}
we can equate this prior with the complexity prior $\pP(\sigma_i) = 2^{-\fLen(\sigma_i)}$ to obtain
\submitBlank
\begin{align*}
\fLen(\vSigma_i) = \log_2(\vSigma_i) + c = \log_2(i) + \log_2(\vSigma_1) + c = \log_2(i) + \fLen(\vSigma_1),
\end{align*}
where the constant $c$ gives the normalization.
Since the complexity of $\vSigma_i$ increases logarithmically in $i$,
we see that Jeffrey's prior is the continuous limit of the number of bits required to express \uAdd{an integer multiple of $\vSigma_1$.
We may interpret the fixed offset, $\fLen(\vSigma_1)$, as the contribution of a single symbol that determines the number of bits to read.} 

While sparsity is a useful notion of complexity for many problems, it is not universal.
Sparsity either regards a continuous parameter as either complex (nonzero) or not complex (zero).
While sparsity-inducing priors, like ARD, can compel continuous parameters to zero if they do not provide enough benefit to predictions,
they have no affordance to suppress other forms of complexity.
For example, there is no compelling notion of sparsity within the construction of decision trees.
Moreover, when we need to encode constants within prior descriptions, our theory supports consistent distinctions in complexity among potential constants.

\subsection{Relationship to Minimum Description Length}
\label{sub:mdl}

Rissanen's Minimum Description Length (MDL) shares many similarities with our theory, but it is not motivated by the philosophical foundations of reason that drive the Bayesian paradigm.
Rather, MDL views inference as finding an optimal compressed representation of a dataset and probability as a way of developing efficient codes.
MDL representations contain both the model used to construct an efficient code and the compressed form of the data that follows.
The length of the data representation $\fLen(\sData)$ is the sum of the number of bits needed to describe the model $\fLen(\vObj)$ and the number of bits needed to describe the residual data $\fLen(\sData \mid \vObj)$.
In its simplest form, the inference problem for identifying a hypothesis or program $\vObj \in \sObj$ is
\submitBlank
\begin{align*}
\fLen( \sData) = \min_{\vObj \in \sObj} \;\; \fLen(\vObj ) + \fLen(\sData \mid \vObj),
\end{align*}
requiring a specific discretization and encoding for a hypothesis space $\sObj$.
To address the arbitrary task of designing a hypothesis space encoding, MDL proposes a minimax optimization over universal codes,
minimizing the worst-case regret associated with arbitrary data.
This simple form of MDL can also be refined to compare and optimize hypothesis classes instead of individual hypotheses, which corresponds to the Bayesian model-class selection problem.
\citet{Grunwald2007} provide an in-depth exposition. 

\tAdd{While there is some similarity between our approach and that of MDL, our theory is driven by a comprehensive treatment of information, and a consistent derivation of prior belief from complexity.
MDL is also consistent with some objectivist Bayesian formulations, such as Jeffrey's priors, however the philosophical motivation is quite different.
Although optimizing MDL encoding length drives at a notion of simplicity, it is not framed within an extended logic to update beliefs from prior or intermediate results, subjective or otherwise. 
Likewise, Minimum Message Length (MML) \citep{Wallace1968,Boulton1975,Wallace1987} is a Bayesian framework that is similar to MDL.
Instead of optimizing hypothesis encodings to minimize worst-case regret, MML minimizes expected code length, which depends on a subjective prior over the attributes a code describes.
We assert that a consistent treatment of both information and prior complexity is critical in the abstract setting of machine learning.}

Further, we highlight the deeper understanding of optimal representations that our theory provides compared to MDL and MML.
Optimization is fundamentally inconsistent with Bayesian probability theory;
inference compels posterior belief from a prior state and the result expresses our rational belief in possible models.
Yet, we recognize that a choice must be made to simplify this process so that problems can be solved on machines with finite resources.
This is why we must distinguish rational choice from rational belief.
Rational choices are informed by rational belief, but also require a utility function.
Building on Bernardo's work \citep{Bernardo1979}, \Cref{cor:model_inference,cor:hyper_inference,cor:properutility,cor:perturbations}
show the variety of circumstances in which information is a proper utility function that serves to guide well-posed optimization for rational choices.
The rational choice becomes the representation we use to approximate posterior belief for future predictions.
While a rational choice could be a single model, as in MDL and MML,
other representations, such as the ensembles in our experiments, have greater utility and provide better prediction uncertainty quantification.

\subsection{Summary and Conclusion}

We proposed Parsimonious Inference, a complete theory of learning based on an information-theoretic formulation of Bayesian inference that quantifies and suppresses a general notion of explanatory complexity.
We showed how our information-theoretic objective allows us to understand the relationship between model complexity and increased agreement between predictions and data labels.

Within the Bayesian perspective, once the prior, the likelihood, and the data are specified, the posterior inexorably follows.
Yet, when we consider the infinite varieties of algorithms that may be developed in machine learning, we find that any universal prior that reserves some degree of plausibility for an arbitrary algorithm becomes uncomputable in practice.
Our framework allows us to resolve the imperative of utility by quantifying the value of a choice,
wherein we only consider a feasible set of prior beliefs and posterior approximations.
By accounting for model complexity from first principles, we can evaluate the utility of such restrictions within a single framework to obtain well-justified predictions within a practical computational budget.

A central aspect of our framework is the distinction between the intrinsic meaning of a potential state of belief and an efficient encoding of that state.
Encoding complexity provides a critical missing component that is needed to measure the complexity of arbitrary inference architectures and naturally associate complexity with plausibility.
Our formulation of generalized length allows us to assign length to a wide variety of codes, beyond binary codes that are typically associated with program length.
We examined some elementary codes to express integers and fractions on the open interval, which can be mapped to a broad class of numbers that may prove useful to represent prior beliefs.

We showed how feasibility-constrained optimizers satisfy quantifiable memorization bounds in comparison to models that may produce better adherence to training data,
but at the cost of increased description length, increased inference information, or information generated by an approximating distribution proposed to generate predictions.
Our experimental results show how our hyperposterior ensembles avoid developing artifacts that artificially hew to seen data within the predictive structure.
Moreover, accounting for multiple explanations by hyperposterior sampling allows us to compute extrapolation uncertainty from first principles as the input domain deviates from past observations.
These experimental results demonstrate how our theory allows us to obtain predictions from extremely small datasets without cross-validation.

Our theory solves critical challenges in understanding how to efficiently learn from data, obtain well-grounded justification for uncertainty in predictions,
and anticipate extrapolation regimes where additional data would prove most beneficial, thus opening a new domain of predictive capabilities. 
This work also provides a principled foundation to address the challenge of feasible learning in the face of high dimensionality.


\section*{Acknowledgements}

We would like to extend our earnest appreciation to Jaideep Ray, Justin Jacobs, and Philip Kegelmeyer for several helpful discussions on this topic.
We also acknowledge and appreciate a conversation with Andrew Charman that clarified our view of the distinction between rational belief and the inherently restrictive nature of a choice.
We sincerely appreciate reviewer feedback that helped us improve this work.

\noindent\textbf{Funding:} This work was funded, in part, by the U.S.~Department of Energy.

Sandia National Laboratories is a multimission laboratory managed and operated by National Technology and Engineering Solutions of Sandia, LLC.,
a wholly owned subsidiary of Honeywell International, Inc., for the U.S. Department of Energy's National Nuclear Security Administration under contract
DE-NA-0003525. This paper describes objective technical results and analysis.
Any subjective views or opinions that might be expressed in the paper do not necessarily represent the views of the U.S. Department of Energy
or the United States Government.


\appendix

\section{Selected Information Corollaries}
\label{sec:corollaries}

We provide proofs of \Cref{thm:info} and the following corollaries in our previous work \citep{Duersch2020}.

\begin{corollary}
\label{cor:chain} \thmTitle{Chain rule of conditional dependence.}
Information associated with joint variables decomposes as
\submitBlank
\begin{align*}
\info{\pR(\vZ_1, \vZ_2)}{\pQ_1(\vZ_1, \vZ_2)}{\pQ_0(\vZ_1, \vZ_2)} =
&\; \info{\pR(\vZ_1)}{\pQ_1(\vZ_1)}{\pQ_0(\vZ_1)} \\
&+ \expect_{\pR(\vZ_1)} \info{\pR(\vZ_2 \mid \vZ_1)}{\pQ_1(\vZ_2 \mid \vZ_1)}{\pQ_0(\vZ_2 \mid \vZ_1)}
\end{align*}
where $\pR(\vZ_1, \vZ_2) = \pR(\vZ_2 \mid \vZ_1)\pR(\vZ_1)$ and $\pQ(\vZ_1, \vZ_2) = \pQ(\vZ_2 \mid \vZ_1)\pQ(\vZ_1)$. 
\end{corollary}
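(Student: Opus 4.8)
The plan is to expand the joint information directly from the definition in \Cref{thm:info} and then exploit the stated factorizations of all three distributions. Writing the left-hand side explicitly,
\begin{align*}
\info{\pR(\vZ_1, \vZ_2)}{\pQ_1(\vZ_1, \vZ_2)}{\pQ_0(\vZ_1, \vZ_2)} = \integ{d\vZ_1\, d\vZ_2} \pR(\vZ_1, \vZ_2) \log_2\!\left( \frac{\pQ_1(\vZ_1, \vZ_2)}{\pQ_0(\vZ_1, \vZ_2)} \right),
\end{align*}
I would substitute $\pQ_1(\vZ_1, \vZ_2) = \pQ_1(\vZ_2 \mid \vZ_1)\pQ_1(\vZ_1)$ and the analogous factorization of $\pQ_0$ inside the logarithm. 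Since the logarithm of a product is a sum of logarithms, the log-ratio separates into a marginal part $\log_2(\pQ_1(\vZ_1)/\pQ_0(\vZ_1))$ and a conditional part $\log_2(\pQ_1(\vZ_2 \mid \vZ_1)/\pQ_0(\vZ_2 \mid \vZ_1))$, so the single integral splits into two.

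Next I would handle the two integrals separately, in each case writing the view distribution as $\pR(\vZ_1, \vZ_2) = \pR(\vZ_2 \mid \vZ_1)\pR(\vZ_1)$. For the marginal part, the integrand no longer depends on $\vZ_2$, so the inner integral over $\vZ_2$ collapses by normalization, $\int d\vZ_2\, \pR(\vZ_2 \mid \vZ_1) = 1$, leaving exactly $\info{\pR(\vZ_1)}{\pQ_1(\vZ_1)}{\pQ_0(\vZ_1)}$. For the conditional part, I would regroup the double integral as an outer integral over $\pR(\vZ_1)$ of an inner integral over $\vZ_2$; the inner integral is precisely $\info{\pR(\vZ_2 \mid \vZ_1)}{\pQ_1(\vZ_2 \mid \vZ_1)}{\pQ_0(\vZ_2 \mid \vZ_1)}$ by applying the definition to the conditional slice at fixed $\vZ_1$. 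Recognizing the remaining outer integral as $\expect_{\pR(\vZ_1)}[\,\cdot\,]$ then produces the second summand and completes the identity.

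This is essentially a direct calculation, so I do not anticipate a genuine obstacle; the only point requiring care is the interchange of integration order (a Fubini-type step) when regrouping the conditional part as a nested expectation, together with the tacit assumption that the conditional information $\info{\pR(\vZ_2 \mid \vZ_1)}{\cdot}{\cdot}$ is well defined for $\pR(\vZ_1)$-almost every $\vZ_1$. I would remark that the structure mirrors the familiar chain rule for the Kullback--Leibler divergence, which is recovered when the view of expectation coincides with the target belief, and that by symmetry an identical decomposition holds with the roles of $\vZ_1$ and $\vZ_2$ exchanged.
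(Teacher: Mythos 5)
Your proposal is correct: factoring all three distributions, splitting the logarithm, collapsing the marginal term via $\int d\vZ_2\, \pR(\vZ_2 \mid \vZ_1) = 1$, and recognizing the remaining double integral as a nested expectation is exactly the standard argument, and each step checks out. Note, however, that this paper does not actually prove \Cref{cor:chain}; the appendix explicitly defers the proofs of \Cref{thm:info} and its corollaries to the authors' earlier work (Duersch and Catanach, 2020), so there is no in-paper proof to compare against. Your direct computation is the natural route and would serve as a self-contained proof; your closing caveats about Fubini and the almost-everywhere well-definedness of the conditional information are the right ones to flag.
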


\begin{corollary}
\label{cor:additive} \thmTitle{Additivity over belief sequences.}
Information gained over a sequence of belief updates is additive within the same view.
Given initial belief $\pQ_0(\vZ)$, intermediate states $\pQ_1(\vZ)$ and $\pQ_2(\vZ)$, and the view $\pR(\vZ)$ we have
\submitBlank
\begin{displaymath}
\info{\pR(\vZ)}{\pQ_2(\vZ)}{\pQ_0(\vZ)} = \info{\pR(\vZ)}{\pQ_2(\vZ)}{\pQ_1(\vZ)} + \info{\pR(\vZ)}{\pQ_1(\vZ)}{\pQ_0(\vZ)}.
\end{displaymath}
\end{corollary}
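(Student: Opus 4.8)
The plan is to prove this identity directly from the integral representation established in \Cref{thm:info}, since the additive structure asserted here is nothing more than the multiplicative factorization of a likelihood ratio carried through the logarithm. First I would write out the left-hand side in the form guaranteed by \Cref{thm:info},
\begin{align*}
\info{\pR(\vZ)}{\pQ_2(\vZ)}{\pQ_0(\vZ)} = \integ{d\vZ} \pR(\vZ) \log_2\!\left( \frac{\pQ_2(\vZ)}{\pQ_0(\vZ)} \right).
\end{align*}

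Next I would insert the intermediate state multiplicatively, factoring the integrand's argument as $\frac{\pQ_2(\vZ)}{\pQ_0(\vZ)} = \frac{\pQ_2(\vZ)}{\pQ_1(\vZ)}\,\frac{\pQ_1(\vZ)}{\pQ_0(\vZ)}$ and applying the product rule for logarithms. Because the view of expectation $\pR(\vZ)$ is held fixed across all three functionals, linearity of the integral splits the single integral into a sum of two, each of which is precisely the integral form of the corresponding information functional from \Cref{thm:info}:
\begin{align*}
\integ{d\vZ} \pR(\vZ) \log_2\!\left( \frac{\pQ_2(\vZ)}{\pQ_1(\vZ)} \right) + \integ{d\vZ} \pR(\vZ) \log_2\!\left( \frac{\pQ_1(\vZ)}{\pQ_0(\vZ)} \right) = \info{\pR(\vZ)}{\pQ_2(\vZ)}{\pQ_1(\vZ)} + \info{\pR(\vZ)}{\pQ_1(\vZ)}{\pQ_0(\vZ)}.
\end{align*}
Reading this chain of equalities from top to bottom yields the claim.

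The hard part here is essentially nonexistent; the only point requiring care is that the telescoping through $\pQ_1(\vZ)$ be well-defined on the support of the view, i.e.~that $\pQ_1(\vZ)$ remain positive wherever $\pR(\vZ)$ contributes to the integral, so that neither inserted ratio vanishes nor diverges. If $\pQ_1(\vZ)$ were zero on a set where $\pR(\vZ) > 0$, the two summand integrals could individually diverge to $+\infty$ and $-\infty$ and their sum become indeterminate, even though the left-hand side stays finite; requiring $\pQ_1(\vZ)$ to share the support of $\pR(\vZ)$---a natural condition for it to be a coherent intermediate belief---removes this concern. I would close by remarking that this additivity over a chain of belief revisions with a single fixed variable and a fixed view is conceptually distinct from the additivity over independent processes captured by the chain rule of \Cref{cor:chain}, even though both rest on the same integral representation.
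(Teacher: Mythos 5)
Your argument is correct: the identity follows immediately from the integral form in \Cref{thm:info} by factoring $\pQ_2/\pQ_0 = (\pQ_2/\pQ_1)(\pQ_1/\pQ_0)$ inside the logarithm and splitting the integral, and your caveat about $\pQ_1$ sharing the support of $\pR$ is the right technical condition. The paper itself defers this proof to the cited prior work \citep{Duersch2020} rather than reproducing it, but the telescoping-logarithm computation you give is the standard and essentially unique route to this result.
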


\begin{corollary}
\label{cor:antisymmetry} \thmTitle{Antisymmetry.}
Information from $\pQ_1(\vZ)$ to $\pQ_0(\vZ)$ is the negative of information from $\pQ_0(\vZ)$ to $\pQ_1(\vZ)$
\submitBlank
\begin{displaymath}
\info{\pR(\vZ)}{\pQ_0(\vZ)}{\pQ_1(\vZ)} = -\info{\pR(\vZ)}{\pQ_1(\vZ)}{\pQ_0(\vZ)}.
\end{displaymath}
\end{corollary}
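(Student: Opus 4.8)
The plan is to invoke the explicit form of the information functional established in \Cref{thm:info} and reduce the claim to the elementary logarithm identity $\log_2(a/b) = -\log_2(b/a)$. Writing out $\info{\pR(\vZ)}{\pQ_0(\vZ)}{\pQ_1(\vZ)}$ according to the theorem gives an integral of $\pR(\vZ)$ against $\log_2(\pQ_0(\vZ)/\pQ_1(\vZ))$; inverting the ratio inside the logarithm contributes an overall sign, and pulling that sign outside the integral recovers $-\info{\pR(\vZ)}{\pQ_1(\vZ)}{\pQ_0(\vZ)}$. This is a one-line computation once the functional form is in hand.

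Concretely, first I would substitute the definition to obtain
\begin{align*}
\info{\pR(\vZ)}{\pQ_0(\vZ)}{\pQ_1(\vZ)} = \integ{d\vZ} \pR(\vZ) \log_2\!\left( \frac{\pQ_0(\vZ)}{\pQ_1(\vZ)} \right).
\end{align*}
Then I would apply $\log_2(\pQ_0/\pQ_1) = -\log_2(\pQ_1/\pQ_0)$ pointwise in $\vZ$, factor the constant $-1$ through the integral by linearity, and identify the remaining integral as $\info{\pR(\vZ)}{\pQ_1(\vZ)}{\pQ_0(\vZ)}$, completing the argument.

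An alternative, purely axiomatic route avoids the explicit formula altogether and may be worth recording for robustness. Setting $\pQ_2 = \pQ_0$ in the additivity identity of \Cref{cor:additive} yields
\begin{align*}
\info{\pR(\vZ)}{\pQ_0(\vZ)}{\pQ_0(\vZ)} = \info{\pR(\vZ)}{\pQ_0(\vZ)}{\pQ_1(\vZ)} + \info{\pR(\vZ)}{\pQ_1(\vZ)}{\pQ_0(\vZ)}.
\end{align*}
Because belief does not change on the left-hand side, the third postulate forces that term to vanish, and rearranging gives exactly the antisymmetry relation. This derivation has the advantage of exhibiting antisymmetry as a structural consequence of the postulates rather than of the specific logarithmic form, and it confirms the two formulations are mutually consistent.

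The only point requiring any care is measure-theoretic rather than algebraic: the manipulation presumes the log-ratio is integrable against $\pR$, which holds whenever $\pR$ is absolutely continuous with respect to both $\pQ_0$ and $\pQ_1$, so that the ratios are finite $\pR$-almost everywhere. Under the standing support assumptions implicit in \Cref{thm:info}, there is no real obstacle here; the result is essentially immediate from the definition, and I do not anticipate a hard step.
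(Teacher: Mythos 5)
Your proof is correct. Note that the paper itself does not prove this corollary in-text: \Cref{sec:corollaries} defers the proofs of \Cref{thm:info} and its corollaries to the earlier work \citep{Duersch2020}, so there is no in-paper argument to diverge from. Your primary route---substituting the explicit integral form from \Cref{thm:info} and using $\log_2(\pQ_0/\pQ_1) = -\log_2(\pQ_1/\pQ_0)$ pointwise---is the essentially immediate computation one would expect that reference to contain, and your alternative derivation, setting $\pQ_2 = \pQ_0$ in \Cref{cor:additive} and invoking the third postulate to annihilate $\info{\pR(\vZ)}{\pQ_0(\vZ)}{\pQ_0(\vZ)}$, is a legitimate and somewhat more structural argument, with the caveat that \Cref{cor:additive} is itself established from the explicit form, so the axiomatic route is not logically independent of it. Your closing remark on integrability of the log-ratio against $\pR$ is an appropriate and correctly stated precaution.
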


\begin{corollary}
\label{cor:properutility} \thmTitle{Information is a proper utility function.}
Taking the rational view $\pP(\vZ \mid \vX)$ over the latent variable $\vZ$ conditioned upon an experimental outcome $\vX$,
the information $\info{\pP(\vZ \mid \vX)}{\pQ(\vZ)}{\pP(\vZ)}$ from prior belief $\pP(\vZ)$ to reported belief $\pQ(\vZ)$ is a proper utility function.
That is, the unique optimizer recovers rational belief
\submitBlank
\begin{displaymath}
\pQ^*(\vZ) = \argmax_{\pQ(\vZ)}\, \info{\pP(\vZ \mid \vX)}{\pQ(\vZ)}{\pP(\vZ)} = \pP(\vZ \mid \vX).
\end{displaymath}
\end{corollary}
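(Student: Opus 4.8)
The plan is to reduce the optimization to the nonnegativity of a Kullback--Leibler divergence, a property the paper already secures through its fourth postulate. First I would substitute the definition from \Cref{thm:info} into the objective, writing
\begin{align*}
\info{\pP(\vZ \mid \vX)}{\pQ(\vZ)}{\pP(\vZ)} = \integ{d\vZ} \pP(\vZ \mid \vX) \log_2\!\left( \frac{\pQ(\vZ)}{\pP(\vZ)} \right),
\end{align*}
where the admissible class consists of normalized distributions $\pQ(\vZ)$. Because the prior $\pP(\vZ)$ in the denominator does not depend on $\pQ(\vZ)$, maximizing this expression is equivalent to maximizing the cross-entropy term $\integ{d\vZ} \pP(\vZ \mid \vX) \log_2 \pQ(\vZ)$.

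Next I would compare an arbitrary candidate against the proposed optimizer by forming the difference of objective values at $\pQ(\vZ) = \pP(\vZ \mid \vX)$ and at a general $\pQ(\vZ)$. The common prior term cancels, leaving exactly a divergence,
\begin{align*}
& \info{\pP(\vZ \mid \vX)}{\pP(\vZ \mid \vX)}{\pP(\vZ)} - \info{\pP(\vZ \mid \vX)}{\pQ(\vZ)}{\pP(\vZ)} \\
&= \KL{\pP(\vZ \mid \vX)}{\pQ(\vZ)}.
\end{align*}
The fourth postulate guarantees that information gained from a normalized prior to the viewing posterior is nonnegative, so this divergence is nonnegative and $\pP(\vZ \mid \vX)$ attains the maximum.

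For uniqueness I would invoke the strict form of the same inequality: the divergence vanishes only when $\pQ(\vZ) = \pP(\vZ \mid \vX)$ almost everywhere, which follows from strict concavity of $\log_2$ (Gibbs' inequality together with its equality conditions). If a constructive derivation is preferred, an equivalent route is a Lagrange-multiplier calculation that maximizes $\integ{d\vZ} \pP(\vZ \mid \vX) \log_2 \pQ(\vZ)$ subject to $\integ{d\vZ} \pQ(\vZ) = 1$; the stationarity condition forces $\pQ(\vZ) \propto \pP(\vZ \mid \vX)$, normalization fixes the constant to one, and concavity of the integrand certifies a global maximizer.

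The hard part will not be existence of the optimizer, which falls out immediately from the divergence identity, but rather making the uniqueness claim fully rigorous. This requires either the equality conditions in Gibbs' inequality or a direct strict-concavity argument, and it demands care in specifying the admissible class so that the divergence is finite---in particular restricting to normalized $\pQ(\vZ)$ that are absolutely continuous with respect to $\pP(\vZ \mid \vX)$---so that the maximization is well posed rather than vacuous.
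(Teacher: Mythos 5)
Your argument is correct and matches the approach this paper uses for its analogous optimality results: the paper defers the proof of \Cref{cor:properutility} to the cited prior work, but its proofs of \Cref{cor:model_inference,cor:hyper_inference} follow exactly your reduction, rewriting the objective so that the only $\pQ$-dependent part is a negated Kullback--Leibler divergence that vanishes if and only if the candidate equals the posterior. Your closing caveats about equality conditions and absolute continuity are the right ones to make the uniqueness claim rigorous, and nothing further is needed.
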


\begin{corollary}
\label{cor:perturbations}
\thmTitle{Proper perturbation response.}
Let $\pQ_1(\vZ)$ be measurably distinct from the view $\pR(\vZ)$ and $\info{\pR(\vZ)}{\pQ_1(\vZ)}{\pQ_0(\vZ)}$ be finite.
Let the perturbation $\pEta(\vZ)$ preserve normalization and drive belief toward $\pR(\vZ)$ on all measurable subsets.
It follows
\submitBlank
\begin{displaymath}
\lim_{\eps\to0} \frac{\partial}{\partial \eps} \info{\pR(\vZ)}{\pQ_1(\vZ) + \eps \pEta(\vZ)}{\pQ_0(\vZ)} > 0.
\end{displaymath}
\end{corollary}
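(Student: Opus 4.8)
The plan is to differentiate the information functional in $\eps$, interchange the derivative with the integral, and then collapse the result using the two structural hypotheses on the perturbation $\pEta(\vZ)$. Writing the information explicitly through \Cref{thm:info},
\begin{align*}
\info{\pR(\vZ)}{\pQ_1(\vZ) + \eps\,\pEta(\vZ)}{\pQ_0(\vZ)}
= \integ{d\vZ}\, \pR(\vZ)\, \log_2\!\left( \frac{\pQ_1(\vZ) + \eps\,\pEta(\vZ)}{\pQ_0(\vZ)} \right),
\end{align*}
I would differentiate under the integral sign and evaluate the limit at $\eps = 0$ to obtain
\begin{align*}
\lim_{\eps\to0} \frac{\partial}{\partial \eps}\, \info{\pR(\vZ)}{\pQ_1(\vZ) + \eps\,\pEta(\vZ)}{\pQ_0(\vZ)}
= \frac{1}{\ln 2} \integ{d\vZ}\, \frac{\pR(\vZ)}{\pQ_1(\vZ)}\, \pEta(\vZ).
\end{align*}
Finiteness of $\info{\pR(\vZ)}{\pQ_1(\vZ)}{\pQ_0(\vZ)}$ forces $\pQ_1(\vZ) > 0$ wherever $\pR(\vZ) > 0$---otherwise the logarithm would drive the information to $-\infty$---so the integrand is well defined where $\pR(\vZ) > 0$ and the interchange is justified by dominated convergence.

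Next, assuming $\pQ_1(\vZ)$ has full support (a belief state reserves nonzero probability for every plausible outcome), I would use normalization preservation, $\integ{d\vZ}\,\pEta(\vZ) = 0$, to subtract a vanishing multiple of $\pEta(\vZ)$ and recast the derivative in terms of the discrepancy between $\pR(\vZ)$ and $\pQ_1(\vZ)$,
\begin{align*}
\frac{1}{\ln 2} \integ{d\vZ}\, \frac{\pR(\vZ)}{\pQ_1(\vZ)}\, \pEta(\vZ)
= \frac{1}{\ln 2} \integ{d\vZ}\, \frac{\pR(\vZ) - \pQ_1(\vZ)}{\pQ_1(\vZ)}\, \pEta(\vZ).
\end{align*}
The remaining task is to certify that this integral is strictly positive. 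For that I would translate the hypothesis that $\pEta(\vZ)$ drives belief toward $\pR(\vZ)$ on every measurable subset into the pointwise statement $\bigl(\pR(\vZ) - \pQ_1(\vZ)\bigr)\,\pEta(\vZ) \ge 0$ almost everywhere: testing the directional improvement on subsets where $\pR(\vZ) > \pQ_1(\vZ)$ forces $\pEta(\vZ) \ge 0$ there, while testing on subsets where $\pR(\vZ) < \pQ_1(\vZ)$ forces $\pEta(\vZ) \le 0$ there. Since $\pQ_1(\vZ) > 0$, the integrand $\frac{(\pR(\vZ) - \pQ_1(\vZ))\,\pEta(\vZ)}{\pQ_1(\vZ)}$ is nonnegative almost everywhere; and because $\pQ_1(\vZ)$ is measurably distinct from $\pR(\vZ)$, the perturbation must move mass on a set where $\pR(\vZ) \neq \pQ_1(\vZ)$, rendering the integrand strictly positive on a set of positive measure. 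The integral, and hence the limiting derivative, is therefore strictly positive.

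I expect the main obstacle to be the precise formalization of the qualitative hypothesis ``preserve normalization and drive belief toward $\pR(\vZ)$ on all measurable subsets,'' since the entire argument hinges on extracting from it the usable pair of facts $\integ{d\vZ}\,\pEta(\vZ) = 0$ together with the almost-everywhere sign agreement $\bigl(\pR(\vZ) - \pQ_1(\vZ)\bigr)\pEta(\vZ) \ge 0$. Once that translation is secured, the normalization identity and the pointwise nonnegativity of $\frac{(\pR(\vZ) - \pQ_1(\vZ))\,\pEta(\vZ)}{\pQ_1(\vZ)}$ make the strict positivity nearly immediate. A secondary technical point is justifying the interchange of the limit and derivative with the integral, which is controlled by the finiteness assumption; this same assumption rules out division by zero where $\pR(\vZ) > 0$.
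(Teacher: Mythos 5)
The paper does not actually print a proof of this corollary; Appendix~A states that the proofs of \Cref{thm:info} and the listed corollaries, including \Cref{cor:perturbations}, are given in the cited prior work \citep{Duersch2020}, so there is no in-paper argument to compare against line by line. That said, your first-variation computation is the natural route and matches the style the paper itself uses for \Cref{cor:solomonoff}: the derivative at $\eps=0$ is $\tfrac{1}{\ln 2}\integ{d\vZ}\,\pR(\vZ)\pEta(\vZ)/\pQ_1(\vZ)$, and subtracting $\tfrac{1}{\ln 2}\integ{d\vZ}\,\pEta(\vZ)=0$ converts it to $\tfrac{1}{\ln 2}\integ{d\vZ}\,(\pR(\vZ)-\pQ_1(\vZ))\pEta(\vZ)/\pQ_1(\vZ)$, which is the right quantity to sign. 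Two points deserve a little more than you give them. First, strict positivity needs the hypothesis ``drives belief toward $\pR(\vZ)$'' to be read as \emph{strict} on at least one positive-measure subset where $\pR(\vZ)\neq\pQ_1(\vZ)$ (e.g.\ $\integ{d\vZ}\,\pEta(\vZ)>0$ over $A=\{\pR>\pQ_1\}$, a set of positive measure by measurable distinctness and common normalization); otherwise $\pEta\equiv 0$ satisfies the sign conditions and yields derivative zero. You flag this, and your subset-testing argument does deliver both the a.e.\ sign agreement $(\pR-\pQ_1)\pEta\ge 0$ and, with the strict reading, positivity of the integrand on a set of positive measure, so the conclusion follows. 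Second, your appeal to dominated convergence is a bit quick: finiteness of $\info{\pR(\vZ)}{\pQ_1(\vZ)}{\pQ_0(\vZ)}$ gives $\pQ_1>0$ a.e.\ on the support of $\pR$, but integrability of $\pR\,\pEta/\pQ_1$ (the actual dominating bound needed) is an additional regularity assumption on the perturbation; this is the same level of informality the paper tolerates elsewhere, so it is a caveat rather than a defect.
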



\section{Proofs}
\label{sec:proofs}

\begin{proof}\textbf{of \Cref{cor:length_lowerbound}.}
The primary complication is that each alphabet $\sAlp_{j+1}$ depends on previously realized symbols $(\vS)_{1}^{j}$.
We proceed by induction.
The induction hypothesis regarding a partial sequence $(\vS)_{1}^j$ for $j < n$ is
\submitBlank
\begin{align*}
\expect_{\pP((\vS)_{1}^j)} \left[ \sum_{i=1}^j \log_2\left( |\sAlp_{i}| \right) \right]
\ge \expect_{\pP((\vS)_{1}^j)} \left[ \sum_{i=1}^j \log_2\left( \frac{1}{\pP(\vS_i \mid (\vS)_{1}^{i-1})} \right) \right].
\end{align*}
The base case associated with the first symbol $\vS_1$ easily follows from Jensen's inequality as
\begin{align*}
& \expect_{\pP(\vS_1)} \log_2\left( |\sAlp_1| \right) = \log_2\left( |\sAlp_1| \right) = \log_2\left(\sum_{\vS_1 \in \sAlp_1} 1 \right)\\
& \ge \sum_{\vS_1 \in \sAlp_1} \pP(\vS_1) \log_2\left( \frac{1}{\pP(\vS_1)} \right)
 =  \expect_{\pP(\vS_1)} \log_2\left( \frac{1}{\pP(\vS_1)} \right).
\end{align*}
The induction step is given by applying Jensen's inequality and the induction hypothesis
\submitBlank
\begin{align*}
& \expect_{\pP((\vS)_{1}^{j+1})} \left[ \sum_{i=1}^{j+1} \log_2\left( |\sAlp_i| \right) \right] \\
& = \expect_{\pP((\vS)_{1}^j)} \left[ \sum_{i=1}^{j} \log_2\left( |\sAlp_i| \right) + \expect_{\pP(\vS_{j+1} \mid (\vS)_{1}^j)} \left[ \log_2\left( |\sAlp_{j+1}| \right) \right] \right] \\
& = \expect_{\pP((\vS)_{1}^j)} \left[ \sum_{i=1}^{j} \log_2\left( |\sAlp_i| \right) + \log_2\left( |\sAlp_{j+1}| \right)\right] \\
& \geq \expect_{\pP((\vS)_{1}^j)} \left[ \sum_{i=1}^j \log_2\left( \frac{1}{\pP(\vS_i \mid (\vS)_{1}^{i-1})} \right) + \expect_{\pP(\vS_{j+1} \mid (\vS)_{1}^j)} \left[ \log_2\left( \frac{1}{\pP(\vS_{j+1} \mid (\vS)_{1}^j)} \right) \right] \right] \\
& = \expect_{\pP((\vS)_{1}^{j+1})} \left[ \sum_{i=1}^{j+1} \log_2\left( \frac{1}{\pP(\vS_i \mid (\vS)_{1}^{i-1})} \right) \right].
\end{align*}
The claim follows by noting that $\pP(\vObj) = \pP((\vS)_{1}^{n}) = \prod_{i=1}^n \pP(\vS_i \mid (\vS)_{1}^{i-1})$. 
\end{proof}

\begin{proof}\textbf{of \Cref{cor:probability_from_length}.}
If the encoding is consistent with object probability, and the encoding also maximizes entropy, then
\submitBlank
\begin{align*}
&\pP(\vObj) = \prod_{i=1}^n \pP(\vS_i \mid (\vS)_{1}^{i-1}) = \prod_{i=1}^n \frac{1}{|\sAlp_i|} = 2^{-\fLen(\vObj)}.
\end{align*}
\end{proof}

\begin{proof}\textbf{of \Cref{cor:probability_from_length}, recursive consistency alternative.}
We also obtain the same prior by combining a counting argument with consistent prior belief in length descriptions.
In order to distinguish a complete short sequence from merely a subsequence of a longer description, we need some indication of the complete sequence length.
The following argument holds for descriptions $\vPsi$ that are capable of being partitioned into a component $\vPsi_{\vL}$ that determines the sequence length 
and the unconstrained complement $\vPsi_{\vC}$ so that $\vPsi = (\vPsi_{\vL}, \vPsi_{\vC})$ and $\fLen(\vPsi) = \fLen(\vPsi_{\vL}) + \fLen(\vPsi_{\vC})$.
Once $\vPsi_{\vL}$ is known, we can easily identify $\vPsi_{\vC}$, regardless of its content.
Yet, the same problem arises with knowing when we have a complete length description $\vPsi_{\vL}$, which can be resolved with recursive partitions $\vPsi_{\vL} = (\vPsi_{\vL\vL}, \vPsi_{\vL\vC})$ and so on.
For the description to be finite, this recursion must end implicitly with only one possible outcome $\vPsi_{\vL\ldots\vL\vL}=\emptyset$.

We define $\fLen(\vPsi_{\vC})$ so that the complement allows for $2^{\fLen(\vPsi_{\vC})}$ outcomes.
Binary sequences provide useful intuition for this construction. 
If we believe all descriptions of a given length have the same prior probability, then
\submitBlank
\begin{align*}
1 = \integ{d\vPsi_{\vC}} \pP(\vPsi_{\vC} \mid \fLen(\vPsi)) = 2^{\fLen(\vPsi_{\vC})}  \pP(\vPsi_{\vC} \mid \fLen(\vPsi))
\quad\text{so that}\quad
 \pP(\vPsi_{\vC} \mid \fLen(\vPsi)) = 2^{\fLen(\vPsi) - \fLen(\vPsi_{\vL})}.
\end{align*}
Thus, $\pP(\vPsi) =  2^{\fLen(\vPsi) - \fLen(\vPsi_{\vL})}\pP(\vPsi_{\vL})$.
But if the length description satisfies a consistent prior to determine our belief in the length of the sequence,
we must also have $\pP(\vPsi_{\vL}) =  2^{\fLen(\vPsi_{\vL}) - \fLen(\vPsi_{\vL\vL})}\pP(\vPsi_{\vL\vL})$.
This gives
\submitBlank
\begin{align*}
\pP(\vPsi) = 2^{\fLen(\vPsi)-\fLen(\vPsi_{\vL\vL})} \pP(\vPsi_{\vL\vL}).
\end{align*}
Since the recursion ends with only one outcome, we have $\fLen(\vPsi_{\vL\ldots\vL\vL}) = 0$ and $\pP(\vPsi_{\vL\ldots\vL\vL}) = 1$.
Thus, $\fLen(\vPsi) = \fLen(\vPsi_{\vC}) + \fLen(\vPsi_{\vL\vC}) + \cdots + \fLen(\vPsi_{\vL\ldots\vL\vC})$ and $\pP(\vPsi) = 2^{-\fLen(\vPsi)}$.
\end{proof}

\begin{proof}\textbf{of \Cref{cor:solomonoff}.}
Let $\pQsObj$ be the optimizer.
We express arbitrary infinitesimal belief perturbations in the vicinity of the optimizer as $\pQObj = \pQsObj + \eps \pEta(\vObj)$
where $\eps$ is a scalar differential element and $\pEta(\vObj)$ is an arbitrary perturbation, so long as $\pQsObj > 0$, so that 
\submitBlank
\begin{align*}
\sum_{\vObj \in \sSubObj} \pQObj = \sum_{\vObj \in \sSubObj} \pQsObj = 1
\quad\text{and}\quad
\sum_{\vObj \in \sSubObj} \pEta(\vObj) = 0. 
\end{align*}
The information gained from prior belief to $\pQObj$ is
\submitBlank
\begin{align*}
\KL{\pQObj}{\pP(\vObj)} = \sum_{\vObj \in \sSubObj} \pQObj \log_2\!\left( \frac{\pQObj}{2^{-\fLen(\vPsi(\vObj))}} \right).
\end{align*}
Differentiating with respect to $\eps$, evaluating at $\eps=0$, and applying the variational principle yields
\submitBlank
\begin{align*}
& 0 = \left[ \frac{\partial}{\partial \eps} \info{\pQObj}{\pQObj}{\pP(\vObj)} \right]_{\eps = 0}
= \sum_{\vObj \in \sSubObj} \pEta(\vObj) \left( \log_2\!\left( \frac{\pQsObj}{2^{-\fLen(\vPsi(\vObj))}} \right) + 1 \right).
\end{align*}
As this must hold for arbitrary $\pEta(\vObj)$, the factor in parenthesis must be constant, provided $\pQsObj>0$.
Solving for $\pQsObj$ shows that the stated distribution is the unique critical point.
It remains to show that this distribution achieves the global minimum.
Applying Jensen's inequality to the information gained from any feasible state gives
\submitBlank
\begin{align*}
& \KL{\pQObj}{\pP(\vObj)} = -\sum_{\vObj \in \sSubObj} \pQObj \log_2\!\left( \frac{2^{-\fLen(\vPsi(\vObj))}}{\pQObj} \right) \\
& \ge -\log_2\!\left( \sum_{\vObj \in \sSubObj} 2^{-\fLen(\vPsi(\vObj))} \right)
= \log_2\!\left( \frac{1}{\pP(\sSubObj)} \right)
= \KL{\pQsObj}{\pP(\vObj)}.
\end{align*}
\end{proof}

\begin{proof}\textbf{of \Cref{thm:parsimony_optimization}.}
The chain rule of conditional dependence, \Cref{cor:chain}, allows us to express the information gained as
\submitBlank
\begin{align*}
& \KL{\pRY \pQThCPsi \pQPsi}{ \pP(\vY \mid \vTh) \pP(\vTh \mid \vPsi) \pP(\vPsi)} \\
& = \info{\pRY \pQThCPsi \pQPsi}{\pRY \pQThCPsi \pQPsi}{\pP(\vY \mid \vTh) \pP(\vTh \mid \vPsi) \pP(\vPsi) } \\
& = \info{\pQPsi}{\pQPsi}{\pP(\vPsi) } + \expect_{\pQPsi} \info{\pQThCPsi}{\pQThCPsi}{\pP(\vTh \mid \vPsi)} \\
& \quad + \expect_{\pQThCPsi\pQPsi} \info{\pRY}{\pRY}{\pP(\vY \mid \vTh)}.
\end{align*}
Combining properties of additivity over belief sequences, \Cref{cor:additive}, with antisymmetry, \Cref{cor:antisymmetry}, we express the argument of expectation in the last term as
\submitBlank
\begin{align*}
&  \info{\pRY}{\pRY}{\pP(\vY \mid \vTh)} \\
& = \info{\pRY}{\pRY}{\pP(\vY \mid \vTh_0)} -  \info{\pRY}{\pP(\vY \mid \vTh)}{\pP(\vY \mid \vTh_0)}
\end{align*}
where $\vTh_0$ is any fixed model that serves as a convenient baseline for measuring predictive information with regard to training labels.
Once the data are observed, the term $\info{\pRY}{\pRY}{\pP(\vY \mid \vTh_0)}$ is a constant, independent of choices $\pQThCPsi\pQPsi$.
Thus we have
\submitBlank
\begin{align*}
\omega 
&= \KL{\pRY}{\pP(\vY \mid \vTh_0)} -  \KL{\pRY \pQThCPsi \pQPsi}{ \pP(\vY \mid \vTh) \pP(\vTh \mid \vPsi) \pP(\vPsi)} \\
& = \expect_{\pQThCPsi\pQPsi} \info{\pRY}{\pP(\vY \mid \vTh)}{\pP(\vY \mid \vTh_0)} 
- \expect_{\pQPsi} \KL{\pQThCPsi}{\pP(\vTh \mid \vPsi)} \\
& \quad - \KL{\pQPsi}{\pP(\vPsi)}.
\end{align*}
We apply \Cref{cor:probability_from_length} as a prior over descriptions and unpack the KL divergence to obtain
\submitBlank
\begin{align*}
&-\KL{\pQPsi}{\pP(\vPsi)} =  \int\!\!\,d\vPsi\,\pQPsi \log_2\!\left(\frac{2^{-\fLen(\vPsi)}}{\pQPsi}\right)
= \entropy{\pQPsi} - \expect_{\pQPsi} \fLen(\vPsi). 
\end{align*}
\end{proof}

\begin{proof}\textbf{of \Cref{cor:model_inference}.}
We unpack definitions, combine both terms, and apply Bayes' theorem to obtain
\submitBlank
\begin{align*}
& \expect_{\pQThCPsi} \info{\pRY}{\pP(\vY \mid \vTh)}{\pP(\vY \mid \vTh_0)} - \KL{\pQThCPsi}{\pP(\vTh \mid \vPsi)} \\
& =  \integ{d\vTh} \pQThCPsi \log_2\!\left( \frac{\pP(\vYr \mid \vTh)\pP(\vTh \mid \vPsi)}{\pP(\vYr \mid \vTh_0)\pQThCPsi} \right) \\
& =  \integ{d\vTh} \pQThCPsi \log_2\!\left( \frac{\pP(\vTh \mid \vYr, \vPsi)\pP(\vYr \mid \vPsi)}{\pP(\vYr \mid \vTh_0)\pQThCPsi} \right) \\
& =  \log_2\!\left( \frac{\pP(\vYr \mid \vPsi)}{\pP(\vYr \mid \vTh_0)} \right) - \KL{\pQThCPsi}{\pP(\vTh \mid \vYr, \vPsi)}.
\end{align*}
The second term is the negative Kullback-Leibler divergence, i.e.~nonpositive.
Therefore the objective is maximized when the second term vanishes, which occurs if and only if $\pQsThCPsi = \pP(\vTh \mid \vYr, \vPsi)$.
\end{proof}

\begin{proof}\textbf{of \Cref{cor:hyper_inference}.}
As in \Cref{cor:model_inference}, we unpack definitions, combine both terms, and apply Bayes' theorem to obtain
\submitBlank
\begin{align*}
& \expect_{\pQPsi} \log_2\!\left( \frac{\pP(\vYr \mid \vPsi)}{\pP(\vYr \mid \vTh_0)} \right) - \KL{\pQPsi}{\pP(\vPsi)}
=\sum_{\vPsi}  \pQPsi \log_2\!\left( \frac{\pP(\vYr \mid \vPsi) \pP(\vPsi)}{\pP(\vYr \mid \vTh_0) \pQPsi} \right) \\
&=\sum_{\vPsi}  \pQPsi \log_2\!\left( \frac{\pP(\vPsi \mid \vYr) \pP(\vYr)}{\pP(\vYr \mid \vTh_0) \pQPsi} \right)
=\log_2\!\left( \frac{\pP(\vYr)}{\pP(\vYr \mid \vTh_0)} \right) - \KL{\pQPsi}{\pP(\vPsi \mid \vYr)}.
\end{align*}
The objective is maximized if and only if the second term vanishes, thus $\pQsPsi = \pP(\vPsi \mid \vYr)$.
\end{proof}

\begin{proof}\textbf{of \Cref{cor:memorization}.}
Applying Jensen's inequality to expected prediction information from the optimizer gives 
\submitBlank
\begin{align*}
& \info{\pRY}{\pQsY}{\pP(\vY \mid \vTh_0)} - \vChi[\pQsThPsi] \\
& \geq \expect_{\pQsThPsi} \info{\pRY}{\pP(\vY \mid \vTh)}{\pP(\vY \mid \vTh_0)} - \vChi[\pQsThPsi] \\
& \geq \expect_{\pQThPsi} \info{\pRY}{\pP(\vY \mid \vTh)}{\pP(\vY \mid \vTh_0)} - \vChi[\pQThPsi].
\end{align*}
Since $\info{\pRY}{\pQsY}{\pP(\vY \mid \vTh_0)} = \expect_{\pQThPsi} \info{\pRY}{\pQsY}{\pP(\vY \mid \vTh_0)}$,
we can apply antisymmetry and additivity within the expectations to arrive at the stated result.
\end{proof}


\bibliography{main_arxiv}

\begin{thebibliography}{45}
\providecommand{\natexlab}[1]{#1}
\providecommand{\url}[1]{\texttt{#1}}
\expandafter\ifx\csname urlstyle\endcsname\relax
  \providecommand{\doi}[1]{doi: #1}\else
  \providecommand{\doi}{doi: \begingroup \urlstyle{rm}\Url}\fi

\bibitem[Akaike(1974)]{Akaike1974}
H.~Akaike.
\newblock A new look at the statistical model identification.
\newblock \emph{{IEEE} Transactions on Automatic Control}, 19\penalty0
  (6):\penalty0 716--723, dec 1974.
\newblock \doi{10.1109/tac.1974.1100705}.
\newblock URL \url{https://doi.org/10.1109/tac.1974.1100705}.

\bibitem[Allen(1974)]{Allen1974}
D.~M. Allen.
\newblock The relationship between variable selection and data agumentation and
  a method for prediction.
\newblock \emph{technometrics}, 16\penalty0 (1):\penalty0 125--127, 1974.

\bibitem[Bernardo(1979)]{Bernardo1979}
J.~M. Bernardo.
\newblock Expected information as expected utility.
\newblock \emph{The Annals of Statistics}, pages 686--690, 1979.

\bibitem[Boulton and Wallace(1975)]{Boulton1975}
D.~Boulton and C.~S. Wallace.
\newblock An information measure for single link classification.
\newblock \emph{The Computer Journal}, 18\penalty0 (3):\penalty0 236--238,
  1975.

\bibitem[Brandt(1985)]{Brandt1985}
R.~B. Brandt.
\newblock The concept of rational belief.
\newblock \emph{The Monist}, 68\penalty0 (1):\penalty0 3--23, 1985.

\bibitem[Cox(1946)]{Cox1946}
R.~T. Cox.
\newblock Probability, frequency and reasonable expectation.
\newblock \emph{American journal of physics}, 14\penalty0 (1):\penalty0 1--13,
  1946.

\bibitem[De~Finetti(1937)]{DeFinetti1937}
B.~De~Finetti.
\newblock La pr{\'e}vision: ses lois logiques, ses sources subjectives.
\newblock In \emph{Annales de l'institut Henri Poincar{\'e}}, volume~7, pages
  1--68, 1937.

\bibitem[Duersch and Catanach(2020)]{Duersch2020}
J.~A. Duersch and T.~A. Catanach.
\newblock Generalizing information to the evolution of rational belief.
\newblock \emph{Entropy}, 22\penalty0 (1):\penalty0 108, 2020.

\bibitem[Duersch and Gu(2020)]{Duersch2020b}
J.~A. Duersch and M.~Gu.
\newblock Randomized projection for rank-revealing matrix factorizations and
  low-rank approximations.
\newblock \emph{SIAM Review}, 62\penalty0 (3):\penalty0 661--682, 2020.

\bibitem[Elias(1975)]{Elias1975}
P.~Elias.
\newblock Universal codeword sets and representations of the integers.
\newblock \emph{IEEE transactions on information theory}, 21\penalty0
  (2):\penalty0 194--203, 1975.

\bibitem[Evans(1969)]{Evans1969}
R.~A. Evans.
\newblock The principle of minimum information.
\newblock \emph{IEEE Transactions on Reliability}, 18\penalty0 (3):\penalty0
  87--90, 1969.

\bibitem[Filan et~al.(2016)Filan, Leike, and Hutter]{Filan2016}
D.~Filan, J.~Leike, and M.~Hutter.
\newblock Loss bounds and time complexity for speed priors.
\newblock In \emph{Artificial Intelligence and Statistics}, pages 1394--1402,
  2016.

\bibitem[Good(1963)]{Good1963}
I.~J. Good.
\newblock Maximum entropy for hypothesis formulation, especially for
  multidimensional contingency tables.
\newblock \emph{The Annals of Mathematical Statistics}, 34\penalty0
  (3):\penalty0 911--934, 1963.

\bibitem[Gr{\"u}nwald(2007)]{Grunwald2007}
P.~D. Gr{\"u}nwald.
\newblock \emph{The minimum description length principle}.
\newblock MIT press, 2007.

\bibitem[Hastie et~al.(2009)Hastie, Tibshirani, and Friedman]{Hastie2009}
T.~Hastie, R.~Tibshirani, and J.~Friedman.
\newblock \emph{The elements of statistical learning: data mining, inference,
  and prediction}.
\newblock Springer Science \& Business Media, 2009.

\bibitem[Huffman(1952)]{Huffman1952}
D.~A. Huffman.
\newblock A method for the construction of minimum-redundancy codes.
\newblock \emph{Proceedings of the IRE}, 40\penalty0 (9):\penalty0 1098--1101,
  1952.

\bibitem[Hutter(2007)]{Hutter2007}
M.~Hutter.
\newblock On universal prediction and bayesian confirmation.
\newblock \emph{Theoretical Computer Science}, 384\penalty0 (1):\penalty0
  33--48, 2007.

\bibitem[Jaynes(1957)]{Jaynes1957}
E.~T. Jaynes.
\newblock Information theory and statistical mechanics.
\newblock \emph{Physical review}, 106\penalty0 (4):\penalty0 620, 1957.

\bibitem[Jeffreys(1946)]{Jeffreys1946}
H.~Jeffreys.
\newblock An invariant form for the prior probability in estimation problems.
\newblock \emph{Proceedings of the Royal Society of London. Series A.
  Mathematical and Physical Sciences}, 186\penalty0 (1007):\penalty0 453--461,
  1946.

\bibitem[Jeffreys(1998)]{Jeffreys1998}
H.~Jeffreys.
\newblock \emph{The theory of probability}.
\newblock OUP Oxford, 1998.

\bibitem[Kass and Wasserman(1996)]{Kass1996}
R.~E. Kass and L.~Wasserman.
\newblock Formal rules for selecting prior distributions: A review and
  annotated bibliography.
\newblock \emph{Journal of the American Statistical Association}, 435:\penalty0
  1343--1370, 1996.

\bibitem[Kolmogorov(1965)]{Kolmogorov1965}
A.~N. Kolmogorov.
\newblock Three approaches to the quantitative definition of information.
\newblock \emph{Problems of information transmission}, 1\penalty0 (1):\penalty0
  1--7, 1965.

\bibitem[Kullback and Leibler(1951)]{Kullback1951}
S.~Kullback and R.~A. Leibler.
\newblock On information and sufficiency.
\newblock \emph{The Annals of Mathematical Statistics}, 22\penalty0
  (1):\penalty0 79--86, Mar. 1951.
\newblock \doi{10.1214/aoms/1177729694}.

\bibitem[Lempel and Ziv(1976)]{Lempel1976}
A.~Lempel and J.~Ziv.
\newblock On the complexity of finite sequences.
\newblock \emph{IEEE Transactions on information theory}, 22\penalty0
  (1):\penalty0 75--81, 1976.

\bibitem[MacKay(1992)]{Mackay1992a}
D.~J. MacKay.
\newblock \emph{Bayesian methods for adaptive models}.
\newblock PhD thesis, California Institute of Technology, 1992.

\bibitem[MacKay(1995)]{Mackay1995}
D.~J. MacKay.
\newblock Probable networks and plausible predictions—a review of practical
  bayesian methods for supervised neural networks.
\newblock \emph{Network: computation in neural systems}, 6\penalty0
  (3):\penalty0 469--505, 1995.

\bibitem[Neal(2012)]{Neal2012}
R.~M. Neal.
\newblock \emph{Bayesian learning for neural networks}, volume 118.
\newblock Springer Science \& Business Media, 2012.

\bibitem[Owhadi et~al.(2015)Owhadi, Scovel, and Sullivan]{Owhadi2015}
H.~Owhadi, C.~Scovel, and T.~Sullivan.
\newblock On the brittleness of bayesian inference.
\newblock \emph{{SIAM} Review}, 57\penalty0 (4):\penalty0 566--582, Jan. 2015.
\newblock \doi{10.1137/130938633}.
\newblock URL \url{https://doi.org/10.1137/130938633}.

\bibitem[Potapov et~al.(2012)Potapov, Svitenkov, and Vinogradov]{Potapov2012}
A.~Potapov, A.~Svitenkov, and Y.~Vinogradov.
\newblock Differences between kolmogorov complexity and solomonoff probability:
  consequences for agi.
\newblock In \emph{International Conference on Artificial General
  Intelligence}, pages 252--261. Springer, 2012.

\bibitem[Ramsey(2016)]{Ramsey2016}
F.~P. Ramsey.
\newblock Truth and probability.
\newblock In \emph{Readings in Formal Epistemology}, pages 21--45. Springer,
  2016.

\bibitem[Rissanen(1983)]{Rissanen1983}
J.~J. Rissanen.
\newblock A universal prior for integers and estimation by minimum description
  length.
\newblock \emph{The Annals of statistics}, pages 416--431, 1983.

\bibitem[Rissanen(1984)]{Rissanen1984}
J.~J. Rissanen.
\newblock Universal coding, information, prediction, and estimation.
\newblock \emph{IEEE Transactions on Information theory}, 30\penalty0
  (4):\penalty0 629--636, 1984.

\bibitem[Robbins and Monro(1951)]{Robbins1951}
H.~Robbins and S.~Monro.
\newblock A stochastic approximation method.
\newblock \emph{The annals of mathematical statistics}, pages 400--407, 1951.

\bibitem[Schmidhuber(2002)]{Schmidhuber2002}
J.~Schmidhuber.
\newblock The speed prior: a new simplicity measure yielding near-optimal
  computable predictions.
\newblock In \emph{International conference on computational learning theory},
  pages 216--228. Springer, 2002.

\bibitem[Schwarz(1978)]{Schwarz1978}
G.~Schwarz.
\newblock Estimating the dimension of a model.
\newblock \emph{The Annals of Statistics}, 6\penalty0 (2):\penalty0 461--464,
  Mar. 1978.
\newblock \doi{10.1214/aos/1176344136}.
\newblock URL \url{https://doi.org/10.1214/aos/1176344136}.

\bibitem[Shannon(1948)]{Shannon1948}
C.~E. Shannon.
\newblock A mathematical theory of communication.
\newblock \emph{Bell System Technical Journal}, 27\penalty0 (3):\penalty0
  379--423, July 1948.
\newblock \doi{10.1002/j.1538-7305.1948.tb01338.x}.

\bibitem[Solomonoff(1960)]{Solomonoff1960}
R.~J. Solomonoff.
\newblock A preliminary report on a general theory of inductive inference.
\newblock United States Air Force, Office of Scientific Research, 1960.

\bibitem[Solomonoff(1964{\natexlab{a}})]{Solomonoff1964a}
R.~J. Solomonoff.
\newblock A formal theory of inductive inference. part i.
\newblock \emph{Information and control}, 7\penalty0 (1):\penalty0 1--22,
  1964{\natexlab{a}}.

\bibitem[Solomonoff(1964{\natexlab{b}})]{Solomonoff1964b}
R.~J. Solomonoff.
\newblock A formal theory of inductive inference. part ii.
\newblock \emph{Information and control}, 7\penalty0 (2):\penalty0 224--254,
  1964{\natexlab{b}}.

\bibitem[Solomonoff(2009)]{Solomonoff2009}
R.~J. Solomonoff.
\newblock Algorithmic probability: Theory and applications.
\newblock In \emph{Information theory and statistical learning}, pages 1--23.
  Springer, 2009.

\bibitem[Speidel(2008)]{Speidel2008}
U.~Speidel.
\newblock On the bounds of the titchener t-complexity.
\newblock In \emph{2008 6th International Symposium on Communication Systems,
  Networks and Digital Signal Processing}, pages 321--325. IEEE, 2008.

\bibitem[Titchener(1998)]{Titchener1998}
M.~Titchener.
\newblock Deterministic computation of string complexity, information and
  entropy.
\newblock In \emph{Inter. Symp. On Inform. Theory, Aug. 16-21, 1998, Boston},
  1998.

\bibitem[Wallace and Boulton(1968)]{Wallace1968}
C.~S. Wallace and D.~M. Boulton.
\newblock An information measure for classification.
\newblock \emph{The Computer Journal}, 11\penalty0 (2):\penalty0 185--194,
  1968.

\bibitem[Wallace and Freeman(1987)]{Wallace1987}
C.~S. Wallace and P.~R. Freeman.
\newblock Estimation and inference by compact coding.
\newblock \emph{Journal of the Royal Statistical Society: Series B
  (Methodological)}, 49\penalty0 (3):\penalty0 240--252, 1987.

\bibitem[Zhang et~al.(2018)Zhang, Butepage, Kjellstrom, and Mandt]{Zhang2018}
C.~Zhang, J.~Butepage, H.~Kjellstrom, and S.~Mandt.
\newblock Advances in variational inference.
\newblock \emph{IEEE transactions on pattern analysis and machine
  intelligence}, 2018.

\end{thebibliography}

\end{document}